\documentclass[journal]{IEEEtran}
\usepackage{graphicx}
\usepackage{amsmath}
\usepackage{amsfonts}
\usepackage{amsthm}
\newtheorem{proposition}{Proposition}
\newtheorem{definition}{Definition}
\newtheorem{lemma}{Lemma}
\usepackage{url}
\usepackage[ruled,vlined]{algorithm2e}
\usepackage[capitalize]{cleveref}
\Crefname{equation}{Eq.}{Eqs.}
\Crefname{figure}{Fig.}{Figs.}
\Crefname{tabular}{Tab.}{Tabs.}
\Crefname{section}{Sect.}{Sects.}
\Crefname{proposition}{Prop.}{Props.}

\usepackage{todonotes}
\usepackage{tikz}
\usetikzlibrary{arrows.meta,positioning,shapes.geometric}
\usetikzlibrary{decorations.pathreplacing,fit,backgrounds}
\usepackage[most]{tcolorbox}
\usepackage{booktabs}
\usepackage{multirow}
\usepackage{placeins}

\newtheorem{theorem}{Theorem}

\begin{document}

\title{Real-Time AI Service Economy: \\ A Framework for Agentic Computing \\Across the Continuum}

\author{Lauri~Lov\'{e}n,~\IEEEmembership{Senior Member,~IEEE},
Alaa Saleh,~\IEEEmembership{Student Member,~IEEE}, 
Reza Farahani,~\IEEEmembership{Member,~IEEE},
Ilir Murturi,~\IEEEmembership{Senior Member,~IEEE},
Sujit Gujar,
Miguel Bordallo López,~\IEEEmembership{Senior Member,~IEEE},
Praveen Kumar Donta,~\IEEEmembership{Senior Member,~IEEE}, and 
Schahram Dustdar,~\IEEEmembership{Fellow,~IEEE}

\thanks{L. Lov\'{e}n (lauri.loven@oulu.fi) and A. Saleh are with the Future Computing Group, University of Oulu, Finland. 
R. Farahani is with the Distributed Systems Group, TU Wien, Vienna, Austria.
I. Murturi (Corresponding Author) is with the Department of Mechatronics, University of Prishtina, Kosova.
S. Gujar is with Machine Learning Laboratory, International Institute of Information Technology (IIITH), Hyderabad, India.
M. B. López is with the Multimodal Sensing Lab, University of Oulu, Finland.
P. K. Donta is with the Department of Computer and Systems Sciences, Stockholm University, Sweden.
S. Dustdar is with ICREA Barcelona, Spain and the Distributed Systems Group, TU Wien, Vienna.
}%

\thanks{Funding: This work was supported by the Research Council of Finland through the 6G Flagship program (grant 318927) and the CO2CREATION Strategic Research Council project (grant 372355), by the EC through HEU NEUROCLIMA project (GA 101137711) as well as the Interreg Aurora ResilientEdge project (Grant
Number: 20373282), the ERDF (project numbers A81568, A91867), and by the Business Finland through the Neural pub/sub research project (diary number 8754/31/2022).
}}

\markboth{IEEE Transactions on Services Computing}%
{Lov\'{e}n \MakeLowercase{\textit{et al.}}: Real-Time AI Service Economy}
\maketitle
\begin{abstract}
Real-time AI services increasingly operate across the device--edge--cloud continuum, where autonomous AI agents generate latency-sensitive workloads, orchestrate multi-stage pipelines, and compete for shared resources under governance constraints. This article shows that the structure of service-dependency graphs, modelled as DAGs whose nodes are compute stages and whose edges encode execution ordering, is a primary determinant of whether decentralised, price-based resource allocation works reliably at scale. When dependency graphs are hierarchical (tree or series--parallel), prices converge to stable equilibria, optimal allocations are computed efficiently, and under appropriate mechanism design (quasilinear utilities, discrete slice items) agents have no incentive to misreport their valuations within each decision epoch; when dependencies are more complex, with cross-cutting ties between pipeline stages, prices oscillate and allocation quality degrades. To bridge this gap, our anchor contribution is a hybrid architecture in which cross-domain integrators encapsulate complex sub-graphs into slices with a simpler interface, carrying a feasibility-and-DSIC guarantee (\cref{prop:encapsulation}) and, as a validated co-headline, a price-stability property of the integrator's price-discovery dynamics. A systematic ablation study across six experiments ($1{,}590$~runs, $10$~seeds each), with a strategic-bidding test of incentive compatibility and a measured agentic workload, confirms that \emph{(i)}~dependency-graph topology is a first-order determinant of price stability and scalability, \emph{(ii)}~in the contended regime the integrator's EMA-smoothed slice posting robustly reduces agent-facing price volatility (median ${\approx}\,89\%$, $83$--$96\%$, robust to perturbations) and mitigates governance-induced volatility, while in slack single-pool regimes this volatility is already low, \emph{(iii)}~governance constraints create quantifiable efficiency--compliance trade-offs that depend jointly on topology and load, and \emph{(iv)}~under truthful bidding the decentralised market matches a centralised value-greedy baseline (welfare-optimal on tree/SP by Edmonds' theorem) and adds modest welfare under contention, an equivalence DSIC mechanisms sustain cross-domain via an integrator whose quotient graph is tree or series--parallel. Systems whose service pipelines form hierarchical DAGs can thus achieve centralised-quality coordination through decentralised pricing without a single controlling authority, while systems with entangled cross-domain dependencies benefit from integrator-mediated encapsulation.
\end{abstract}
\begin{IEEEkeywords}
Agentic Computing, Service Management, Service Composition, Mechanism Design, Edge--Cloud Continuum, Polymatroid, Service-Dependency Graph, Governance.
\end{IEEEkeywords}

%
\IEEEpeerreviewmaketitle

\section{Introduction}\label{sec:introduction}
AI has moved from a niche capability to a mainstream requirement across industries, embedding real-time model decisions into everyday workflows. These real-time AI services increasingly run across heterogeneous device–edge–cloud environments, where meeting strict latency and quality-of-service (QoS) constraints requires careful allocation and orchestration of compute, storage, bandwidth, and data-processing resources~\cite{shahid2026iot}.
This coordination challenge becomes even harder with emerging applications such as multi-modal inference, collaborative agents, and autonomous cyber-physical systems, which place unprecedented pressure on the management plane to coordinate dynamic workloads under capacity variability, changing network conditions, and policy constraints~\cite{meuser2024edgeai}. 
More recently, the service ecosystem has shifted toward \emph{agentic computing}, where autonomous AI agents generate tasks, procure services, negotiate resources, compose multi-step pipelines, and adapt their behavior based on observed performance, incentives, and environmental feedback~\cite{deng2025agenticservicescomputing,google2025ap2,donta2025agenticai}. Unlike static clients, these agents actively shape system load, so orchestration is no longer purely centralized~\cite{kokkonen2022autonomy} but a distributed process in which agent decisions directly drive resource contention and end-to-end QoS.

Modern distributed AI services form layered supply chains: a real-time urban-monitoring agent may invoke an inference service that depends on edge-processed sensor streams and a cloud-hosted foundation model, each stage consuming compute, storage, and bandwidth at distinct tiers of the continuum~\cite{meuser2024edgeai}. These dependencies form a multi-layer, directed acyclic graph (DAG) that couples tiers across the continuum, complicating end-to-end placement, scheduling, and capacity management; governance requirements (trust, access control, data locality, regulatory compliance) further restrict which service paths are admissible \cite{fornara2008institutions}, so feasible allocations must simultaneously satisfy resource, dependency, network, and policy constraints. Traditional management-plane approaches optimize these decisions centrally, but cross-domain deployments spanning multiple operators and trust boundaries make unified centralized control impractical, motivating economic coordination (pricing, auctions) of agents in a decentralized manner \cite{mcafee1992dominant}---yet neither centralized nor naive market control alone addresses the latency sensitivity, structural dependencies, autonomous agents, and governance constraints of emerging real-time AI ecosystems, where complex dependency graphs introduce complementarities that can destabilize markets or yield inefficient placement. Recent red-teaming studies confirm these risks: Shapira et al.~\cite{shapira2026agents} show that autonomous agents in multi-agent deployments can engage in unauthorized resource consumption and false reporting, failure modes that formal mechanism design aims to deter.

This article develops a unified framework for managing real-time AI services in agentic computing environments. The framework integrates \emph{(i)} latency-aware valuations that reflect the QoS sensitivity of AI tasks, \emph{(ii)} a dependency-aware resource model that captures the structure of multi-layer service compositions, and \emph{(iii)} governance constraints that restrict feasible allocations based on trust, policy, and locality. We synthesize concepts from networked resource management, service function chaining, and economic coordination to characterize conditions for stable and efficient autonomous agent interactions. 
A primary outcome is the identification of structural regimes, specifically tree and series–parallel (SP) service-dependency topologies, under which the feasible allocation set forms a \emph{polymatroid}. This yields a convex feasible region with submodular (diminishing-returns) capacity constraints, which in turn enables efficient polynomial-time optimization. 
Under these conditions, agents' valuations satisfy the \emph{gross-substitutes} (GS) property (services are interchangeable rather than complementary), which guarantees market-clearing prices (\emph{Walrasian equilibria} \cite{gul1999grosssubstitutes}) and makes truthful bidding a dominant strategy; conversely, arbitrary dependency graphs break these properties, as cross-resource complementarities cause price oscillations and render welfare maximization intractable. We therefore propose a \emph{hybrid management architecture} in which cross-domain integrators encapsulate complex service compositions into governance-compliant slices whose agent-facing interfaces are polymatroidal by construction, while local marketplaces coordinate the underlying fungible substrate---preserving tractability at the agent-facing layer while enabling decentralized resource coordination. The theoretical novelty, positioned against classical mechanism-design machinery and the closest contemporaneous parallel work (Amin~\textit{et~al.}~\cite{amin2026market}), is summarised in \cref{tab:novelty}, with an expanded positioning in the supplementary material.

\begin{table}[!t]
\centering
\caption{Theoretical contributions positioned against the classical machinery they build on and against the closest contemporaneous parallel work~\cite{amin2026market}. Each row's \emph{what is new} clause states the delta this article contributes.}
\label{tab:novelty}
\footnotesize
\renewcommand{\arraystretch}{1.25}
\setlength{\tabcolsep}{4pt}
\begin{tabular}{@{}p{2.0cm}p{2.6cm}p{3.0cm}@{}}
\toprule
\textbf{Result} & \textbf{Builds on} & \textbf{What is new} \\
\midrule
Polymatroidal characterisation of service-dependency DAGs (\cref{prop:polymatroid}) & Edmonds~\cite{edmonds1970submodular}; Fujishige~\cite{fujishige2005submodular}; Amin~\textit{et~al.}~\cite{amin2026market} & DAG-to-polymatroid bridge for multi-output, capacity-typed compute service-dependency graphs (Amin~\textit{et~al.}\ obtain the structurally analogous bridge for single-source--single-sink flow networks on transport infrastructure). \\
Per-task-bidder GS reduction (\cref{lem:gs-valuations}) & Gul--Stacchetti~\cite{gul1999grosssubstitutes}; Kelso--Crawford~\cite{kelso1982job} & Lifting additive task-separability into the mechanism's bidder set, so each per-task unit-demand bidder is GS on the polymatroidal supply set; the aggregate agent-level valuation is then OXS~\cite{lehmann2006combinatorial}, a GS subclass. \\
Walrasian \& DSIC under structured DAGs (\cref{prop:mechanism}) & Murota~\cite{murota2003discrete}; VCG~\cite{ausubel2005vickrey}; polymatroid clinching~\cite{goel2015polyhedral,ausubel2004ascending} & Mechanism-design reduction applied to the per-task-bidder problem on a service-DAG-derived polymatroidal supply; the architectural decoupling (\cref{prop:encapsulation}) deploys it across administrative domains without a single controlling authority. \\
Integrator encapsulation \& hybrid architecture (\cref{prop:encapsulation}; \cref{sec:hybrid-architecture}) & Polymatroid contraction theory~\cite{fujishige2005submodular}; max-flow algorithms & Integrator encapsulation as the structural route through non-SP regimes (in contrast to Amin~\textit{et~al.}'s path-based pricing on transport networks), and a three-layer hybrid architecture (integrators / local marketplaces / inter-market coordination) operationalising it. \\
\bottomrule
\end{tabular}
\end{table}

The article's six contributions are: \emph{(i)} a unified framework for real-time AI service management under agentic demand, integrating latency-sensitive valuations, resources, and governance into a single abstraction (\cref{sec:preliminaries}); \emph{(ii)} a service-dependency DAG model capturing how multi-stage AI service pipelines compose across device--edge--cloud layers (\cref{sec:preliminaries}); \emph{(iii)} a formal characterisation of structurally stable regimes (\cref{prop:polymatroid,lem:gs-valuations,prop:mechanism}); \emph{(iv)} a hybrid management architecture (\cref{prop:encapsulation}; \cref{sec:hybrid-architecture}); \emph{(v)} a governance-aware management model with coordinate-wise feasibility restrictions that preserve polymatroidal structure (\cref{sec:preliminaries,sec:mechanism-design}); and \emph{(vi)} a systematic ablation evaluation across six experiments (\cref{sec:evaluation}).

The remainder formalises this framework, characterises structurally stable regimes, and evaluates the design through simulation.

\section{Related Work}
Edge, fog, and cloud computing have been extensively studied as platforms for low-latency, resource-aware service delivery \cite{meuser2024edgeai}. Prior work investigates workload placement~\cite{hu2023intelligent,liu2026data}, mobility-aware orchestration~\cite{tang2025collaborative}, and capacity management~\cite{li2025tiered} within edge-cloud collaborative networks, highlighting the challenges of heterogeneous resources, dynamic network conditions, and real-time QoS requirements~\cite{taleb2025survey}. 

The literature formulates edge-cloud coordination as centralized system-level optimization~\cite{xie2025coedge,shang2024joint}, without addressing decentralized autonomous agent interaction or market-mediated coordination. These contributions establish the foundations for distributed service management, but generally assume a centralized orchestrator with limited consideration of autonomous resource negotiation among service consumers and providers. 
Service function chaining and distributed execution paradigms formalize the inter-dependencies among computational tasks and the implications for placement and performance optimization \cite{li2020service,fan2023drl}. The resulting service graphs couple compute, data, and communication resources, complicating scheduling, latency management, and capacity allocation. Prior work models these dependencies for performance optimization~\cite{oskoui2026distributed}, but does not examine their role in autonomous agent interactions or market-mediated coordination.
Further, some of recent works on autonomous and agentic behaviors in distributed systems for AI agents initiate tasks, adapt preferences, and negotiate resources are studied in \cite{park2023generative,deng2025agenticservicescomputing}. In addition, Sedlak et al.~\cite{sedlak2026service} envision an autonomous service orchestration framework for the computing continuum grounded in Active Inference within a multi-agent coordination context. While these studies identify interaction and coordination challenges, they do not develop a comprehensive analytical framework that systematically integrates agent behavior, service interdependencies, governance mechanisms, and resource allocation across the continuum. Recent LLM-serving systems optimise the \emph{intra-node} substrate our workload model abstracts as per-stage demand---e.g.\ PagedAttention/vLLM manages KV-cache memory for single-engine throughput~\cite{kwon2023vllm}---and agentic benchmarks such as AgentBench evaluate an agent's task \emph{capability}~\cite{liu2024agentbench}; neither addresses the cross-agent, cross-domain resource-market structure (dependency topology, incentive compatibility, governance) that determines whether decentralised coordination is stable, which is our focus.

Governance constraints including privacy, locality, and compliance requirements have been studied in multi-agent systems and distributed computing \cite{fornara2008institutions}. Trust and reputation mechanisms are widely used to evaluate service providers, enforce reliability, and mediate access \cite{sun2019trust,huang2022trust}. These two works show how governance shapes feasible behaviors, but they do not integrate these constraints into the structure of large-scale service allocation or examine their interaction with autonomous agents and dynamic resource markets.
Pricing, auctions, and credit mechanisms have long been explored as tools for resource coordination in distributed systems \cite{weinhardt2009cloud}. Market-based control has been proposed for cloud scheduling, bandwidth allocation, and multi-tenant resource sharing, with emphasis on fairness and efficiency under load. Recent work has advanced auction-based resource coordination in mobile edge computing. For example, repeated auction mechanisms have been proposed for load-aware dynamic resource allocation under multi-tenant demand \cite{habiba2023repeated}, and double-auction pricing schemes have been developed to jointly address allocation and network pricing in MEC systems \cite{zheng2023resource}. These approaches demonstrate the practical viability of economic coordination in distributed edge environments, but typically assume independently traded resources. Concurrent work by Amin et al.~\cite{amin2026market} establishes an SP-tractability and VCG-equivalence boundary for cooperative capacity-sharing on transport networks, using single-source--single-sink flow primitives and path-based pricing; we obtain a structurally analogous boundary for service-dependency DAGs in the compute continuum, with multi-output capacity-typed nodes and integrator encapsulation rather than path-based pricing as the route through non-SP regimes (\cref{sec:encapsulation}). Classical mechanism-design results such as double auctions and equilibrium conditions \cite{mcafee1992dominant} provide valuable insights but typically assume independent resources or substitutable goods. In contrast, real-time AI services rely on interdependent resource bundles whose dependency structure can fundamentally alter stability, efficiency, and incentive properties. 


From the literature, no existing framework simultaneously accounts for latency-sensitive AI agents, service–dependency structures, governance constraints, and autonomous market-mediated interactions across the computing continuum. 

\section{Preliminaries}
\label{sec:preliminaries}
We consider a distributed service-computing environment spanning devices, edge platforms, and cloud infrastructure \cite{meuser2024edgeai}, populated by autonomous AI agents that generate tasks, compose services, and interact economically across the continuum (\cref{fig:model-overview}). The framework rests on standard assumptions from network/service management and mechanism design (enumerated in the supplementary material); \cref{tab:notation} summarises the key notation. Agents condition their valuations on a commonly accepted system state~$s_t$, and we focus on normative allocation and mechanism design given this shared operational state.

\begin{figure}[!t]
    \centering
    \includegraphics[width=0.5\linewidth]{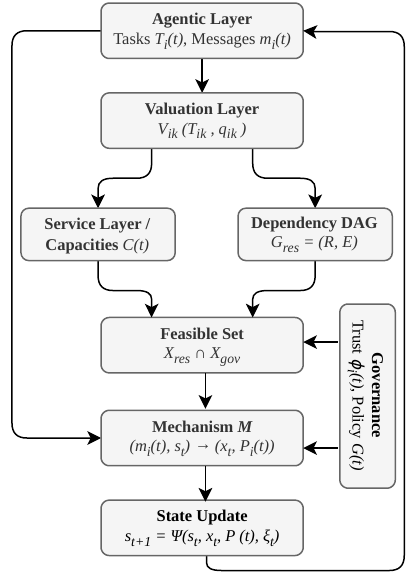}
    \caption{Model overview: agentic layer, latency-aware valuations, resource dependencies, governance constraints, mechanism, and state evolution.}
\label{fig:model-overview}
\end{figure}


\begin{table}[t]
\centering
\caption{Key notation. The complete symbol table is in the supplementary material.}
\label{tab:notation}
\footnotesize
\setlength{\tabcolsep}{4pt}
\begin{tabular}{@{}l p{0.7\columnwidth}@{}}
\toprule
\textbf{Symbol} & \textbf{Meaning} \\
\midrule
$\mathcal{A}$ & set of agents (consumers, providers, integrators) \\
$\theta_i(t)$ & agent $i$'s private type \\
$\eta_k{=}(q_k,d_k^{\max},\lambda_k)$ & task: workload, deadline, latency sensitivity \\
$V_{ik}(T,q)$ & latency-aware value of task $k$ to agent $i$ \\
$G_{\mathrm{res}}{=}(\mathcal{R},E)$ & service-dependency DAG; node capacities $C_v$ \\
$L(G)$ & leaf services consumed directly by agents \\
$\mathcal{X}_t{=}\mathcal{X}_{\mathrm{res}}\!\cap\mathcal{X}_{\mathrm{gov}}$ & feasible allocations (resource $\cap$ governance) \\
$s_t$ & system state \\
$\mathcal{M},\,P_i(t),\,u_i(t)$ & mechanism, net payment, quasilinear utility \\
$\bar{C}_j$ & integrator $j$'s exported slice capacity \\
$\sigma_p$ & price volatility (CV $=$ sd/mean of agent-facing per-task cost) \\
\bottomrule
\end{tabular}
\end{table}

The set of agents $\mathcal{A}$ includes service consumers, providers, integrators, and autonomous AI agents. Each agent $i \in \mathcal{A}$ has a private type $\theta_i(t) = (\ell_i, B_i(t), \mathcal{T}_i(t), \kappa_i, \rho_i, \phi_i(t))$ drawn from a type space $\Theta_i$, encoding location, budget, tasks, capabilities, role, and governance attributes \cite{fornara2008institutions}. Time proceeds in discrete periods $t = 1,2,\dots$.

Each task $k$ issued at time $t$ has parameters $\eta_k(t) = (q_k, d_k^{\max}, \lambda_k)$, where $q_k$ is workload, $d_k^{\max}$ a deadline, and $\lambda_k$ a latency-sensitivity parameter. If task $k$ is allocated to agent $i$, it experiences latency $T_{ik}$ and yields latency-aware value $V_{ik}(T_{ik}, q_k) = v_{ik}(q_k)\,\delta_{ik}(T_{ik})$, where $v_{ik}(q)$ is the base value of completing task $k$ at quality $q$ and $\delta_{ik}$ is an exponential or deadline-based decay function. Within each mechanism epoch, the latency and quality attributes $(T_s, q_s)$ of available slices are treated as \emph{exogenous} parameters determined by the current system state~$s_t$; the mechanism treats them as fixed throughout the allocation round. This ensures that the items over which agents bid do not change endogenously with the allocation or prices within a round.

\subsection{Service-Dependency Model}
Let $\mathcal{R}$ denote the set of service types, where services in $\mathcal{R}$ range from infrastructure primitives (CPU, GPU, bandwidth, storage) through data-processing and inference services to compound agentic capabilities.  The available capacity of service $r$ at location $\ell$ and time $t$ is $C_{r,\ell}(t)$, representing throughput (e.g., requests/s for an inference endpoint, GB/s for a data stream, or cores for raw compute).

Resources exhibit structural dependencies: certain services require bundles of underlying resources. We model this by a DAG $G_{\mathrm{res}} = (\mathcal{R}, E)$, where an edge $E = (r \to r')$ indicates that service $r'$ depends on service $r$ (e.g.\ an inference service depends on a pre-processing pipeline and a model-hosting service, each in turn depending on compute and storage). Node capacities $C_v$ (suppressing location and time within each epoch) are \emph{integral} (request slots, GPU cores, or bandwidth quanta), and allocations are integer-valued. Tree and series--parallel structures arise naturally in hierarchical service compositions and yield polymatroidal feasibility under gross-substitutes valuations~\cite{murota2003discrete,fujishige2005submodular,amin2026market}; arbitrary DAGs introduce complementarities that can undermine tractability in allocation and mechanism design \cite{bikhchandani1997competitive}.

\subsection{System State}

At each time $t$, the system state is $s_t = (C(t), \Lambda(t), D(t), R(t), G(t))$, where $C(t)$ collects resource capacities, $\Lambda(t)$ network latencies, $D(t)$ exogenous demand or load, $R(t)$ agents' trust or reputation states, and $G(t)$ the governance state (distinct from the resource DAG $G_{\mathrm{res}}$) specifying applicable policies, compliance constraints, or institutional rules \cite{fornara2008institutions}. The state evolves according to exogenous events and the realized allocation.

\subsection{Social Welfare and Objective}

At each time $t$, the social welfare generated by an allocation $x_t$ is
\begin{equation}
   W(t) = \sum_{i \in \mathcal{A}} \sum_{k \in \mathcal{T}_i(t)} V_{ik}(T_{ik}, q_k) - \sum_{j \in \mathcal{A}} \text{cost}_j(x_t) - \mu \,\text{Ext}(x_t),
\end{equation}
where $\text{cost}_j(x_t)$ is provider $j$'s operational cost and $\text{Ext}(x_t)$ captures externalities such as energy use or congestion, with trade-off parameter $\mu \ge 0$. Because $W(t)$ depends only on the allocation, payments serve purely as incentive instruments and do not alter social surplus; providers participate only when payments cover $\mathrm{cost}_j$ (individual rationality). The per-epoch mechanism (\cref{sec:mechanism-design}) maximises demand-side valuations $\sum_i v_i$, with costs and externalities as planner-side deployment parameters.

The system operator maximizes the discounted sum of $W(t)$ subject to feasibility (\mbox{$x_t \in \mathcal{X}_t$}, defined below), incentive compatibility, individual rationality, and budget balance. In our simulations, $W(t)$ or its time average is the welfare metric.

\subsection{Governance, Trust, and Feasible Allocations}

For each agent $i$, a reputation score $r_i(t) \in [0,1]$ is updated from verifiable outcomes: $r_i(t+1) = \Phi(r_i(t), o_i(t), \text{logs}(t))$, where $o_i(t)$ records SLA compliance or violations \cite{fornara2008institutions}. Governance feasibility $\mathcal{X}_{\mathrm{gov}}(G(t),R(t))$ imposes \emph{coordinate-wise capacity restrictions}: for each leaf service~$l$, an upper bound $u_l(t) \in [0, C_l]$ determined by trust, locality, or role-based access control (e.g.\ $u_l = 0$ for services routed through a provider whose trust falls below threshold, or in a disallowed jurisdiction). Formally, $\mathcal{X}_{\mathrm{gov}} = \{x \ge 0 : x_l \le u_l(t) \text{ for all } l \in L(G)\}$, where $L(G) \subseteq \mathcal{R}$ denotes the leaf services consumed directly by agents.

In practice, the coordinate-wise upper-bound formulation abstracts multi-stakeholder trust boundaries, data-locality restrictions, and capacity-partitioning policies without modeling institutional processes explicitly.

An allocation $x_t$ specifies resources and execution pathways. Feasibility is $\mathcal{X}_t = \mathcal{X}_{\mathrm{res}}(s_t, G_{\mathrm{res}}) \cap \mathcal{X}_{\mathrm{gov}}(G(t))$, where $\mathcal{X}_{\mathrm{res}}$ enforces capacity and DAG constraints and $\mathcal{X}_{\mathrm{gov}}$ enforces governance via coordinate-wise upper bounds (the polymatroidal structure of $\mathcal{X}_t$ under tree/SP DAGs is established in \cref{sec:structural-regimes}).

\subsection{Messages, Mechanisms, and Utilities}

Agent $i$ sends a message $m_i(t)$ from a message space $\mathcal{M}_i$, reporting bids, asks, task parameters, or policy preferences. We focus on \emph{direct mechanisms} in which $\mathcal{M}_i = \Theta_i$ (each agent reports its type); general message spaces are not needed for the results that follow. A mechanism
\begin{equation}
\mathcal{M} : (s_t, m_1(t),\dots,m_{|\mathcal{A}|}(t)) \mapsto \bigl(x_t, P_1(t),\dots,P_{|\mathcal{A}|}(t)\bigr)
\end{equation}
maps messages and the current state to an allocation and payments, where $P_i(t)$ is the net payment by agent~$i$ (positive when paying, negative when receiving).

The per-period utility of agent $i$ under message profile $m(t) = (m_1(t), \ldots, m_{|\mathcal{A}|}(t))$ and type profile $\theta(t) = (\theta_1(t), \ldots, \theta_{|\mathcal{A}|}(t))$ is \emph{quasilinear} in payments:
\begin{equation}
u_i(m(t); \theta(t)) = \sum_{k \in \mathcal{T}_i(t)} V_{ik}(T_{ik}, q_k) - P_i(t) - \gamma_i \, \text{risk}_i(t),
\end{equation}
where \mbox{$\gamma_i \geq 0$} is a risk-aversion coefficient and $\text{risk}_i(t)$ measures uncertainty in agent~$i$'s own allocation (e.g., variance of experienced latency on the assigned slice); it depends on $x_i$, not on payments or other agents' allocations, so the quasilinear structure $u_i = v_i(x_i, \theta_i) - P_i$ is preserved. For notational convenience, we write $u_i(t)$ when the dependence on messages and types is clear from context. This quasilinear, private-values structure is required for the DSIC, VCG, and clinching-auction results in \cref{sec:mechanism-design}.

\section{Mechanism Design}
\label{sec:mechanism-design}

This section identifies the structural conditions under which efficient, incentive-compatible allocation is achievable.

\subsection{Message Spaces and Direct Mechanisms}

Recall the mechanism $\mathcal{M}$, feasibility set $\mathcal{X}_t$, and message spaces from \cref{sec:preliminaries}. We focus on direct mechanisms in which agent $i$ reports its type, $m_i(t) = \hat{\theta}_i(t) \in \Theta_i$, with strategy $\sigma_i : \Theta_i \to \mathcal{M}_i$ and profile $\sigma = (\sigma_i)_{i\in\mathcal{A}}$.

\subsection{Incentive Properties}

With per-period utility $u_i(t)$ as defined in \cref{sec:preliminaries}, we adopt standard incentive-theoretic desiderata.

\begin{itemize}
    \item \textit{Dominant-strategy incentive compatibility (DSIC).} The mechanism is DSIC if, within each decision epoch, truthful reporting maximizes per-period utility for each agent~$i$ regardless of what other agents report~\cite{ausubel2005vickrey}:
    \begin{multline*}
       u_i(\theta_i, m_{-i};\, \theta_i)
         \;\ge\; u_i(m'_i, m_{-i};\, \theta_i) \\
       \text{for all } m'_i \in \mathcal{M}_i,\;
         m_{-i} \in \mathcal{M}_{-i},\;
         \theta_i \in \Theta_i,\; i \in \mathcal{A},
    \end{multline*}
    where the first argument is the report, the subscript $-i$ collects other agents' reports, and the argument after the semicolon is agent~$i$'s true type. The weaker \emph{Bayes--Nash IC} requires truthfulness only in expectation over other agents' types; the main results (\cref{prop:mechanism}) establish the stronger DSIC.
    \item \textit{Individual rationality (IR).} Under truthful reporting, $u_i(\theta_i, \theta_{-i};\, \theta_i) \ge 0$ for all $\theta_{-i}$ and all participating agents $i$.
    \item \textit{(Weak) budget balance (BB).} The mechanism runs no deficit: $\sum_{i\in\mathcal{A}} P_i(t) \ge 0$ for all $t$.
\end{itemize}

Our goal is to characterize when these classical properties are compatible with efficiency in the presence of latency-aware valuations, dependency-aware resources, and governance constraints.

\subsection{Structural Regimes for Efficient Allocation}
\label{sec:structural-regimes}

The service--dependency DAG $G_{\mathrm{res}}$ and governance constraints induce a feasible allocation set $\mathcal{X}_t$. The structure of $\mathcal{X}_t$ is critical for both optimization and mechanism design. A \emph{polymatroid} is the polytope $\{x \ge 0 : x(S) \le f(S)\; \forall S\}$ for a normalized, monotone, submodular rank function~$f$ (formal definition in the supplementary material; intuitively, a system of capacity constraints with diminishing returns). We formalize the conditions under which classical results from polymatroid optimization and gross-substitutes valuations apply.

\begin{definition}[Service-Feasibility Region]
\label{def:feasibility-region}
Given the service-dependency DAG $G_{\mathrm{res}} = (\mathcal{R}, E)$ with node capacities $C_v$ for each $v \in \mathcal{R}$, let $L(G) \subseteq \mathcal{R}$ denote the \emph{leaf services} consumed directly by agents. For each internal node $v$, let $L_v \subseteq L(G)$ be the set of leaves reachable from~$v$. (For trees, $L_v$ is the leaf set of the subtree rooted at~$v$; for series--parallel networks, $L_v$ is defined via the recursive SP decomposition with designated source and sink, see the supplementary material for details.) Each allocation variable $x_l$ represents a \emph{throughput token}: a unit of demand counted consistently at every ancestor constraint. The service-feasibility region is
\begin{multline*}
\mathcal{X}_{\mathrm{res}}(s_t, G_{\mathrm{res}}) = \\
  \bigl\{x \ge 0 : x(L_v) \le C_v \;\text{for every internal node } v\bigr\},
\end{multline*}
where $x(S) = \sum_{l \in S} x_l$. When $G_{\mathrm{res}}$ is a rooted tree (a tree with a single designated root), the family $\mathcal{L} = \{L_v\}$ is \emph{laminar}: any two members are either nested or disjoint. As shown in \cref{prop:polymatroid}, the same laminar property holds for series--parallel networks. Given a laminar family, the polymatroid rank function is
\begin{equation}
f(S) \;=\; \min_A \sum_{v \in A} C_v,
\end{equation}
minimized over anti-chains $A$ in $\mathcal{L}$ (sets of pairwise-incomparable members) such that $S \subseteq \bigcup_{v \in A} L_v$.
\end{definition}

\begin{proposition}[Polymatroidal Structure under Tree/SP DAGs]
\label{prop:polymatroid}
Let $G_{\mathrm{res}}$ be a rooted tree or two-terminal series--parallel network (under the leaf-block semantics of \cref{def:feasibility-region}) with node capacities $C_v > 0$. Then:
\begin{enumerate}
\item[\emph{(i)}] the capacity function $f$ defined in \cref{def:feasibility-region} is submodular, monotone, and normalized ($f(\emptyset) = 0$);
\item[\emph{(ii)}] $\mathcal{X}_{\mathrm{res}}$ is a polymatroid with rank function $f$; and
\item[\emph{(iii)}] the base polytope $B(f) = \{x \in \mathcal{X}_{\mathrm{res}} : x(L(G)) = f(L(G))\}$ is the set of tight allocations (i.e., allocations that saturate total leaf-level throughput).
\end{enumerate}
\end{proposition}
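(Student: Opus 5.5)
Our plan is to reduce everything to the classical fact that a laminar family of capacity constraints defines a polymatroid (Edmonds~\cite{edmonds1970submodular}; Fujishige~\cite{fujishige2005submodular}). First we establish laminarity of $\mathcal{L}=\{L_v\}$. For a rooted tree this is immediate: two subtrees are either nested (one root is an ancestor of the other) or disjoint. For a two-terminal SP network we induct on the SP decomposition tree. Each series or parallel composition node $v$ is assigned the leaf block $L_v$ of its sub-network. A series or parallel composition of $G_1,G_2$ has leaf block $L_{G_1}\cup L_{G_2}$, where $L_{G_1}$ and $L_{G_2}$ are disjoint because they are edge- and node-disjoint apart from the shared terminals. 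Hence $\mathcal{L}$ is exactly the family of leaf sets of the decomposition tree, and is therefore laminar. We take this as the meaning of the leaf-block semantics referred to in \cref{def:feasibility-region}. We also add the root (whole leaf set) and, if not already present, singletons with capacity $+\infty$, so that the minimum defining $f$ is always over a nonempty set of covering antichains.

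Second, we identify $f$ with the max-throughput function $\hat f(S)=\max\{x(S): x\in\mathcal{X}_{\mathrm{res}}\}$, supported on $S$, for every $S\subseteq L(G)$. We prove this by recursion on the laminar forest. For a node $v$ we have $\hat f_v(S)=\min\{C_v,\sum_{w\text{ child of }v}\hat f_w(S\cap L_w)\}$, with leaves contributing $+\infty$ (or $C_l$ if the leaf is itself constrained). Unrolling this recursion gives exactly the minimum over antichains covering $S$: at each node we choose either to cut at $v$ or to descend into its children. The "$\ge$" direction is obtained by greedily filling leaves bottom-up, which is feasible precisely because the family is laminar. The "$\le$" direction holds because any covering antichain yields a valid upper bound $x(S)\le\sum_{v\in A}x(L_v)\le\sum_{v\in A}C_v$.

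Third, we prove (i). Normalization is immediate, since the empty antichain covers $\emptyset$. Monotonicity follows because any antichain covering $T\supseteq S$ also covers $S$. For submodularity we use $f=\hat f$ and the recursion: $\min\{C,\cdot\}$ of a sum of submodular monotone functions on disjoint ground sets is submodular. The sum is clearly submodular, and truncating a monotone submodular function at a constant preserves submodularity; this is a standard check via the marginal inequality $g(S+a)-g(S)\ge g(T+a)-g(T)$ for $S\subseteq T$, split by whether the cap binds. Induction up the laminar forest then finishes the argument. This inductive submodularity step is the main technical point, although it is standard. The real obstacle is the SP laminarity claim in the first step, which depends on fixing the leaf-block semantics carefully so that shared terminals do not create overlapping blocks.

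For (ii), we need $\mathcal{X}_{\mathrm{res}}=P(f)=\{x\ge0:x(S)\le f(S)\ \forall S\}$. The inclusion $P(f)\subseteq\mathcal{X}_{\mathrm{res}}$ holds because $f(L_v)\le C_v$, using the antichain $\{v\}$. The inclusion $\mathcal{X}_{\mathrm{res}}\subseteq P(f)$ is the "$\le$" bound from the second step. Combined with (i), $\mathcal{X}_{\mathrm{res}}$ is a polymatroid with rank function $f$. For (iii), $\max\{x(L(G)):x\in\mathcal{X}_{\mathrm{res}}\}=f(L(G))$ by the second step. The base polytope is then by definition the face on which this maximum is attained, so it is exactly the set of allocations saturating total leaf throughput; it is nonempty, as for any polymatroid, by the greedy algorithm.
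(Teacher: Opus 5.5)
There is a genuine gap in your series--parallel laminarity step. The constraints defining $\mathcal{X}_{\mathrm{res}}$ are $x(L_v)\le C_v$, one for each internal node $v$ of $G_{\mathrm{res}}$, where $L_v$ is the set of leaves reachable from $v$. The capacities sit on graph nodes. You instead attach blocks to the nodes of the SP decomposition tree, which carry no capacities, and assert that $\mathcal{L}$ ``is exactly the family of leaf sets of the decomposition tree''. That family is trivially laminar, but nothing ties it to the sets that are actually constrained. Under your own convention (component leaf sets disjoint, leaf set of $G_1\cdot G_2$ equal to $L_{G_1}\cup L_{G_2}$) the identification is in fact false. Let $G_1$ be the parallel composition of $s_1\to u\to a\to t_1$ and $s_1\to w\to b\to t_1$, let $G_2$ be $s_2\to c\to t_2$, and take $a,b,c$ as the leaves. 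In $G_1\cdot G_2$ both $u$ and $w$ reach the glue node and hence $c$, so $L_u=\{a,c\}$ and $L_w=\{b,c\}$ cross. Set $C_u=C_w=1$ and all other capacities large. Then $\hat f(\{a,c\})+\hat f(\{b,c\})=2<3=\hat f(\{a,b,c\})+\hat f(\{c\})$, so the region is not even a polymatroid. The obstacle you flagged therefore cannot be settled by reading the leaf-block semantics to suit the decomposition tree. It needs an argument about the blocks of graph nodes, under a leaf convention that excludes this configuration.

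The paper's proof works with node-indexed blocks directly, inducting over compositions. In the series case it uses that every internal node of $G_1$ lies on an $s_1$--$t_1$ path and so reaches the glue node. Its block therefore becomes $L_v^{(1)}\cup L(G_2)$, and nesting inside $G_1$ must carry over to $G$. Your example is exactly a failure of that carry-over: blocks that are disjoint in $G_1$ cross once $L(G_2)$ is appended. The paper sidesteps this by counting $t_1$ as a leaf of $G_1$, so all such blocks share $t_1$ and must nest. That convention then has to be reconciled with parallel composition, where both branches contain the shared sink.

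One convention that does work is to take the leaves to be the in-neighbours of the global sink $t$, with $t$ itself not a leaf. Look at the smallest sub-network of the decomposition containing two given internal nodes.
\begin{itemize}
\item If it is an edge or a series composition, one node reaches the other (through the glue node in the series case), so their blocks are nested.
\item Otherwise the nodes lie strictly inside different branches of a parallel composition $H$. Since $H$ meets the rest of the graph only at $s_H,t_H$, each block consists of the leaves reachable inside its node's own branch, plus the leaves reachable from $t_H$ (and $t_H$ itself if it is a leaf). A leaf strictly inside $H$ forces $t_H=t$, because its edge to $t$ lies in $H$. Hence the blocks are equal when $t_H\neq t$ and disjoint when $t_H=t$.
\end{itemize}
With laminarity of the node-indexed family secured this way, the rest of your proposal is sound. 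It is also more self-contained than the paper, which simply cites Edmonds and Fujishige after laminarity. Specifically, the recursion $\hat f_v(S)=\min\{C_v,\sum_w\hat f_w(S\cap L_w)\}$, its identification with the antichain formula, the truncation argument for submodularity, both inclusions for (ii), and (iii) are all correct.
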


\begin{proof}
See the supplementary material.
\end{proof}

Governance constraints $\mathcal{X}_{\mathrm{gov}}$ impose coordinate-wise upper bounds $x_l \le u_l$ on each leaf service (\cref{sec:preliminaries}). The intersection of a polymatroid with such coordinate truncations is again a polymatroid~\cite{fujishige2005submodular}, so $\mathcal{X}_t = \mathcal{X}_{\mathrm{res}} \cap \mathcal{X}_{\mathrm{gov}}$ inherits polymatroidal structure---provided governance is expressible as independent per-leaf bounds; more complex couplings would generally not preserve it (details in supplementary material).

\subsection{From Per-Task Valuations to Gross Substitutes}

The mechanism-design results of the next subsection require that agents' valuations over resources satisfy the gross-substitutes (GS) condition. We now show that the latency-aware valuations introduced in \cref{sec:preliminaries} satisfy GS under the architectural encapsulation proposed in this work.

\begin{lemma}[GS Valuations under Slice Encapsulation; per-task-bidder framing]
\label{lem:gs-valuations}
Suppose integrators encapsulate multi-resource service paths into composite \emph{slices}, each a discrete, indivisible unit with fixed internal routing, deterministic latency $T_s$, and quality~$q_s$, and that (a)~each task requires at most one slice (unit demand); (b)~an agent's tasks are \emph{additively separable} (no cross-task complementarity, shared budget, or feasibility coupling beyond $\mathcal{X}_{\mathrm{res}}$); and (c)~slice attributes $(T_s, q_s)$ are fixed within each epoch by the state~$s_t$. Then the mechanism treats each \emph{(agent, task)} pair as an independent unit-demand bidder; each per-task valuation satisfies GS by Gul--Stacchetti~\cite{gul1999grosssubstitutes}, Prop.~2; and the multi-bidder welfare-maximisation is a gross-substitutes problem on the polymatroidal supply set (supplementary proof; under additive separability a single agent's aggregate set-valuation is OXS~\cite{lehmann2006combinatorial}, a subclass of GS, so the per-task lift serves to construct the independent per-bidder problem for \cref{prop:mechanism} rather than to recover GS).
\end{lemma}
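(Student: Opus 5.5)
The plan is to establish the three claims of \cref{lem:gs-valuations} in order: (1) the per-task lift is legitimate; (2) each per-task valuation is GS; (3) the resulting welfare problem is a GS problem on the polymatroidal supply set $\mathcal{X}_t$, with a side remark on OXS aggregation.

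\emph{Step 1 (the lift).} Fix an epoch. By (c), every slice $s$ has fixed attributes $(T_s,q_s)$, so the value of serving task $k$ of agent $i$ on slice $s$ is a scalar
\[
w_{iks} = V_{ik}(T_s,q_s) = v_{ik}(q_s)\,\delta_{ik}(T_s).
\]
This scalar does not depend on any other allocation. Assumption (b) gives $v_i(X)=\sum_{k} v_{ik}(X_k)$ for any assignment of slice bundles $X_k$ to tasks. It also guarantees that the only coupling between tasks is through $\mathcal{X}_{\mathrm{res}}\cap\mathcal{X}_{\mathrm{gov}}$. By quasilinearity (\cref{sec:preliminaries}; the risk term depends only on $x_i$), agent $i$'s utility is therefore a sum of per-task quasilinear utilities. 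Replacing agent $i$ by $|\mathcal{T}_i(t)|$ independent bidders $(i,k)$ thus leaves both the welfare objective and the feasible set unchanged.

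\emph{Step 2 (GS of each bidder).} By (a), bidder $(i,k)$ has the unit-demand valuation
\[
w_{ik}(S)=\max_{s\in S} w_{iks}, \qquad w_{ik}(\emptyset)=0.
\]
Unit-demand valuations are GS by Gul--Stacchetti, Prop.~2, which we invoke directly. If a direct check is wanted, it goes as follows. The demand at prices $p$ is $\arg\max_s (w_{iks}-p_s)$ together with $\emptyset$. Raising the prices of items outside a demanded singleton $\{s\}$ keeps $s$ optimal. Raising the price of $s$ itself either keeps $s$ or moves demand to another singleton or to $\emptyset$. In every case the unraised demanded items remain demanded, which is exactly the GS condition.

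\emph{Step 3 (the welfare problem).} Slices are discrete units drawn from integral capacities, so the supply side is the integer polymatroid $\mathcal{X}_t$. This follows from \cref{prop:polymatroid} together with closure of polymatroids under coordinate truncation. The welfare problem is to maximise $\sum_{(i,k)} w_{ik,s(i,k)}$ subject to the vector of slice counts lying in $\mathcal{X}_t$.

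I would write this as a weighted matroid-like assignment in two stages. First, the assignment of bidders to slice types is a bipartite matching, which is M$^\natural$-concave in the slice-count vector. This is the OXS/assignment valuation result of Lehmann et al.\ and Murota. Second, we maximise this over an integral polymatroid. Since the feasible set's indicator is M$^\natural$-concave and the sum of an M$^\natural$-concave function with a polymatroid indicator stays within valuated-matroid-intersection/GS territory, greedy and Walrasian arguments apply. For the aggregate remark, note that a sum of unit-demand valuations over disjoint task copies is by definition OXS, and OXS $\subset$ GS.

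\emph{Main obstacle.} The hard step is Step 3: precisely what "a gross-substitutes problem on the polymatroidal supply set" means, and ensuring that the supply constraint does not break GS. Item multiplicities are coupled through $f$ rather than being independent copies. I would first try to model the supply side as a fictitious seller with M$^\natural$-concave cost (the negated indicator of $\mathcal{X}_t$), and then invoke Murota's M$^\natural$-concave intersection theorem for existence of integral optima and equilibrium prices. Failing that, I would fall back on Edmonds' greedy algorithm on the polymatroid with bidders sorted by value.
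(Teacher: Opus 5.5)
Your Steps 1--2 and the OXS aside are the paper's proof. The paper fixes $(T_s,q_s)$ by (c), reads off the unit-demand valuation $V_{ik}(T_s,q_s)$, cites Gul--Stacchetti Prop.~2, and uses (b) to turn agent $i$ into $|\mathcal{T}_i(t)|$ pseudo-bidders, exactly as you do. The difference is Step 3. The paper reads ``a gross-substitutes problem on the polymatroidal supply set'' as nothing more than per-task GS bidders facing $\mathcal{X}_{\mathrm{res}}$. It defers every substantive consequence to the proof of \cref{prop:mechanism}(i): Fujishige--Yang (GS $\Leftrightarrow$ M$^{\natural}$-concave), plus the M-convex intersection theorem for an integral optimum with per-type supporting prices. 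Your Step 3 therefore does Proposition~2's job inside the lemma. That is legitimate and gives the claim real content, but two sentences in it are wrong as written.

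First, adding the polymatroid indicator to the M$^{\natural}$-concave aggregate $g$ does \emph{not} in general stay M$^{\natural}$-concave, so it does not stay ``in GS territory''. Take slice types $a,b,c$ and the tree-derived supply $x_b+x_c\le 1$ under a root of capacity at least $2$. Let the bidders be $w_1=(1,\,1.1,\,0)$ and $w_2=(0,\,0,\,1)$, and set $h=g+\delta_{\mathcal{X}_t}$.
\begin{itemize}
\item With $x=e_a+e_c$ and $y=e_b$, you get $h(x)+h(y)=3.1$.
\item The exchange axiom fails at $i=a$. The move $h(e_c)+h(e_a+e_b)$ gives $2.1$. The move $h(e_b+e_c)+h(e_a)$ gives $-\infty$, since $e_b+e_c$ is infeasible.
\end{itemize}
The correct argument is the one you name under ``main obstacle''. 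Keep $g$ and $\delta_{\mathcal{X}_t}$ as two separate M$^{\natural}$-concave functions and apply Murota's intersection theorem, which never needs their sum to be M$^{\natural}$-concave.

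Second, the Edmonds-greedy fallback fails on the same instance. Greedy serves bidder~1 first on $b$, which blocks $c$ for bidder~2, and reaches $1.1$ against the optimum $2$. With heterogeneous slice types the welfare problem is a weighted matroid intersection: the bidder partition matroid intersected with the matroid that $\mathcal{X}_t$ induces on (bidder, type) pairs. Greedy is exact only in the single-type-per-task case. That is why the paper uses a Kelso--Crawford ascending auction in \cref{prop:mechanism}(ii) and reserves Edmonds' greedy for the single-agent separable-concave setting.
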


\begin{proof}
See the supplementary material.
\end{proof}

Additive separability (condition~(b)) is critical: shared agent-side budgets, cross-task latency coupling, or shared token caps would destroy the GS property, as would bidding on resource bundles along DAG paths without encapsulation. Further discussion of conditions~(b) and~(c) is in the supplementary material.

\begin{proposition}[Efficient Mechanism Design under Structured DAGs]
\label{prop:mechanism}
Suppose $\mathcal{X}_t$ is polymatroidal (\cref{prop:polymatroid}); the per-task-bidder framing of \cref{lem:gs-valuations} applies (its additive-separability hypothesis~(b) included); agents have quasilinear utility; and slices are discrete items with integral capacities. Then:
\begin{enumerate}
\item[\emph{(i)}] a Walrasian equilibrium exists with per-type prices supporting the welfare-maximising allocation~\cite{kelso1982job,gul1999grosssubstitutes,fujishige2003note,murota2003discrete,amin2026market};
\item[\emph{(ii)}] the welfare-maximising allocation is computable in polynomial time via a Kelso--Crawford-style ascending auction under bounded integer valuations~\cite{kelso1982job}; and
\item[\emph{(iii)}] the efficient allocation is implementable in DSIC at the per-task-bidder level by the Vickrey--Clarke--Groves (VCG) mechanism (which charges each agent the externality it imposes on the others, rendering truthful bidding optimal; formal definition in the supplementary material)~\cite{ausubel2005vickrey}; in the single-type-per-task case, the polymatroid clinching auction---a price-ascending auction in which winners progressively ``clinch'' guaranteed units as competitors drop out~\cite{goel2015polyhedral,ausubel2004ascending}---additionally guarantees weak budget balance under free disposal. Coalitional misreports across an agent's own tasks reduce to independent per-task misreports under condition~(b) of \cref{lem:gs-valuations} (supplementary proof). For heterogeneous slice types, VCG remains the appropriate mechanism. The orthogonal operator-credibility layer is treated in companion work~\cite{loven2026trilemma}.
\end{enumerate}
The full statement with detailed parameter conditions is in the supplementary material.
\end{proposition}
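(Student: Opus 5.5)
The plan is to reduce everything to a single structured bipartite problem and then invoke the classical results part by part. By \cref{lem:gs-valuations}, each (agent, task) pair is an independent unit-demand bidder $b=(i,k)$. Bidder $b$ has value $w_{b,s}=V_{ik}(T_s,q_s)$ for slice type (leaf) $s$; this value is exogenous within the epoch by condition~(c). By \cref{prop:polymatroid} and the truncation remark, the supply side is an integral polymatroid $P(f')$ on the leaves, with $f'$ the governance-truncated rank function. Welfare maximisation therefore becomes
\begin{equation*}
\max \sum_{b,s} w_{b,s} y_{b,s}
\quad \text{s.t.} \quad
\sum_s y_{b,s}\le 1 \text{ for all } b,
\qquad \Bigl(\sum_b y_{b,s}\Bigr)_s \in P(f'),
\end{equation*}
with $y$ integral. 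This is a weighted polymatroid intersection: one side is the partition matroid of unit demands, the other is the integral polymatroid lifted to bidder--slice pairs via the aggregation map.

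\emph{Part (i).} By Edmonds' polymatroid intersection theorem, this LP has an integral optimum and integral dual solutions. The dual variables attached to the polymatroid constraints are supported on a chain of tight sets. I will read per-type prices $p_s$ off them as sums of those duals over the tight sets containing $s$, and take bidder utilities as the dual variables of the unit-demand constraints. Complementary slackness then gives two facts: each bidder receives a utility-maximising slice (or nothing) at prices $p$, and unsold or unsaturated supply carries zero price. That is a Walrasian equilibrium supporting the optimum. Alternatively, the aggregate demand is a sum of unit-demand (OXS, hence GS/M$^\natural$-concave) valuations, and the supply indicator of an integral polymatroid is M$^\natural$-concave. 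Existence then follows from Murota's M-convex intersection theorem, as in Fujishige--Yang~\cite{fujishige2003note}. I would present the LP route and cite the second as a cross-check.

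\emph{Part (ii).} I will run a Kelso--Crawford/Gul--Stacchetti ascending auction with integer valuations bounded by $V_{\max}$ and unit price increments. At each round the seller side computes over-demanded sets by a polymatroid greedy or feasibility oracle, which is polynomial via max-flow on the tree/SP structure. GS guarantees that demand does not shrink on unraised goods, so the process terminates at a Walrasian price vector after at most $O(|L|V_{\max})$ rounds. This bound is pseudo-polynomial, so I will phrase the claim as polynomial under bounded integer valuations, matching the statement. I will also note that strongly polynomial computation is available directly through weighted polymatroid intersection.

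\emph{Part (iii).} DSIC of VCG at the per-bidder level is standard given quasilinearity and an exact welfare maximiser, which part (ii) supplies. For coalitional misreports by one agent across its tasks, additive separability (b) means the agent's utility is the sum of its per-task utilities. However, each task's VCG payment depends on the reports of the agent's \emph{other} tasks, so the reduction is not trivial. I will handle this by showing that aggregated VCG at the agent level (the agent's externality on all other agents) coincides with running VCG on the agent's OXS valuation, which is DSIC for the agent as a whole. The per-task decomposition is then a bookkeeping of the same allocation. For the single-type clinching claim, I cite Goel et al.~\cite{goel2015polyhedral}: the polyhedral clinching auction is DSIC and weakly budget balanced on polymatroidal constraints under free disposal.

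I expect the main obstacle to be this intra-agent coalition step. Per-task VCG and agent-level VCG payments can differ, so I must either adopt agent-level VCG payments explicitly or argue that the per-task prices do not reward shading one task to lower another's price.
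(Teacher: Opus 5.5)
Parts (i) and (ii) are fine, and in (i) you take a different route from the paper.

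\textbf{Part (i): comparison of routes.} The paper argues through discrete convex analysis. Unit-demand valuations are GS, hence M$^\natural$-concave (Fujishige--Yang). The integer polymatroid is an M$^\natural$-convex supply set. The M-convex intersection theorem then gives an integral optimum with supporting per-type prices. That is essentially your ``cross-check''.

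Your main argument instead casts welfare maximisation as weighted intersection of the unit-demand partition matroid with the lifted polymatroid $g=f'\circ\pi$, where $\pi\colon(b,s)\mapsto s$. You then read prices and utilities off an uncrossed LP dual. This is more elementary, exhibits the prices explicitly, and directly yields the strongly polynomial algorithm you mention in (ii). The paper's route is shorter and extends verbatim to general GS valuations, but it is non-constructive.

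Make two details explicit:
\begin{enumerate}
\item An optimal dual can be supported on sets $\pi^{-1}(T)$. Replacing $F$ by $\pi^{-1}(\pi(F))$ leaves $g(F)$ unchanged, preserves dual feasibility, and maps chains to chains. This is what makes the prices per-type rather than per-bidder.
\item The seller-side condition is that $x=(\sum_b y_{b,s})_s$ is tight on every set of the dual chain, i.e.\ on every positive level set of $p$. By Edmonds' greedy characterisation, $x$ then maximises $p\cdot x$ over $P(f')$. ``Unsaturated supply has price zero'' alone does not imply this.
\end{enumerate}

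\textbf{Part (iii): intra-agent coalitions.} Here your suspicion is sharper than the paper's proof. The paper disposes of intra-agent coalitions in one line: by additive separability a joint misreport is a profile of per-task misreports, each unprofitable by per-task DSIC. That inference is invalid for per-task Clarke payments. A misreport that is harmless to the pseudo-bidder making it can lower a sibling task's payment.

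Take one slice type with capacity~$2$, agent~$A$ with tasks worth $10$ and $3$, and agent~$B$ with one task worth $5$. The efficient allocation serves $10$ and $5$. $A$'s winning task pays $(3+5)-5=3$, because it displaces $A$'s own $3$-task. If $A$ reports its losing task as $0$, the allocation is unchanged and that payment falls to $0$. $A$'s utility rises from $7$ to $10$. The misreporting pseudo-bidder is itself indifferent, so per-task DSIC is not violated.

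Hence the second option in your last paragraph cannot succeed; commit to the first. Charge each agent its externality on the \emph{other agents}; here $A$ pays $0$. This is ordinary VCG on the agent's reported aggregate OXS valuation, so it is DSIC against every joint misreport. The allocation, and its computation as a per-task problem, are unchanged. Say plainly that this proves DSIC for agent-level payments. It replaces the claimed reduction to independent per-task misreports rather than establishing it.

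\textbf{Part (iii): clinching.} The same example undercuts your citation of Goel et al.\ for agents owning several tasks. In the per-task clinching auction, $A$'s $3$-task exits at price $3$ and its $10$-task then clinches at $3$. If the $3$-task exits at $0$, the $10$-task clinches at $0$. Their guarantee is for single-parameter bidders. The citation therefore covers only agents with one task, or with a common value across their tasks.

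If weak budget balance is what you need, there is a simpler route. In this one-sided, downward-closed market, agent-level Clarke payments are already nonnegative, because deleting an agent's bundle from the efficient allocation leaves a feasible allocation for the others. So agent-level VCG is weakly budget balanced without the clinching auction.
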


\begin{proof}
See the supplementary material.
\end{proof}

Together, \cref{prop:polymatroid,lem:gs-valuations,prop:mechanism} establish that, \emph{within each decision epoch}, real-time AI service economies with tree or SP dependency structures admit efficient, DSIC, individually rational mechanisms. The guarantees apply epoch-by-epoch; cross-epoch strategic dynamics (learning, reputation gaming) lie outside their scope (supplementary material discusses the polymatroid's three distinct roles and the inter-epoch limitations).

\subsection{Instability in General Dependency Graphs}

When the service--dependency graph is an arbitrary DAG, dependencies generate strong complementarities: a bundle's value may strictly exceed the sum of its parts. This pushes the environment outside the GS domain~\cite{gul1999grosssubstitutes} and need not preserve the polymatroidal structure of $\mathcal{X}_{\mathrm{res}}$ (cross-cutting dependencies typically destroy the laminar property of \cref{prop:polymatroid}). Specifically, winner determination in combinatorial auctions with general valuations is NP-hard~\cite{nisan2006bidding,rothkopf1998computationally,lehmann2006combinatorial}, Walrasian equilibria may fail to exist when valuations violate gross substitutes~\cite{gul1999grosssubstitutes,bikhchandani1997competitive}, and the Myerson--Satterthwaite impossibility~\cite{myerson1983efficient} shows that even in bilateral trade no mechanism can simultaneously satisfy efficiency, DSIC, IR, and budget balance---indicative impossibility pressure for the richer multi-agent setting (a formal reduction would require matching the specific assumptions). Consequently, naive market-based allocation on a complex dependency graph can be unstable or inefficient.

\subsection{Restoring Tractability via Architectural Encapsulation}
\label{sec:encapsulation}

The following proposition shows that encapsulation restores tractability even for arbitrary DAGs.

\begin{proposition}[Polymatroidal Encapsulation of Arbitrary DAGs]
\label{prop:encapsulation}
Let $G_{\mathrm{res}}$ be an arbitrary service-dependency DAG. Suppose integrators partition the non-leaf nodes into disjoint clusters, each managing a connected sub-DAG internally. Assume:
\begin{enumerate}
\item[\emph{(i)}] each integrator~$j$ exposes a single composite service (slice) with scalar capacity $\bar{C}_j$ equal to the maximum flow of its sub-DAG;
\item[\emph{(ii)}] $\bar{C}_j$ faithfully summarises the sub-DAG's aggregate external capacity (valid when each integrator exports a single, homogeneous slice type);
\item[\emph{(iii)}] no external feasibility constraints couple different integrators beyond the shared quotient-graph structure---in particular, no leaf service is shared across integrators.
\end{enumerate}
Let $G'$ be the quotient graph from contracting each cluster to a single node. If $G'$ has tree or series--parallel structure, the agent-facing feasible region is polymatroidal.
\end{proposition}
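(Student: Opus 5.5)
The plan is to reduce the claim to \cref{prop:polymatroid}, applied to the quotient graph $G'$ with the integrator slices treated as ordinary nodes. The argument has three steps: construct the quotient instance, show that agent-facing feasibility equals the leaf-block region of $G'$, and invoke \cref{prop:polymatroid}, together with the truncation remark for governance bounds.

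\emph{Step 1: the quotient instance.} Contracting each cluster $K_j$ yields $G'$. Each contracted node $j$ carries capacity $\bar{C}_j$, the max-flow value of its sub-DAG from hypothesis (i). Every unclustered node keeps its original capacity $C_v$, and the leaves $L(G)$ stay unchanged. Because the clusters are disjoint and each induces a connected sub-DAG, $G'$ is a well-defined DAG on the contracted nodes plus the leaves. Acyclicity should follow from the hypothesis that $G'$ is a tree or series--parallel. Hypothesis (iii) says no leaf is shared across integrators. Hence each leaf's ancestor chain in $G'$ passes through at most one integrator node on each branch, and the reachable-leaf sets $L'_j$ are well-defined.

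\emph{Step 2: agent-facing feasibility equals the leaf-block region of $G'$.} I will show
\[
\mathcal{X}_{\mathrm{agent}} = \{x \ge 0 : x(L'_v) \le C'_v \text{ for every internal node } v \text{ of } G'\}.
\]
\emph{Necessity.} Any leaf allocation routed through cluster $K_j$ is a flow through the sub-DAG of $K_j$. By max-flow/min-cut its total is at most $\bar{C}_j$, and external constraints are unchanged.

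\emph{Sufficiency.} Take $x$ satisfying the quotient constraints. The demand $x(L'_j)$ entering cluster $j$ is at most $\bar{C}_j$, so a feasible internal flow of that value exists. Hypothesis (ii) says the slice is homogeneous, meaning any unit of slice capacity can serve any of the integrator's exported leaves. Under that hypothesis this flow can be split to match the individual $x_l$. The integrality of $C_v$ and the integral max-flow theorem give an integral routing, which matches the integral throughput-token semantics. Hypothesis (iii) ensures that the internal routings chosen by different integrators do not interact. The solutions therefore compose into a globally feasible allocation.

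\emph{Step 3: conclude.} Since $G'$ is a rooted tree or two-terminal series--parallel network with positive capacities $C'_v$, \cref{prop:polymatroid} applies. It gives laminarity of $\{L'_v\}$, submodularity, monotonicity and normalisation of the induced $f'$, and that the region above is the polymatroid $P(f')$. Governance bounds $x_l \le u_l$ are coordinate truncations, so by the remark following \cref{prop:polymatroid} the intersection is still polymatroidal.

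\textbf{Main obstacle.} The hardest part is the sufficiency direction of Step 2. A single max-flow value generally does \emph{not} capture a multi-terminal sub-DAG's feasible region; that region can be a non-polymatroidal flow polytope. This is exactly why hypothesis (ii) is needed. I will first try to use (ii) literally: treat the integrator as exporting a single slice commodity with one sink, so its external constraint is precisely the scalar bound $\bar{C}_j$. If a leaf-level decomposition is required, I will add a super-sink argument showing that any split of $\bar{C}_j$ across the integrator's own leaves is routable. I will flag that this holds under (ii) and fails in general.
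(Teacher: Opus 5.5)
Your proposal is correct and follows the paper's route. You give each contracted node capacity $\bar{C}_j$ and each uncontracted node its original $C_v$, note that $\{L'_v\}$ is laminar because $G'$ is a tree or series--parallel network, and invoke \cref{prop:polymatroid}. The paper does not prove your Step~2 equality; it takes the quotient constraints $x(L'_v)\le\bar{C}_v$ as the agent-facing region under hypotheses (ii)--(iii). Your necessity/sufficiency check makes that step explicit but rests on the same assumptions.

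Two slips, neither load-bearing. First, (iii) does not imply that a leaf's ancestor chain in $G'$ meets at most one integrator node. Series-composed integrators violate this, as in the urban-monitoring example with quotient chain agent $\to$ slice~1 $\to$ slice~2. Second, no super-sink argument can show that every split of an integrator's throughput across its exits is routable. That is exactly what (ii) posits, so invoke (ii) directly rather than trying to derive it.
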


\begin{proof}
See the supplementary material.
\end{proof}

\cref{prop:encapsulation} is the formal basis for the hybrid market architecture: by ensuring the quotient graph seen by agents is structurally disciplined, encapsulation restores the conditions under which \cref{prop:mechanism} guarantees efficient, incentive-compatible allocation, with the integrator absorbing the complementarities that would otherwise destabilise market-based coordination.

\subsection{Hybrid Market Architecture}
\label{sec:hybrid-architecture}

We propose a three-layer hybrid encapsulation architecture that secures these polymatroidal and GS conditions at the agent-facing layer even when the underlying infrastructure does not.
\paragraph*{Cross-Domain Integrators} Integrators form the agent-facing layer. Each encapsulates a complex multi-resource service path into a governance-compliant slice (\cref{prop:encapsulation}), internally managing its sub-DAG and exposing a simplified, substitutable capacity interface equal to the sub-DAG's max-flow. In realistic continuum deployments, sensing, compute, and radio interactions are tightly coupled within physical or operational domains under heterogeneous ownership and trust boundaries~\cite{rosendo2022distributed}; the integrator consolidates the sub-DAG's structural dependencies and governance constraints internally, at some efficiency cost, so that complementarities are absorbed inside the domain boundary and the inter-domain market only encounters substitutable, polymatroidal slice interfaces. The number of integrators is deployment-dependent: \cref{prop:encapsulation} requires only that the quotient graph stay tree or series--parallel, so one integrator per natural domain boundary is typical. Governance constraints spanning multiple domains (cross-border data sovereignty, inter-domain trust, jurisdictional restrictions) are enforced at the integrator level \cite{fornara2008institutions}.

\paragraph*{Local Marketplaces} Beneath the integrators, autonomous markets at device or edge scope coordinate the fungible services and resources (compute, bandwidth, storage, standard inference endpoints) that integrators draw on to fulfil slice commitments, clearing via lightweight auctions or posted prices and enforcing local governance policies (trust thresholds, data-handling rules, admission control) \cite{foukas2017network,meuser2024edgeai}. For simple single-domain services that need no cross-domain encapsulation, agents may interact with a local marketplace directly.

\paragraph*{Inter-Market Coordination} Local markets and integrators exchange coarse-grained signals (aggregate demand, congestion indicators, trust updates) to maintain cross-domain consistency without system-wide optimisation: global governance and cross-domain-feasibility constraints flow through the integrators, while purely local resource signals may be exchanged directly between marketplaces \cite{10.1145/3773274.3777421}.

This architecture co-designs the economic and infrastructure layers: agents trade over integrator-exposed slices using mechanisms that exploit GS valuations and polymatroid structure (\cref{prop:mechanism}), while fungible services and resources are coordinated through local marketplaces; non-fungible services (e.g., jurisdiction-bound datasets, specialised model instances) are allocated via discrete assignment mechanisms \cite{bikhchandani1997competitive}. The integrator layer is the architectural embodiment of \cref{prop:encapsulation} (\cref{fig:mechanism-arch}): the separation between internal domain-level consolidation and inter-domain economic coordination keeps slice trading and local market clearing distributed across domains, without global centralised clearing.

\paragraph*{Scope and Failure Modes} The guarantee of \cref{prop:encapsulation} rests on the three conditions stated in the proposition, whose violation defines the architecture's failure modes: \emph{(a)}~multi-output integrators or heterogeneous bundles may reintroduce complementarities at the agent-facing layer; \emph{(b)}~when internal scheduling involves non-trivial trade-offs (e.g., latency--throughput Pareto frontiers), the scalar max-flow abstraction may lose decision-relevant information; and \emph{(c)}~shared accelerators, multi-tenant network links, or cross-domain policy couplings can create hidden feasibility constraints. Condition~\emph{(c)} is the structurally sharpest: a cross-node governance coupling (e.g.\ a data-residency rule $x_i\!\in\!\mathrm{EU}\Rightarrow x_j\!\in\!\mathrm{EU}$) is no longer a coordinate-wise bound and breaks the laminar/polymatroidal structure of \cref{prop:polymatroid}; the recovery is to \emph{slice by domain}, restoring a polymatroid per slice at a welfare cost set by the stranded cross-domain demand. The simulation (\cref{sec:evaluation}) quantifies the consequences for the conditions our identical-bundle, coordinate-wise-governance model represents; \emph{(a)}'s multi-output and \emph{(b)}'s multi-dimensional cases are jointly resolved by a catalogue generalisation of \cref{prop:encapsulation} (supplementary material), polymatroidal under fixed-proportion recipes; \emph{(c)}'s coupling is the structurally distinct case, characterised analytically (\cref{sec:discussion}).

\paragraph*{Illustrative Example}
Recall the urban-monitoring scenario from \cref{sec:introduction}: a real-time monitoring agent consumes an inference service that depends on sensor-data streams processed at the edge and a cloud-hosted foundation model. Without architectural encapsulation, the agent would bid simultaneously on edge GPU cycles, camera bandwidth, cloud model endpoints, and cross-tier network capacity, a multi-dimensional bundle with strong complementarities: all resources are needed together; any one alone is worthless. These violate the GS condition and destabilise market prices.

The hybrid architecture resolves this through two cross-domain integrators and two local marketplaces.
\emph{Integrator~1} (sensor-to-feature) encapsulates the edge-side sub-DAG spanning camera streams, frame extraction, and feature-vector computation. It internally manages device--edge dependencies and enforces the constraint that raw video frames must not leave the city domain, exposing a single ``feature-extraction slice'' with capacity $\bar{C}_1$ equal to the max-flow of its internal sub-DAG.
\emph{Integrator~2} (feature-to-inference) encapsulates the cross-domain sub-DAG spanning feature vectors, cloud foundation-model inference, and post-processing. It manages the edge--cloud handoff and enforces the cloud provider's data-handling policies, exposing a ``model-inference slice'' with capacity~$\bar{C}_2$.
From the agent's perspective, the market reduces to two substitutable slices at posted prices---a unit-demand valuation satisfying GS (\cref{lem:gs-valuations}). The quotient graph (agent $\to$ slice~1 $\to$ slice~2) is series--parallel, so the agent-facing feasible region is polymatroidal (\cref{prop:encapsulation}) and \cref{prop:mechanism} guarantees efficient, incentive-compatible allocation. Under this DSIC guarantee, agents truthfully reveal their valuations, so the decentralised slice market achieves the same allocation a centralised planner with full knowledge of both city-infrastructure and cloud-provider resources would compute, without any single entity controlling both domains.

To fulfil their slice commitments, the integrators draw on two local marketplaces.
\emph{Local Market~A} (city-infrastructure domain) coordinates fungible edge services and resources (GPU cycles at street-level compute nodes, bandwidth to camera feeds, frame-extraction endpoints) via lightweight auctions and enforces municipal data-locality policies.
\emph{Local Market~B} (cloud-provider domain) coordinates fungible cloud services and resources (model-hosting endpoints, GPU instances for inference, result-caching storage) where multiple integrators and other workloads bid for capacity under the provider's terms.
Governance is handled at the appropriate layer: Integrator~1 enforces municipal data-locality; Integrator~2 enforces cross-domain data-handling compliance; Local Market~A enforces city-domain admission control; Local Market~B enforces cloud-provider terms. No single layer requires global knowledge of the entire system.

Operational coordination relies on four coarse-grained signals: (1)~an inter-market edge-GPU congestion indicator (so the cloud domain pre-scales inference capacity), (2)~aggregate feature-extraction demand reported to Integrator~1, which returns slice price and residual capacity, (3)~a flow-dependency check across the quotient-graph series link (Integrator~2 verifies upstream feature availability and downstream cloud capacity before admitting a slice), and (4)~trust scores and data-handling policy tags forwarded with each cross-domain request. Each signal is a scalar or small vector, keeping coordination overhead low while letting local t\^{a}tonnement (iterative price adjustment toward market clearing)~\cite{arrow1958stability} converge without centralised clearing; the supplementary material gives the full signal-by-signal walkthrough.

\begin{figure}[!t]
    \centering
    \includegraphics[width=0.48\linewidth]{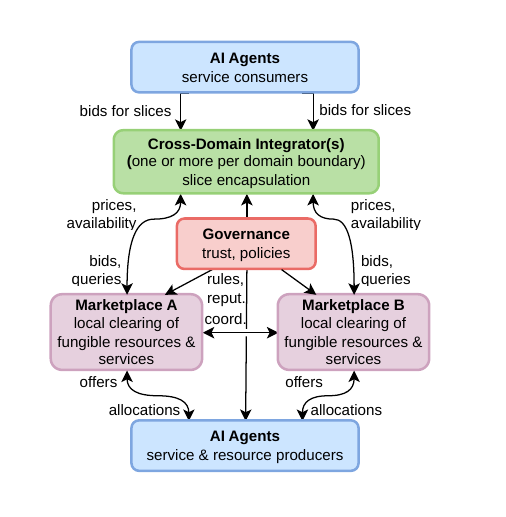}
    \caption{Hybrid architecture. Service-consuming agents trade over integrator-exposed slices (\cref{prop:encapsulation}); integrators bid for capacity at local marketplaces, which coordinate service and resource offerings from provider agents; governance flows through integrators (cross-domain) and marketplaces (local). For simple single-domain services, agents may also interact with a local marketplace directly.}
\label{fig:mechanism-arch}
\end{figure}

\section{Evaluation}\label{sec:evaluation}

We evaluate the framework through a systematic ablation study over four removable components: \textbf{(S)}~structural discipline (tree/SP topology yielding polymatroidal feasibility), \textbf{(H)}~hybrid architecture (integrator encapsulation with EMA price smoothing and efficiency factor), \textbf{(G)}~governance (trust-gated capacity partitioning), and \textbf{(M)}~market mechanism (price-based coordination). Six ablation experiments ($1{,}590$~runs, $10$~seeds each) progressively remove or vary these components; \cref{subsec:ablation-results} reports the headline results---a per-topology$\times$load grid (\cref{tab:headline-grid}) carrying the structural-fragility and price-stabilisation findings---together with the per-component dissection. Two further experiments step outside the ablation (\cref{subsec:beyond-ablation}), testing the incentive guarantee of \cref{prop:mechanism} directly under \emph{strategically misreporting} agents (\cref{subsec:exp7-dsic}) and instrumenting a real multi-step agentic workload (\cref{subsec:exp8-agentic}). The t\^{a}tonnement clearing used in the ablation is a price-discovery proxy and is not itself strategy-proof, so DSIC is verified separately under the VCG mechanism it characterises rather than assumed. All results include bootstrap 95\% CIs (BCa, $2,000$~resamples); full statistical tables are in the supplementary material. The simulation exercises the integrator-mediated market layer---a t\^{a}tonnement market over per-tier and, under encapsulation, per-slice capacities; the local-marketplace and inter-market-coordination layers (\cref{sec:encapsulation}) are characterised analytically and act through the integrator's slice interface rather than as separately simulated processes, with a full multi-market deployment left to dedicated evaluation.

\subsection{Experimental Setting}\label{sec:simulation-setup}

The simulation models a heterogeneous device--edge--cloud environment with three compute tiers (capacities $\{200, 300, 500\}$, base delays $\{5, 15, 50\}$~ms) and $75$~agents (default) generating latency-aware tasks with deadlines $\{500, 750, 1000\}$~ms (cross-continuum agentic round-trips) via Poisson arrivals ($\lambda{=}1.0$~tasks/round/agent). Tasks traverse multi-tier execution paths defined by service-dependency DAGs \cite{li2020service,yang2022edge}. Market clearing uses t\^{a}tonnement~\cite{arrow1958stability} over per-tier capacities with an online logistic model that learns deadline-feasibility success rates; the bid-time congestion signal saturates (utilisation capped at unit capacity) so that excess load registers as queueing and drops rather than unbounded delay. Trust evolves via asymmetric SLA-compliance updates ($+0.03$/$-0.08$) \cite{sun2019trust,huang2022trust}. Each run spans $200$~rounds; we record latency, drop rate, price volatility ($\sigma_p$, the coefficient of variation CV $=$ sd/mean of the agent-facing per-task cost), welfare (realised value minus cost), and service coverage. Full parameter details are in the supplementary material; simulation code is available at \url{https://github.com/Future-Computing-Group/agentic-economy-sim}, archived at \url{https://doi.org/10.5281/zenodo.21041130}.

\subsection{Headline Results}\label{subsec:ablation-results}

\cref{tab:headline-grid} is the central result: per topology and load it reports the price-market clearing fraction, the deadline-served fraction, efficiency, and the agent-facing price volatility $\sigma_p$ for the na\"{i}ve and hybrid-integrator arms. It carries the two anchor findings at once. \emph{First} (structural fragility), reading down the regime column, the entangled topology marches slack~$\to$~contended~$\to$~collapsed as load rises while tree and SP do not: only the entangled/high cell fails to clear ($21\%$, hence ``---''). \emph{Second} (price stabilisation), in every cell where the market is genuinely contended and volatile the integrator cuts $\sigma_p$ sharply (the $\Delta$ column; median $\approx 90\%$). The clearing-versus-served gap is itself informative: at SP/high the price market still clears ($54\%$) and the integrator stabilises it ($95\%$) even as deadline-service degrades to $25\%$, separating ``the market mechanism works'' from ``service quality degrades under load''. The four removable components ($\mathbf{S}$/$\mathbf{H}$/$\mathbf{G}$/$\mathbf{M}$) are dissected below and in full (figures, result tables, and statistical tests: Kruskal--Wallis, pairwise Wilcoxon with Holm correction, Cliff's~$\delta$, ART ANOVA) in the supplementary material.

\begin{table}[t]
\centering
\caption{Headline results by topology and load (10-seed means, $N{=}75$). \textit{Clear}/\textit{Serve}: fraction of tasks clearing the price market / also meeting their deadline. \textit{Regime} from Clear (slack $\ge 0.9$, cont.\ $0.3$--$0.9$, coll.\ $<0.3$). $\sigma_p$ (CV of the agent-facing price) for the na\"{i}ve ($n$) and hybrid ($h$) arms, and the reduction $\Delta = 1-\sigma_p^{h}/\sigma_p^{n}$, are reported only where the market clears ($\ge 0.3$) and is volatile; ``---'' marks cells where $\sigma_p$ is not a meaningful stability signal (\cref{subsec:ablation-results}).}
\label{tab:headline-grid}
\footnotesize
\setlength{\tabcolsep}{3pt}
\begin{tabular}{@{}llccclccc@{}}
\toprule
Topo. & Load & Clear & Serve & Eff. & Regime & $\sigma_p^{n}$ & $\sigma_p^{h}$ & $\Delta$ \\
\midrule
\multirow{3}{*}{Tree}
 & low  & 1.00 & 1.00 & 0.71 & slack  & 0.00 & 0.10 & --- \\
 & med  & 1.00 & 1.00 & 0.72 & slack  & 0.02 & 0.10 & --- \\
 & high & 0.88 & 0.88 & 0.25 & cont.  & 0.22 & 0.11 & $51\%$ \\
\addlinespace[2pt]
\multirow{3}{*}{SP}
 & low  & 1.00 & 1.00 & 0.74 & slack  & 0.00 & 0.01 & --- \\
 & med  & 0.79 & 0.88 & 0.42 & cont.  & 0.20 & 0.01 & $93\%$ \\
 & high & 0.54 & 0.25 & 0.13 & cont.  & 0.50 & 0.03 & $95\%$ \\
\addlinespace[2pt]
\multirow{3}{*}{Ent.}
 & low  & 0.91 & 0.96 & 0.57 & slack  & 0.22 & 0.02 & $90\%$ \\
 & med  & 0.38 & 0.23 & 0.09 & cont.  & 0.53 & 0.06 & $90\%$ \\
 & high & 0.21 & 0.02 & 0.02 & coll.  & \multicolumn{3}{c}{\textit{fails to clear}} \\
\bottomrule
\end{tabular}
\end{table}

\begin{figure}[!t]
    \centering
    \includegraphics[width=\linewidth]{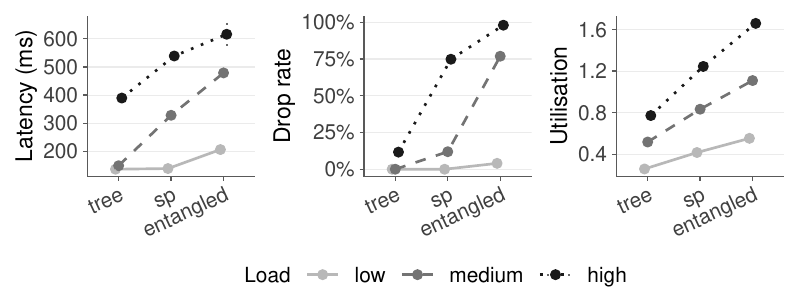}
    \caption{Structure experiment: operational consequences of dependency topology $\times$ load. Latency, drop rate, and utilisation all worsen as the topology grows tree~$\to$~SP~$\to$~entangled and as load rises, with the entangled topology collapsing under high load. The market-clearing and price-volatility ($\sigma_p$) story is consolidated in the headline grid (\cref{tab:headline-grid}).}
    \label{fig:exp1-topology}
\end{figure}

\phantomsection\label{subsec:exp1-topology}%
\paragraph*{Structural discipline (S)} The Structure experiment varies dependency-graph topology from polymatroidal (tree, SP) to non-polymatroidal (entangled) across three load levels ($90$~runs), directly testing \cref{prop:polymatroid}; the regime column of \cref{tab:headline-grid} is its result, with entangled alone collapsing under load (a $98\%$ drop rate and $616$~ms median latency at $\lambda{=}1.5$). A linear chain is a degenerate single-leaf tree and is omitted from the grid. Because $\sigma_p$ is meaningful only where the market clears, the cross-topology volatility ordering is load-dependent rather than fixed: SP is most volatile at high load (contended but still trading), whereas entangled, once saturated, trades too little for its CV to register and is instead most volatile at \emph{medium} load. A scaling stress test (the Scaling experiment, supplementary material) reinforces the structural reading: the topology-dependent ``scaling ceiling'' is absent for tree topologies within the tested range (\mbox{$N=10$--$60$}), occurs around $N=40$ for SP, and appears by \mbox{$N=20$--$30$} for entangled DAGs.

\phantomsection\label{subsec:exp4-scalability}%
\paragraph*{Hybrid architecture (H)} The Architecture experiment tests the encapsulation prediction of \cref{prop:encapsulation} by incrementally adding the hybrid architecture's sub-components. Beyond the per-topology reductions in \cref{tab:headline-grid} (na\"{i}ve $\sigma_p$ up to ${\approx}\,0.55$ for SP/entangled at high load, reduced to ${\leq}\,{\approx}\,0.06$), a sweep over agent population, load, and topology puts the reduction at a \emph{median of $89\%$ ($83$--$96\%$) across the volatile cells} (EMA-smoothing-matched, $85\%$); an EMA-only ablation (efficiency~$=1.0$) confirms EMA smoothing accounts for the majority of the gain, while latency and welfare improvements depend on the efficiency factor. The benefit is concentrated in the congested-but-not-saturated regime; beyond saturation no architectural intervention overcomes fundamental capacity constraints. Full per-cell volatilities and the EMA/efficiency decomposition are in the supplementary material.

\phantomsection\label{subsec:exp3-governance}%
\paragraph*{Governance (G) and component interaction} The governance ablation ($120$~runs) varies trust-gated capacity partitioning from none to strict (70/30 split, trust threshold $\ge 0.75$). Governance trades throughput for quality: for entangled/high load, strict governance reduces median latency by $31\%$ ($616$ to $425$~ms) by excluding congestion-prone allocations, but nearly halves service coverage. It also \emph{induces} price volatility even in structurally stable topologies, because partitioning creates smaller sub-pools that saturate sooner; an architecture$\times$governance interaction experiment ($240$~runs) shows the hybrid architecture substantially mitigates this (SP/medium: $\sigma_p$ from $0.198$ to $0.013$, a $93\%$ reduction). Formal synergy testing and the full three-tier governance scale are in the supplementary material.

\phantomsection\label{subsec:exp6-mechanism}%
\paragraph*{Market mechanism (M)} The mechanism ablation ($480$~runs) compares four allocation rules: random feasible packing, earliest-deadline-first (EDF), value-greedy ($\mathbb{E}[V]$, no prices), and the full market. The market discovers positive prices, and under contention adds welfare over mechanism-free baselines (entangled/medium $6.98$~a.u.\ vs.\ $4.95$ random ($1.41\times$) and $5.53$ value-greedy ($1.26\times$); SP/high $2.43\times$ over random); in slack cells it $\approx$ties value-greedy at prices near the reserve floor, so the mechanism matters only under contention. These welfare gains are modest, so the market's contribution under truthful bidding is primarily \emph{incentive-theoretic} rather than informational: its decisive value is realised under strategic behaviour, tested next. (A secondary finding, that value-optimal allocation can be welfare-suboptimal on SP via demand concentration, is detailed in the supplementary material.)

\subsection{Incentive Compatibility and a Real Agentic Workload}\label{subsec:beyond-ablation}
The six experiments above ablate the four design components under truthful bidding. Two further experiments go further: the first exercises the incentive guarantee directly under strategic misreporting, and the second checks that the framework's structure matches that of an actual agentic service.

\phantomsection\label{subsec:exp7-dsic}%
\paragraph*{Incentive compatibility under strategic agents (DSIC)} The truthful-agent ablation cannot exhibit the incentive guarantee of \cref{prop:mechanism}. This experiment tests it directly: we run the VCG mechanism with Clarke-pivot payments (each agent pays the externality its presence imposes on the others, which makes truthful bidding optimal)~\cite{ausubel2005vickrey} in a binding-capacity regime on the polymatroidal (tree/SP) topologies, and let one agent at a time \emph{misreport} its valuations by a shading factor $\alpha$ (others truthful). We measure each agent's \emph{regret}, the utility it forgoes relative to truthful reporting, evaluated at its true valuation net of payment, averaged over agents and rounds ($10$~seeds, $30$~rounds). Truthful reporting ($\alpha=1$) is the empirical best response: regret is zero at $\alpha=1$ by construction and \emph{strictly positive for every misreport}, growing with the magnitude of misreport. On series--parallel topology at the most binding load, mean regret rises from $0$ at truthful to $0.106$ at $\alpha{=}0.5$ and $0.040$ at $\alpha{=}1.3$ (SE ${<}\,0.004$), so no misreport---under- or over-stating value---improves an agent's outcome. This confirms the DSIC property operationally, closing the gap between the theoretical guarantee and the simulation. The test concerns \emph{agent}-side incentive compatibility under faithful mechanism execution; whether a self-interested \emph{operator} runs the mechanism credibly is an orthogonal question, addressed in companion work~\cite{loven2026trilemma}. (The t\^{a}tonnement proxy is not strategy-proof and is not used for this test; full per-cell regret curves are in the supplementary material.)

\phantomsection\label{subsec:exp8-agentic}%
\paragraph*{A real agentic workload} To check that the framework's structure is the structure of an \emph{actual} agentic service, we instrument a real multi-step LLM tool-use agent---plan, two parallel tool calls, then aggregate---on a local model, reading its service-dependency DAG and per-tier demand directly from the execution: the call graph is series--parallel, and measured token counts give device/edge/cloud demand weights of $1.11/1.0/2.25$ (the aggregation stage is heaviest). Driving the allocation with this measured workload (the demands are real; the market is simulated), both predictions hold: the integrator reduces the per-tier price volatility present under na\"{i}ve allocation (from $\sigma_p{=}0.10$ to $0.077$, a $23\%$ reduction at high load; the light workload contends only weakly), and VCG is strategy-proof on the agentic DAG (mean regret $0$ at truthful, rising to $0.080$ at $\alpha{=}0.5$ and $0.044$ at $\alpha{=}1.3$). The structural and incentive results are therefore not artefacts of synthetic topologies. The agent harness and measured profile are released with the code.

\section{Discussion and Future Directions}
\label{sec:discussion}

The evaluation results have several implications for the design and operation of real-time AI service systems.

\paragraph*{Safe integration of agentic behaviour} The hybrid architecture lets autonomous agents negotiate without destabilising the system when the market-facing abstractions are encapsulated. Two effects are distinct: \cref{prop:encapsulation} gives feasibility and DSIC of the quotient market (silent on price dynamics), whereas the median price-volatility reduction (\cref{subsec:exp4-scalability}) is an empirical price-discovery property of the integrator's EMA-smoothed slice posting. It is contended-regime-specific: the integrator absorbs volatility only where there is volatility, and in slack markets its own dynamics add only small bounded CV. The efficiency factor is not an independent tuning knob but an architectural consequence: an integrator managing a sub-DAG internally schedules tasks more efficiently than per-tier allocation exposes, a natural demand-reduction effect relevant to dynamic service slicing in 5G/6G systems \cite{foukas2017network,loven2023semantic}.

\paragraph*{Governance as a structural component} The governance experiments (\cref{subsec:exp3-governance}) reveal that trust and policy constraints actively reshape the system's operating regime rather than merely filtering an otherwise fixed optimisation. By construction, all allocated tasks are policy-compliant (compliance = $100\%$), but stricter governance trades throughput for quality: fewer tasks are served, while those that are served experience materially better latency (entangled/high-load figures in \cref{subsec:exp3-governance}).

The current model (capacity partitioning with trust-gated access) is one point in a broader design space; richer models with data-locality restrictions, cross-domain policy negotiation, and dynamic trust evolution would likely amplify the observed trade-offs (supplementary material), and designers must navigate the efficiency--compliance trade-off, potentially adjusting strictness dynamically to topology and load~\cite{sun2019trust,huang2022trust}.

\paragraph*{Interplay of topology, governance, and load} The three factors interact non-trivially: governance effects are most pronounced for entangled DAGs near capacity and less visible for tree-like topologies at moderate load, while the hybrid architecture's price-stabilisation benefit is concentrated in the congested-but-not-saturated regime (\cref{subsec:exp4-scalability}). This regime-dependence also bounds the integrator's tolerable encapsulation overhead: on series--parallel topology under load the hybrid path leads the naive baseline by ${\approx}\,140$~ms of median latency ($301$ vs.\ $441$~ms), so the additive protocol-translation cost $\delta_{\mathrm{enc}}$ has a wide budget---a direct sweep ($\delta_{\mathrm{enc}} \in [0,50]$~ms; supplementary material) confirms these levels stay well within it, at a cost of under three coverage points, while at saturation there is no advantage to erode---a concrete deployment budget for the cross-layer abstraction. Management policies should therefore adapt governance strictness and encapsulation to current topology and load rather than applying uniform rules.

\paragraph*{Robustness of findings} The findings vary in sensitivity to model specifics. \emph{Structural findings}---topology governs market stability (\cref{subsec:exp1-topology}), encapsulation restores tractability (\cref{subsec:exp4-scalability}), and, under truthful bidding, the market matches the centralised value-greedy baseline on polymatroidal topologies (\cref{subsec:exp6-mechanism})---follow from the polymatroidal structure of \cref{prop:polymatroid} and hold across all tested configurations; they are likely robust to model details. \emph{Quantitative findings} (exact volatility-reduction percentages, latency and welfare magnitudes, the SP congestion-reversal conditions) depend on the parameterisation of arrival rates, deadlines, tier capacities, and the efficiency-factor model, and should be read as indicative of direction and relative magnitude rather than precise predictions for production deployments.

\paragraph*{Broader implications for real-time AI service markets} The strong topology-dependence of performance (\cref{subsec:exp1-topology}) makes the structural organisation of AI service pipelines a first-class, \emph{design-time} concern: encapsulation must be designed in, not retrofitted onto a deployed pipeline. Emerging standards for AI service composition and network slicing \cite{foukas2017network} and nascent agent-transaction protocols \cite{google2025ap2} should therefore treat dependency topology as a determinant of whether stable market-based coordination is achievable---the very instabilities the red-teaming of Shapira et al.~\cite{shapira2026agents} (\cref{sec:introduction}) documents and that our DSIC and polymatroidal guarantees preclude.

\paragraph*{Mechanism value under truthful bidding} The mechanism ablation (\cref{subsec:exp6-mechanism,subsec:exp7-dsic}) shows the market's value is chiefly \emph{incentive-theoretic}, with a localized welfare gain where congestion-aware pricing corrects value-greedy's demand concentration (SP under load). A practical corollary: deployments should add congestion-aware rules even where DSIC holds.

\paragraph*{Theoretical positioning} The mechanism-design results instantiate classical polymatroid theory, gross substitutes, and VCG in a new domain; the contribution is the problem formulation and the DAG-to-polymatroid bridge, not the machinery, positioned against Amin~\textit{et~al.}~\cite{amin2026market} in \cref{sec:introduction} and \cref{tab:novelty}. A credibility/collusion/epistemic layer over the integrator-mediated market is companion work~\cite{loven2026trilemma} (see supplementary for extended discussion).

\paragraph*{Limitations} Several scope limits remain. The per-task resource bundle is identical across tasks, so value-greedy is welfare-optimal for the static per-epoch allocation (Edmonds' theorem; under dynamic congestion it can still be welfare-suboptimal, \cref{subsec:exp6-mechanism}) and the incentive result is topology-independent (the topology dependence in this work is on price stability, not incentive compatibility); of the ten modelling assumptions in the supplementary material, time-varying capacities (assumption~v) is not exercised; agents with a \emph{shared} per-agent budget (e.g.\ a token or KV-cache cap spanning their tasks) violate the additive-separability hypothesis of \cref{lem:gs-valuations} and fall outside the per-task DSIC guarantee; and the stylised DAG topologies capture qualitative structural effects without the full complexity of production AI pipelines. Agents bid using power-law congestion estimates while execution follows a queueing model on the DAG critical path; this intentional mismatch models realistic information asymmetry, partially compensated by online success learning.

\paragraph*{Future directions} Beyond strategic behaviour, several directions follow. The structural analysis could extend to \emph{dynamically evolving pipelines} that recompose at runtime, raising open questions about stability under structural change, and to \emph{richer governance models} with auditability, privacy guarantees, and cross-domain coordination via expressive policy languages. Encapsulation shifts the complexity of non-polymatroidal sub-DAGs into the integrators: the efficiency factors ($0.75$--$0.85$) quantify this cost under a simple scheduling model, but optimal \emph{internal management strategies} (centralised optimisation, learning-based scheduling, adaptive over-provisioning) and \emph{slicing/encapsulation boundaries}, including semantics-aware variants \cite{loven2023semantic}, remain open. The framework also treats integrators as non-strategic infrastructure; modelling them as self-interested agents that may misreport capacity or extract rents, and asking whether inter-integrator competition or regulation prevents monopoly pricing, is an open mechanism-design question---one practical avenue being smart-contract integrator logic that makes encapsulation and pricing transparent and immutable by construction. Finally, \emph{realistic testbed} deployment and a broader trace-driven evaluation across agentic benchmarks would expose interoperability, overheads, and operational complexity, validating the findings of this work.

\section{Conclusion}
\label{sec:conclusion}

We presented a framework for managing real-time AI services across the device--edge--cloud continuum under agentic computing. When service-dependency graphs have tree or series--parallel structure, the feasible allocation space is polymatroidal (\cref{prop:polymatroid}), latency-aware valuations satisfy gross substitutes under slice encapsulation (\cref{lem:gs-valuations}), and efficient, incentive-compatible mechanisms exist per epoch (\cref{prop:mechanism}); for arbitrary graphs these properties can break down, but integrator encapsulation restores polymatroidal structure at the agent-facing interface (\cref{prop:encapsulation}). The resulting hybrid architecture exposes tractable slice interfaces to autonomous agents. A systematic ablation study confirmed these structural predictions and showed that the decentralised market replicates centralised-quality allocation under truthful bidding, without a single controlling authority; a strategic-bidding experiment validated the DSIC guarantee (\cref{prop:mechanism}) empirically (misreporting agents gain nothing over truthful reporting), and driving one experiment with the measured per-stage profile of a real multi-step, tool-using LLM agent grounds these results in an actual agentic workload rather than a purely synthetic one.

Several challenges remain: the scalar slice abstraction excludes multi-dimensional bottlenecks and cross-node coupled governance (e.g.\ data-residency rules), and dynamically evolving pipelines, strategic integrators, and multi-domain testbed deployment are natural next steps (\cref{sec:discussion}). As real-time AI service ecosystems grow in scale and criticality, the structural properties identified here---polymatroidal feasibility, gross-substitutes valuations, and encapsulation-based tractability restoration---become prerequisites for reliable market-based coordination, giving platform designers formal criteria for distinguishing service architectures that can sustain stable autonomous coordination from those that structurally cannot.

\appendices
\setcounter{definition}{0}
\setcounter{proposition}{0}
\setcounter{lemma}{0}
\setcounter{theorem}{0}
\section{Background Definitions}
\label{app:background}

The formal results in the main paper rely on three standard concepts from combinatorial optimisation and economic theory. We collect their definitions here for self-containedness; readers familiar with these concepts may proceed directly to the proofs. Notation is provided in \cref{tab:notation-supp}.

\begin{table}[ht]
\centering
\caption{Summary of Notation}\label{tab:notation-supp}
\renewcommand{\arraystretch}{1.15}
\begin{tabular}{ll}
\toprule
\textbf{Symbol} & \textbf{Meaning} \\
\midrule
$\mathcal{A}$ & Set of agents (consumers, providers, integrators, AI agents) \\
$t$ & Discrete time index, $t = 1,2,\dots$ \\
$\theta_i(t)$ & Type of agent $i$ at time $t$ \\
$\Theta_i$ & Type space for agent $i$ \\
$\ell_i$ & Physical location or region of agent $i$ \\
$B_i(t)$ & Budget or credit of agent $i$ \\
$\mathcal{T}_i(t)$ & Tasks issued or served by agent $i$ \\
$\kappa_i$ & Capabilities of agent $i$ \\
$\rho_i$ & Role of agent $i$ (consumer, provider, integrator) \\
$\phi_i(t)$ & Governance attributes (trust, compliance, policy flags) \\
\midrule
$\eta_k(t)$ & Parameters of task $k$ (quality, deadline, sensitivity) \\
$q_k$ & Quality or workload of task $k$ \\
$d_k^{\max}$ & Maximum acceptable latency (deadline) \\
$\lambda_k$ & Latency sensitivity parameter \\
$T_{ik}$ & Latency experienced by task $k$ served by agent $i$ \\
$v_{ik}(q)$ & Base value from completing task $k$ at quality $q$ \\
$\delta_{ik}(T)$ & Latency discount factor \\
$V_{ik}(T,q)$ & Latency-aware task value for agent $i$ \\
\midrule
$\mathcal{R}$ & Set of service types \\
$C_{r,\ell}(t)$ & Capacity of service $r$ at location $\ell$ \\
$G_{\mathrm{res}}$ & Service--dependency DAG \\
$L(G)$ & Set of leaf services in DAG $G$ \\
$f$ & Polymatroid rank function of $\mathcal{X}_{\mathrm{res}}$ \\
\midrule
$s_t$ & System state at time $t$ \\
$\Lambda(t)$ & Network latency conditions across the continuum \\
$D(t)$ & Exogenous demand or load \\
$R(t)$ & Reputation or trust states of all agents \\
$r_i(t)$ & Reputation score of agent $i$, $r_i(t) \in [0,1]$ \\
$G(t)$ & Governance/policy state at time $t$ \\
\midrule
$x_t$ & Allocation at time $t$ \\
$P_i(t)$ & Net payment by agent $i$ ($>0$: pays; $<0$: receives) \\
$\mathcal{X}_t$ & Feasible allocation set \\
$\mathcal{X}_{\mathrm{res}}$ & Service- and DAG-based feasibility constraints \\
$\mathcal{X}_{\mathrm{gov}}$ & Governance feasibility constraints \\
\midrule
$m_i(t)$ & Message sent by agent $i$ \\
$\mathcal{M}_i$ & Message space for agent $i$ ($= \Theta_i$ under direct mechanisms) \\
$\mathcal{M}$ & Mechanism mapping messages to $(x_t, P(t))$ \\
\midrule
$W(t)$ & Social welfare at time $t$ \\
$\mu$ & Externality trade-off parameter ($\mu \ge 0$) \\
$u_i(t)$ & Per-period utility of agent $i$ \\
$\gamma_i$ & Risk-aversion coefficient \\
$\mathrm{cost}_j(x_t)$ & Operational cost of provider $j$ under allocation $x_t$ \\
\bottomrule
\end{tabular}
\end{table}

\subsection{Polymatroid}

\begin{definition}[Polymatroid]
\label{def:polymatroid}
Let $E$ be a finite ground set and $\rho\colon 2^E \to \mathbb{R}_{\ge 0}$ a set function that is (i)~\emph{normalised}: $\rho(\emptyset) = 0$; (ii)~\emph{monotone}: $S \subseteq T \Rightarrow \rho(S) \le \rho(T)$; and (iii)~\emph{submodular}: $\rho(S \cup \{e\}) - \rho(S) \ge \rho(T \cup \{e\}) - \rho(T)$ for all $S \subseteq T$ and $e \notin T$. The \emph{polymatroid} associated with~$\rho$ is the polytope
\[
P(\rho) = \bigl\{x \in \mathbb{R}^{E}_{\ge 0} : x(S) \le \rho(S) \;\text{for all } S \subseteq E\bigr\},
\]
where $x(S) = \sum_{e \in S} x_e$.  The function~$\rho$ is called the \emph{rank function} of the polymatroid~\cite{edmonds1970submodular,fujishige2005submodular}.
\end{definition}

\emph{Intuition.}
A polymatroid captures a system of capacity constraints with diminishing returns: the more resources already committed within a subset, the less additional throughput any new resource can contribute.  In the context of this paper, the ground set~$E$ corresponds to the leaf services consumed by agents, and the rank function~$\rho$ encodes the bottleneck capacities imposed by internal nodes of the service-dependency DAG.  When the DAG is a tree or series--parallel network, its constraint sets form a \emph{laminar} (nested) family, and the resulting feasible region is a polymatroid.  This structure is important for two reasons: (i)~Edmonds'~\cite{edmonds1970submodular} greedy algorithm maximises any separable concave objective (including social welfare) over a polymatroid in polynomial time, and (ii)~polymatroidal feasible sets are compatible with gross-substitutes valuations, jointly guaranteeing market equilibrium existence.

\subsection{Walrasian (Competitive) Equilibrium}

\begin{definition}[Walrasian Equilibrium]
\label{def:walrasian}
An allocation--price pair $(x^{*},\, p^{*})$ is a \emph{Walrasian} (or \emph{competitive}) \emph{equilibrium} if (i)~every agent maximises its utility at prices~$p^{*}$, and (ii)~aggregate demand equals aggregate supply (market clearing).
\end{definition}

\emph{Intuition.}
A Walrasian equilibrium is a vector of per-resource prices at which every agent is individually satisfied with its allocation and no resource is over- or under-demanded.  Its existence implies that fully decentralised coordination via prices alone can replicate the outcome of a benevolent central planner.  The Kelso--Crawford~\cite{kelso1982job} fixed-point argument guarantees existence whenever agents' valuations satisfy the gross-substitutes condition (defined below), which holds in this paper's setting under slice encapsulation (Lemma~1 in the main paper).

\subsection{Gross Substitutes and Dominant-Strategy Incentive Compatibility}

\begin{definition}[Gross-Substitutes (GS) Condition]
\label{def:gs}
An agent's valuation function satisfies the \emph{gross-substitutes} condition if, whenever the price of one item increases (with all other prices unchanged), the agent's demand for every other item whose price did not change weakly increases~\cite{gul1999grosssubstitutes}.
\end{definition}

\begin{definition}[Dominant-Strategy Incentive Compatibility (DSIC)]
\label{def:dsic}
A mechanism is \emph{dominant-strategy incentive compatible} (DSIC) if truthful reporting of valuations maximises each agent's utility regardless of what other agents report~\cite{ausubel2005vickrey}.
\end{definition}

\emph{Intuition.}
The GS condition captures the idea that resources are interchangeable rather than complementary: raising the price of one service slice causes an agent to substitute towards alternatives rather than to abandon the market entirely.  This rules out the ``I need both or neither'' complementarity pattern that makes combinatorial auctions intractable.  When GS holds on a polymatroidal feasible set, classical results~\cite{gul1999grosssubstitutes,kelso1982job} guarantee the existence of a Walrasian equilibrium, and the VCG mechanism implements the welfare-maximising allocation in dominant strategies (DSIC): no agent can gain by misreporting, making the market robust to strategic behaviour. Under additional single-parameter conditions (e.g., when each task maps to a unique slice type so that each bidder's private information is a scalar willingness-to-pay), the ascending clinching auction~\cite{ausubel2004ascending,goel2015polyhedral} provides an alternative DSIC implementation with the added benefit of weak budget balance.

\section{Proof Details}
\label{app:proofs}

\subsection{Proof of Proposition~1 (Polymatroidal Structure under Tree/SP DAGs)}

\begin{proposition}[Polymatroidal Structure under Tree/SP DAGs]
Let $G_{\mathrm{res}}$ be a rooted tree or two-terminal series--parallel network with node capacities $C_v > 0$, and let the feasible region be the leaf-block polytope $\{x \ge 0 : x(L_v) \le C_v\ \forall v\}$, where $L_v$ is the set of leaf services reachable through node~$v$ (the node-capacitated model of the main text, not edge-capacitated $s$--$t$ flow). Then:
\begin{enumerate}
\item[(i)] the capacity function $f$ (the rank function defined by the laminar constraint family) is submodular, monotone, and normalised ($f(\emptyset) = 0$);
\item[(ii)] $\mathcal{X}_{\mathrm{res}}$ is a polymatroid with rank function $f$; and
\item[(iii)] the base polytope $B(f) = \{x \in \mathcal{X}_{\mathrm{res}} : x(L(G)) = f(L(G))\}$ is the set of tight allocations.
\end{enumerate}
\end{proposition}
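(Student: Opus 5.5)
The plan is to reduce everything to a single structural fact: the constraint family $\mathcal{L}=\{L_v\}$ is laminar. I would then build $f$ recursively along the laminar tree, so that submodularity follows from two elementary closure properties rather than from a direct attack on the antichain formula of \cref{def:feasibility-region}. Two normalisations come first. Leaves carry no constraint of their own, or else a singleton block $\{l\}$ with capacity $C_l$. If several nodes share the same block $L_v$, I keep only the minimum capacity, since the other constraints are redundant. I also assume the root, or the source in the SP case, is internal, so that $L(G)$ itself is a member of $\mathcal{L}$ and $f$ is finite.

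\emph{Step 1 (laminarity).} For a rooted tree this is immediate. For any two nodes, either one is an ancestor of the other, and their leaf sets are nested, or their subtrees are disjoint. For a two-terminal SP network I would not use raw reachability. Under the leaf-block semantics, each internal node is attached to a node of the SP decomposition tree, and $L_v$ is the leaf set of that decomposition subtree. Laminarity is then inherited from the decomposition tree. I would check the two composition cases by induction. In a parallel composition the two branches have disjoint leaf sets. In a series composition the blocks of the two factors are unions of their own sub-blocks, so no two blocks cross. I expect this SP step to be the main obstacle. Naive reachability after a join can make a node see leaves of both parallel branches and so produce crossing sets. The proof must therefore pin down the decomposition-based definition precisely and verify that it matches the intended capacity semantics.

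\emph{Step 2 (recursive rank function).} Over the laminar tree of $\mathcal{L}$, define
\[
g_v(S)=\min\Bigl(C_v,\ \textstyle\sum_{c\in\mathrm{ch}(v)} g_c(S\cap L_c) + |S\cap (L_v\setminus \textstyle\bigcup_c L_c)|\cdot\infty\Bigr),
\]
where $\mathrm{ch}(v)$ are the children of $v$. In the leaf case, $g_l(S)$ equals $C_l$ if $l\in S$ and $0$ otherwise, with $C_l=\infty$ when there is no leaf bound. I would show by induction that $g_{\mathrm{root}}$ equals the antichain minimum $f$. At each node one either pays $C_v$, which is the antichain $\{v\}$, or recurses into the children and covers them with disjoint antichains. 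The minimum over these two choices is exactly the minimum over antichains covering $S\cap L_v$. Normalisation and monotonicity are then immediate. Submodularity also goes by induction using two facts. First, a sum of submodular functions on disjoint ground sets is submodular. Second, the truncation $\min(c,h)$ of a monotone submodular $h$ by a constant $c$ is submodular, which is a standard case check on whether the truncation binds. This proves (i).

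\emph{Step 3 (polymatroid and base).} For (ii) I show $\mathcal{X}_{\mathrm{res}}=P(f)$. Take $x\in\mathcal{X}_{\mathrm{res}}$ and any antichain $A$ covering $S$. Then $x(S)\le\sum_{v\in A}x(L_v)\le\sum_{v\in A}C_v$, and minimising over $A$ gives $x(S)\le f(S)$. Conversely, $f(L_v)\le C_v$ via the antichain $\{v\}$, so every constraint of $\mathcal{X}_{\mathrm{res}}$ is among those of $P(f)$. For (iii) I invoke the standard polymatroid fact that $\max\{x(E):x\in P(f)\}=f(E)$, attained by Edmonds' greedy algorithm. Hence $B(f)$ is exactly the set of feasible allocations attaining the maximum leaf-level throughput, which is the set of tight allocations.
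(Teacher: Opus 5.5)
Your proposal is correct, and it differs from the paper's proof in how the two substantive steps are carried out. The paper proves laminarity for trees by subtree nesting, and for SP networks by induction over series and parallel composition, with reachability-defined blocks maintained through the invariants (I1)--(I3). It then obtains submodularity, part~(ii) and part~(iii) by citing the standard laminar-family and polymatroid results of Edmonds and Fujishige, with only an informal marginal-contribution remark in support.

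You do three things differently:
\begin{itemize}
\item You take the SP blocks to be leaf sets of decomposition subtrees, so laminarity is inherited directly from the decomposition tree.
\item You build $f$ recursively over the laminar tree, check that the recursion reproduces the antichain minimum of the main-text definition, and derive submodularity from two closure facts: sums over disjoint ground sets, and truncation $\min(c,h)$ of a monotone submodular $h$.
\item You prove $\mathcal{X}_{\mathrm{res}}=P(f)$ by two explicit inclusions instead of by citation.
\end{itemize}
This gives a self-contained argument at exactly the points where the paper relies on citation or intuition.

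Your caution about literal reachability is justified. Suppose leaf services sit inside both branches of a parallel composition and also downstream of the join. Then a node upstream of a leaf in one branch and a node upstream of a leaf in the other share the downstream leaves but not each other's branch leaf, so their blocks cross. The paper's own parallel step claims the two branches' leaf sets are disjoint, yet both contain the shared sink, which its base case designates as a leaf. Under those conventions every block reduces to the sink, so laminarity holds only trivially.

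The cost of your route is that decomposition-based blocks depart from the statement's wording ``reachable through~$v$''. This is consistent with the main-text definition, which says SP blocks are defined via the recursive decomposition. You should still fix the vertex-to-decomposition-node attachment explicitly; any rule preserves laminarity, since every block is then a decomposition-subtree leaf set. You should also say how that attachment realises the intended node capacities $C_v$.
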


\begin{proof}
For tree-structured DAGs, the constraint sets $\{L_v\}$ form a laminar family by the nesting property of subtrees. Edmonds~\cite{edmonds1970submodular} established that the polytope $\{x \ge 0 : x(L_v) \le C_v\}$ defined by a laminar family is a polymatroid. Submodularity of~$f$ follows from the structure of laminar covers: for any $S' \supseteq S$ and element $l \notin S'$, the marginal contribution $f(S' \cup \{l\}) - f(S')$ is bounded by the capacity of the smallest constraint set containing~$l$, which can only become a tighter bottleneck as $S'$ grows~\cite{fujishige2005submodular}. Monotonicity is immediate from non-negative capacities; $f(\emptyset) = 0$ is trivial.

For series--parallel (SP) networks, we use the standard recursive definition of two-terminal SP graphs (see, e.g., \cite{fujishige2005submodular}). An SP graph $G = (V, E)$ with designated source~$s$ and sink~$t$ is built from primitive edges via series and parallel composition. We adopt the following structural conventions, which are maintained as invariants throughout the induction:
\begin{enumerate}
\item[\textbf{(I1)}] \emph{Leaf semantics.} The leaf set $L(G)$ consists of all sink-adjacent terminal nodes that correspond to services consumed by agents. Every internal (non-leaf) node~$v$ has a well-defined leaf block $L_v = \{l \in L(G) : l \text{ is reachable from } v\}$.
\item[\textbf{(I2)}] \emph{Disjoint components.} Before composition, the two component graphs $G_1, G_2$ have disjoint node sets (except for the identified glue node in series composition).
\item[\textbf{(I3)}] \emph{Glue-node semantics.} In series composition $G = G_1 \cdot G_2$ (identifying $t_1 = s_2$), the glue node $t_1 = s_2$ becomes internal in~$G$, and its leaf block in~$G$ is the union of the leaves reachable through~$G_2$ and any leaves reachable through~$G_1$ via the glue node. Specifically, for any node~$v$ in~$G_1$ that is an ancestor of~$t_1$, the composed leaf block is $L_v = L_v^{(1)} \cup L(G_2)$, where $L_v^{(1)} = \{l \in L(G) \cap V(G_1) : l \text{ reachable from } v\}$ is the restriction of~$v$'s reachable leaves to those that remain leaves in~$G$ (excluding~$t_1$, which becomes internal).
\end{enumerate}

\emph{Induction hypothesis:} for any SP graph built from fewer than $k$ primitive edges satisfying (I1)--(I3), the family $\{L_v : v \text{ internal}\}$ is laminar (every two members are nested or disjoint).

\emph{Base case.} A single edge $(s,t)$ with capacity $C_s$: here $s$ is the sole internal node and $t$ is the sole leaf, so the family $\mathcal{L} = \{L_s\} = \{\{t\}\}$ is indexed by the single internal node~$s$ and is trivially laminar. Invariants (I1)--(I3) hold vacuously.

\emph{Series composition.} Let $G_1$ (source $s_1$, sink $t_1$) and $G_2$ (source $s_2$, sink $t_2$) have laminar families $\mathcal{L}_1$ and $\mathcal{L}_2$ over disjoint leaf sets $L_1$ and $L_2$, respectively (by invariant~(I2)). The composed graph $G = G_1 \cdot G_2$ identifies $t_1 = s_2$, with source $s_1$ and sink $t_2$. Consider any two members $L_v, L_w$ in the composed family~$\mathcal{L}$. There are three cases: (a)~both $v, w$ lie in $G_2$, so $L_v, L_w \in \mathcal{L}_2$ and laminarity follows by induction; (b)~$v$ lies in $G_1$ and $w$ in $G_2$: if $v$ is an ancestor of $t_1$ in $G_1$, then $L_v = L_v^{(1)} \cup L_2$ where $L_v^{(1)} \subseteq L_1$, and $L_w \subseteq L_2$, so $L_w \subset L_v$ (nested); if $v$ is not an ancestor of $t_1$, then $L_v \subseteq L_1$ and $L_w \subseteq L_2$, hence disjoint; (c)~both $v, w$ lie in $G_1$: every internal node of a two-terminal SP graph lies on a source--sink path, so all internal nodes of~$G_1$ reach~$t_1$; thus both $v$ and $w$ are ancestors of~$t_1$. We show their composed leaf blocks are nested. The sink of any SP graph produced by this construction is always a leaf (base case: $t$~is the sole leaf; parallel composition: the shared sink remains terminal; series composition: the new sink~$t_2$ is a leaf of~$G_2$ by induction). Since both $v$ and $w$ reach~$t_1$ and $t_1 \in L(G_1)$, the leaf blocks of~$v$ and~$w$ \emph{within~$G_1$'s own leaf set} (which includes~$t_1$) both contain~$t_1$. Hence these~$G_1$-leaf-blocks are not disjoint; laminarity of~$\mathcal{L}_1$ forces nesting, say the~$G_1$-leaf-block of~$v$ is contained in that of~$w$. Removing~$t_1$ preserves the inclusion: $L_v^{(1)} \subseteq L_w^{(1)}$. Therefore $L_v = L_v^{(1)} \cup L_2 \subseteq L_w^{(1)} \cup L_2 = L_w$ (nested). Hence $\mathcal{L}$ is laminar.

\emph{Invariant preservation (series).} (I1):~the leaf set of~$G$ is $L_1 \cup L_2$; the glue node $t_1 = s_2$ is internal, and every internal node~$v$ has a well-defined $L_v$ as constructed above. (I2):~the only shared node is the glue node $t_1 = s_2$; all other nodes are disjoint by induction. (I3):~for any ancestor~$v$ of the glue node in~$G_1$, the composed leaf block $L_v = L_v^{(1)} \cup L(G_2)$ is exactly the definition in~(I3). For nodes in~$G_2$ or non-ancestor nodes in~$G_1$, leaf blocks are unchanged. Hence (I1)--(I3) are preserved.

\emph{Parallel composition.} Let $G_1$ and $G_2$ share source $s$ and sink $t$, with disjoint internal nodes and laminar families $\mathcal{L}_1, \mathcal{L}_2$ over disjoint leaf sets $L_1, L_2$ (by invariant~(I2)). The composed family is $\mathcal{L} = \mathcal{L}_1 \cup \mathcal{L}_2 \cup \{L_1 \cup L_2\}$, where $L_1 \cup L_2 = L_s$ is the leaf set of source~$s$. Members within $\mathcal{L}_1$ (resp.\ $\mathcal{L}_2$) are laminar by induction; any $L_v \in \mathcal{L}_1$ and $L_w \in \mathcal{L}_2$ satisfy $L_v \subseteq L_1$ and $L_w \subseteq L_2$, hence disjoint; and $L_s$ contains all members as subsets. Hence $\mathcal{L}$ is laminar.

\emph{Invariant preservation (parallel).} (I1):~the leaf set of~$G$ is $L_1 \cup L_2$; the shared source~$s$ has leaf block $L_s = L_1 \cup L_2$, and all other internal nodes retain their original leaf blocks. (I2):~$G_1$ and~$G_2$ share only~$s$ and~$t$; internal nodes are disjoint by induction. (I3):~no glue-node identification occurs (the shared source and sink are already present in both components), so~(I3) applies vacuously. Hence (I1)--(I3) are preserved.

Since $\mathcal{L}$ is laminar in both cases, the rank function $f$ is submodular by the standard laminar-family result~\cite{edmonds1970submodular,fujishige2005submodular}, and the composed feasible region is a polymatroid. Parts~(ii) and~(iii) are standard consequences of the polymatroid definition~\cite{fujishige2005submodular}.
\end{proof}

\subsection{Proof of Lemma~1 (GS Valuations under Slice Encapsulation)}

\begin{lemma}[GS Valuations under Slice Encapsulation; per-task-bidder framing]
Suppose integrators encapsulate multi-resource service paths into composite \emph{slices}, where each slice~$s$ is a discrete, indivisible unit with fixed internal routing, deterministic latency $T_s$, and quality~$q_s$. Suppose further that: (a)~each task requires at most one slice (unit demand per task); (b)~an agent's tasks are \emph{additively separable}---its valuation decomposes as a sum across tasks with no cross-task complementarity, no shared per-agent budget, and no shared feasibility coupling beyond the polymatroidal supply set $\mathcal{X}_{\mathrm{res}}$; and (c)~within each mechanism epoch, slice attributes $(T_s, q_s)$ are determined by the current system state~$s_t$ and remain fixed throughout the allocation round. Then the multi-bidder welfare-maximisation problem that treats each \emph{(agent, task)} pair as an independent unit-demand bidder over slices is a gross-substitutes problem on the polymatroidal supply set: each task's induced valuation is unit-demand and hence satisfies the GS condition.
\end{lemma}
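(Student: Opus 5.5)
The proof proceeds by first fixing the item set and each bidder's valuation, then verifying the gross-substitutes condition directly for unit-demand valuations, and finally lifting to the multi-bidder problem on the polymatroidal supply.

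\emph{Step 1 (items and valuations are well-defined).} By condition~(c), within an epoch every slice $s$ carries fixed attributes $(T_s,q_s)$. For the (agent, task) pair $(i,k)$ and any set $S$ of slices, define
\[
w_{ik}(S)=\max_{s\in S} V_{ik}(T_s,q_s),
\]
with $w_{ik}(\emptyset)=0$. This is a fixed set function that does not depend on prices or on other bidders' allocations. Condition~(a) makes the maximum, not a sum, the right aggregation: a task can use at most one slice, so extra slices are worthless to it. By free disposal the function is monotone and normalised.

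\emph{Step 2 (unit demand implies GS).} At any price vector $p$, the demand correspondence of $w_{ik}$ consists of the singletons $\{s\}$ that maximise $V_{ik}(T_s,q_s)-p_s$, together with $\emptyset$ if that maximum is nonpositive. Suppose the price of some slice $s'$ rises while all other prices stay fixed. Every other slice's surplus is unchanged, and only $s'$ becomes less attractive. Hence any demanded $\{s\}$ with $s\ne s'$ remains demanded, which is exactly the gross-substitutes condition of \cref{def:gs}. Here I would simply invoke Gul--Stacchetti, Prop.~2, after writing out this short argument for self-containment.

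\emph{Step 3 (lift to the multi-bidder problem).} By condition~(b), agent $i$'s valuation is $v_i=\sum_k w_{ik}$, and no budget or other constraint couples its tasks. Maximising $\sum_i v_i$ over allocations is therefore the same as maximising $\sum_{(i,k)} w_{ik}$ with each pair treated as a separate bidder. The only remaining constraint is that the slice counts lie in $\mathcal{X}_{\mathrm{res}}$ (intersected with $\mathcal{X}_{\mathrm{gov}}$). This set is polymatroidal by \cref{prop:polymatroid} together with closure of polymatroids under coordinate truncation.

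\emph{The main obstacle} is interpreting the supply side. GS is a property of valuations over a fixed item set, whereas here supply is a polymatroid over slice multiplicities rather than a list of distinct unit items. I would handle this in two moves. First, integrality of $C_v$ lets me expand each slice type into unit copies. Second, I treat the polymatroid constraint as a seller (or supply) side with an $M^\natural$-concave indicator, so that the welfare problem becomes an $M^\natural$-concave maximisation in the sense of Murota. Under this reading each bidder's valuation is a function of the copies it receives, and unit demand over copies is still GS by Step~2. Finally, I would note that the aggregate $v_i$ is OXS, since it is a sum of unit-demand functions, and so lies in GS; the per-task framing is needed only to give independent bidders for \cref{prop:mechanism}.
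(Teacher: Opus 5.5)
Your proposal is correct and follows the paper's proof essentially step for step. As in the paper, each per-task valuation is unit-demand and hence GS by Gul--Stacchetti Prop.~2; condition~(b) turns each agent into a wrapper of independent per-task pseudo-bidders whose aggregate is OXS; and the polymatroid enters only as the supply-side feasibility set. Where you differ is that you spell out the GS check slightly more explicitly, and you bring into the lemma the supply-side reading (unit copies from integral capacities, M$^{\natural}$-concave indicator of the integer polymatroid) that the paper defers to the proof of \cref{prop:mechanism}; also read your $v_i=\sum_k w_{ik}$ as a sum over the disjoint sub-bundles assigned to each task rather than a pointwise sum on one bundle, which is what makes the OXS claim correct.
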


\begin{proof}
Under encapsulation, agent~$i$'s valuation for assigning task~$k$ to slice~$s$ is $V_{ik}(T_s, q_s)$, where $(T_s, q_s)$ are fixed within the current epoch by condition~(c). Since each task requires exactly one slice, the bidder representing task $(i,k)$ selects whichever slice maximises $V_{ik}(T_s, q_s) - p_s$, where $p_s$ is the slice price. This defines a unit-demand valuation over slices. By Proposition~2 of Gul and Stacchetti~\cite{gul1999grosssubstitutes}, unit-demand valuations satisfy GS: raising the price of one slice causes the bidder either to retain it or to switch to another, but never to drop demand for an unrelated slice whose price has not changed.

By additive separability (condition~(b)) and the absence of shared agent-side budgets or cross-task feasibility coupling, the agent's tasks act as independent unit-demand mechanism participants. From the mechanism's perspective, agent~$i$ is therefore a wrapper for $|\mathcal{T}_i(t)|$ unit-demand pseudo-bidders, one per task; the welfare-maximisation problem reduces to a multi-bidder GS problem on the polymatroidal supply set $\mathcal{X}_{\mathrm{res}}$, which is the form required by the mechanism-design results of Proposition~2 (and is the canonical setting of the Kelso--Crawford~\cite{kelso1982job} ascending auction and Murota's discrete convex analysis framework~\cite{murota2003discrete} for GS valuations on integer polymatroid supply). We emphasise that the GS property is established at the per-task-bidder level rather than at the agent-level aggregate. A single agent's aggregate set-valuation over slice bundles, obtained by adding $|\mathcal{T}_i(t)|$ unit-demand valuations under additive task separability, is an OXS (OR-of-XOR-of-singletons) valuation~\cite{lehmann2006combinatorial}; OXS is a strict subclass of GS, but the GS class is not closed under unrestricted summation of multiple bidders' valuations into a single set-valuation, so the aggregate need not be handled as a single GS bidder. The per-task-bidder reduction sidesteps this issue by lifting the additive structure into the mechanism's bidder set rather than collapsing it into a single agent-level valuation.

Shared agent-side budgets, per-agent latency coupling, or shared token caps across tasks would break the additive-separability premise of condition~(b) and, with it, the per-task-bidder reduction. Slice capacities (the number of available units of each type) are determined by the polymatroid $\mathcal{X}_{\mathrm{res}}$; this is a feasibility condition relevant to Proposition~2 rather than a requirement for GS itself.
\end{proof}

\subsection{Proof Sketch of Proposition~2 (Efficient Mechanism Design under Structured DAGs)}

\emph{Note.} The following proof sketch reduces each part to standard results from auction theory and discrete convex analysis; the underlying theorems are cited explicitly at each step rather than re-derived.

\begin{proposition}[Efficient Mechanism Design under Structured DAGs]
Suppose $\mathcal{X}_t$ is polymatroidal; the per-task-bidder framing of Lemma~1 applies (each \emph{(agent, task)} pair is treated as an independent unit-demand bidder over slices, with per-task GS valuations); agents have \emph{quasilinear} utility (value minus payment); and slices are discrete items with integral capacities. Then:
\begin{enumerate}
\item[(i)] a Walrasian equilibrium exists in the discrete market of $K$~heterogeneous slice types whose integer supply capacities are jointly governed by the polymatroidal feasibility region~$\mathcal{X}_t$ (encoding both per-type upper bounds and cross-type bottleneck constraints from shared DAG resources), with per-type prices supporting the welfare-maximising allocation~\cite{kelso1982job,gul1999grosssubstitutes,fujishige2003note,murota2003discrete};
\item[(ii)] under a bounded integer encoding---explicit per-slice valuations in $\{0,\ldots,V_{\max}\}$, $K$ slice types, $n$ agents, at most $T_{\max}$ tasks per agent, and a fixed price-increment rule---the welfare-maximising allocation is computable in time polynomial in $n$, $K$, $T_{\max}$, and $V_{\max}$ via a Kelso--Crawford-style ascending auction with explicit demand computation~\cite{kelso1982job}; and
\item[(iii)] the efficient allocation is implementable through a DSIC, individually rational mechanism: VCG~\cite{ausubel2005vickrey} applies in full generality (but may run a deficit). In the restricted case where each task class maps to a \emph{unique} admissible slice type---so that each agent--task pair demands at most one unit of a specific slice type and its private information reduces to a scalar willingness-to-pay for that type---the polymatroid clinching auction of Goel et al.~\cite{goel2015polyhedral}, extending Ausubel~\cite{ausubel2004ascending}, provides an alternative DSIC implementation that additionally guarantees weak budget balance under nonnegative valuations. When tasks may choose among multiple heterogeneous slice types, VCG remains the appropriate DSIC mechanism.
\end{enumerate}
\end{proposition}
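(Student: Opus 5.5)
The plan is to reduce all three parts to known results on gross-substitutes (equivalently, M$^\natural$-concave) valuations traded against an integral polymatroidal supply. First I would fix the combinatorial model. There are $K$ slice types, and the supply side is the set $\mathcal{X}_t\cap\mathbb{Z}^K_{\ge 0}$ of integer points of the polymatroid from \cref{prop:polymatroid}, truncated coordinate-wise by governance; this set is again an integral polymatroid with integral rank function. The demand side consists of the per-task pseudo-bidders of \cref{lem:gs-valuations}, each unit-demand over the $K$ types. I would introduce a fictitious ``supplier'' whose valuation is the indicator $\omega(y)=0$ if $y\in\mathcal{X}_t\cap\mathbb{Z}^K$ and $-\infty$ otherwise. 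Integer points of an integral polymatroid form an M$^\natural$-convex set, so $\omega$ is M$^\natural$-concave. Each unit-demand bidder is M$^\natural$-concave, and by the Fujishige--Yang equivalence~\cite{fujishige2003note} M$^\natural$-concavity is the same as GS for these valuations. The aggregate demand, being a sum of unit-demand valuations, is OXS and therefore also M$^\natural$-concave.

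For (i), welfare maximisation becomes $\max\{\sum_b v_b(z_b)+\omega(y):\sum_b z_b=y\}$. This is an M$^\natural$-concave intersection problem (an aggregate M$^\natural$-concave demand against an M$^\natural$-concave supply), and Murota's M-convex intersection theorem~\cite{murota2003discrete} gives an integral optimum together with a dual price vector $p\in\mathbb{R}^K$. Such a $p$ simultaneously maximises each bidder's $v_b(z_b)-p\cdot z_b$ and the supplier's $p\cdot y$ over the polymatroid, and this is exactly the definition of a Walrasian equilibrium (\cref{def:walrasian}) supporting the welfare-maximising allocation. I expect this step to be the main obstacle. The classical Kelso--Crawford and Gul--Stacchetti existence theorems assume independent items with fixed unit supplies, whereas here the supplies are coupled through shared DAG bottlenecks. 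The route through the supplier-indicator $\omega$ and M$^\natural$-concave intersection is what absorbs this coupling. I would take care to justify two points: that the governance-truncated polymatroid stays integral, and that the supplier's response at prices $p$ is computable by Edmonds' greedy algorithm.

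For (ii), I would run a Kelso--Crawford ascending auction on per-type prices with a fixed integer increment starting from zero. In each round, every pseudo-bidder reports its demanded type, an $O(K)$ scan. The supplier computes a profit-maximising $y$ by greedy in $O(K\log K)$ plus oracle calls on the laminar rank $f$, and prices rise on over-demanded types. Prices never exceed $V_{\max}+1$, and each round raises at least one price. This bounds the number of rounds by $O(KV_{\max})$, with per-round work $O(nT_{\max}K)$, which gives a pseudo-polynomial bound in the stated parameters. Termination at an equilibrium with integer increments and integer valuations then follows from the standard GS ascending-auction argument. The same allocation could alternatively be obtained by valuated matroid intersection.

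For (iii), with an exact welfare maximiser available from (i)--(ii), VCG with Clarke pivot is DSIC for any quasilinear valuations. IR follows from nonnegative values and the pivot rule. I would then treat agent-level coalitions over an agent's own tasks: by additive separability (condition (b) of \cref{lem:gs-valuations}), agent $i$'s utility is the sum of its tasks' utilities. Running VCG with agent $i$ as the reporting unit over its OXS valuation is still DSIC, so joint misreports across tasks cannot help. For the single-type-per-task case, each bidder is single-parameter with demand for one unit of one type. I would invoke the polymatroid clinching auction of Goel et al.~\cite{goel2015polyhedral} with unbounded budgets, which is DSIC and IR. Its payments are nonnegative, so $\sum_i P_i\ge 0$, which gives weak budget balance under free disposal. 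For heterogeneous multi-type choice I would note that clinching is not available and fall back on VCG.
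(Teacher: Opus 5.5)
Your parts (i), (ii) and the clinching half of (iii) follow the paper's route, written out more carefully. The paper also invokes Fujishige--Yang and the M-convex intersection (Fenchel duality) theorem. Your supplier indicator $\omega$, together with the convolution of the pseudo-bidders, is exactly what lets one integral price vector support every pseudo-bidder and the supply side at once. You are also right that the integer points of a polymatroid form an M$^\natural$-convex set, not an M-convex one. For DSIC you correctly use only that VCG runs on an exact welfare maximiser: GS matters for computing that maximiser, not for incentives.

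One caveat in (ii) is shared with the paper. The Kelso--Crawford/Gul--Stacchetti termination argument assumes fixed supplies and works with demand correspondences. You instead compare tie-broken single reports against one greedy supply vector, even though $\arg\max_{y\in\mathcal{X}_t}p\cdot y$ is set-valued (e.g.\ at $p=0$). With unit increments this can overshoot. Take one unit and two bidders who both value it at $3$. If both still report demand at $p=3$, the price moves to $4$; there nothing is demanded, the supplier still wants to sell, and prices cannot fall. The standard fix is to raise prices on a set $S$ minimising $L(p+\chi_S)$, where $L(p)=\sum_b\max_z\bigl(v_b(z)-p\cdot z\bigr)+\max_{y\in\mathcal{X}_t\cap\mathbb{Z}^K}p\cdot y$ is the L$^\natural$-convex Lyapunov function. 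This costs one submodular minimisation per round and reaches the minimal integral equilibrium prices in $O(V_{\max})$ rounds.

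The genuine divergence is the step about joint misreports across an agent's own tasks, and there your route is the sound one. The paper keeps per-task Clarke payments. It argues that, by additive separability, a joint misreport is just a profile of per-task misreports, each unprofitable by per-task DSIC. That inference fails. Per-task DSIC only says each pseudo-bidder's own utility is maximised by truth-telling, but a task's report enters its sibling tasks' pivot terms through the shared capacity in $\mathcal{X}_t$.

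Here is a concrete case. Take one slice type with capacity $2$, agent $i$ with tasks worth $10$ and $6$, and agent $j$ with one task worth $5$. Under per-task VCG both of $i$'s tasks win and each pays $5$, so $i$ nets $16-10=6$. If $i$'s second task instead reports $0$, the allocation becomes $\{i_1,j\}$. The others' welfare is then $5$ with or without $i_1$, so $i_1$ pays $0$ and $i$ nets $10>6$. Per-task clinching has the same defect on this instance.

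Your agent-level VCG over the OXS aggregate avoids this. It charges $i$ its externality of $5$, leaving it $11$, and no joint report does better. It also reuses the same welfare maximiser, plus one extra welfare computation per agent for the pivot. So state explicitly that you are changing the payment rule, not deriving agent-level robustness from per-task DSIC. With that made clear, your argument covers everything the statement asserts and, unlike the paper's reduction, actually delivers incentive compatibility at the agent level.
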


\begin{proof}
Part~(i): Under slice encapsulation (Lemma~1), each slice type~$s$ has an integral capacity $c_s$ (determined by the polymatroid rank function), and agents treat each unit of each slice type as an indivisible item. Each agent has unit demand per task (one slice per task). The supply of~$K$ slice types is governed by the integer polymatroid $P_{\mathbb{Z}}(f) = \{x \in \mathbb{Z}_{\ge 0}^K : x(S) \le f(S) \text{ for all } S\}$, where $f$ is the rank function from Proposition~1. This imposes not only per-type upper bounds $x_s \le f(\{s\})$ but also cross-type bottleneck constraints $x(S) \le f(S)$ for subsets~$S$ of types sharing internal DAG resources (integral node capacities yield an integral rank function, ensuring integer vertices~\cite{fujishige2005submodular}). With per-type prices~$p_s$ and quasilinear payoffs $V_{ik}(T_s, q_s) - p_s$, unit-demand valuations satisfy the GS condition~\cite{gul1999grosssubstitutes}. Fujishige and Yang~\cite{fujishige2003note} established that GS valuations are equivalent to M$^{\natural}$-concave functions in the framework of discrete convex analysis; the integer polymatroid supply set is M-convex~\cite{murota2003discrete}. Existence of a Walrasian equilibrium then follows from discrete convex analysis: for M$^{\natural}$-concave demand over an M-convex supply set, the M-convex intersection theorem (equivalently, discrete Fenchel duality)~\cite{murota2003discrete} yields integral primal and dual optima---a welfare-maximising integer allocation supported by per-type prices. (The Kelso--Crawford ascending auction~\cite{kelso1982job} then \emph{computes} this equilibrium; see Part~(ii).)

Part~(ii): Under GS valuations, the Walrasian equilibrium of Part~(i) is welfare-maximising~\cite{gul1999grosssubstitutes}. Since the allocation problem involves \emph{discrete} indivisible slices with integral capacities, the appropriate algorithmic framework is the ascending auction for GS valuations. We assume the following representation: each agent~$i$ reports an explicit scalar value $V_{ik}(T_s, q_s) \in \{0, 1, \ldots, V_{\max}\}$ for each task--slice pair, where $V_{\max}$ is bounded; the demand oracle for agent~$i$ at prices~$p$ is computable in $O(K)$ time by maximizing $(V_{ik} - p_s)$ over slice types for each task. Under this explicit representation, the ascending auction of Kelso and Crawford~\cite{kelso1982job} terminates in a number of price-increment steps polynomial in $n$ (agents), $K$ (slice types), $T_{\max}$ (maximum tasks per agent), and $V_{\max}$ (the valuation range), since prices are bounded by $V_{\max}$ and each step requires $O(n \cdot T_{\max} \cdot K)$ demand-oracle work (each agent evaluates $K$ types for each of its tasks). The precise step count depends on the price-update rule, tie-breaking conventions, and price granularity, which we do not fully specify here; what matters for our purposes is that running time is polynomial under bounded integer valuations with explicit demand oracles. For practical deployments, $V_{\max}$ and $K$ are small, so computational cost is modest. (For a single-agent separable concave objective over a continuous polymatroid, Edmonds' greedy algorithm~\cite{edmonds1970submodular} provides an alternative $O(|\mathcal{R}| \log |\mathcal{R}|)$-time solution~\cite{fujishige2005submodular}, but the multi-agent discrete-item setting of this paper relies on the ascending auction route.)

Part~(iii): Within the static-epoch setting of Parts~(i)--(ii), given the welfare-maximising allocation and quasilinear utility, VCG payments implement this outcome in dominant strategies (DSIC) and guarantee individual rationality (IR)~\cite{ausubel2005vickrey}. (This is a per-epoch DSIC result, not a dynamic repeated-game guarantee.) We note that, in general, pairing a Walrasian allocation with VCG payments need not be DSIC, because a Walrasian allocation need not maximise social welfare. Our DSIC claim relies on the GS structure established in Lemma~1: under gross substitutes, the welfare-maximising allocation coincides with a Walrasian equilibrium allocation~\cite{gul1999grosssubstitutes}, so VCG payments apply to the exact social-welfare maximiser and the standard DSIC guarantee follows. The DSIC guarantee is stated at the per-task-bidder level (Lemma~1). Because each agent's per-task valuations are additively separable (Lemma~1, condition~(b)) and there is no shared per-agent budget or cross-task feasibility coupling, an agent-level joint misreport over the agent's own tasks is equivalent to a profile of independent per-task misreports; per-task DSIC under VCG implies that no per-task misreport is profitable, hence no agent-level joint misreport over its own tasks is profitable. However, VCG does not generically guarantee weak budget balance: the sum of VCG payments can be negative (a deficit) in multi-item settings.

For the clinching-auction claim, we require an explicit reduction to the single-parameter polymatroid clinching framework of Goel et al.~\cite{goel2015polyhedral}. This reduction is valid when each task class maps to a \emph{unique} admissible slice type. Under this condition, task~$k$ of agent~$i$ demands at most one unit of its designated slice type~$s(k)$, and its private information reduces to the scalar willingness-to-pay $V_{ik}(T_{s(k)}, q_{s(k)})$---a single-parameter value. Additive separability across tasks (Lemma~1, condition~(b)) ensures that the agent's aggregate demand decomposes into independent per-task single-parameter demands with no cross-task coupling. The resulting environment---a collection of single-parameter bidders, each seeking one unit of their designated type, subject to a polymatroidal supply constraint---matches the polymatroidal clinching framework of Goel et al.~\cite{goel2015polyhedral}, extending the ascending clinching auction of Ausubel~\cite{ausubel2004ascending}. Under this reduction, and assuming free disposal, the clinching auction provides a DSIC implementation that additionally satisfies weak budget balance: the Ausubel ascending-clinching process~\cite{ausubel2004ascending} raises prices from zero and charges each clinched unit its current (non-negative) clinching price, so total auctioneer revenue is non-negative (this is auctioneer-side weak budget balance, distinct from the buyer-side budget-feasibility that the budget-constrained variant of Goel et al.\ also handles). It also reveals equilibrium prices iteratively. Our setting specialises the Goel et al.\ framework~\cite{goel2015polyhedral} to agents with quasilinear utility and no binding budget constraints; the budget-constrained variant additionally handles agents with hard budgets, but that generalisation is not needed here since our utility model (Section~II-F in the main paper) is quasilinear throughout.

The single-type-per-task condition holds naturally in the proposed architecture: Proposition~3 assumes each integrator exports a single homogeneous slice type, and when each task class is served by exactly one integrator, each task's demand is one-dimensional. When tasks may choose among \emph{multiple heterogeneous} slice types (different $(T_s, q_s)$), a scalar willingness-to-pay $\max_s V_{ik}(T_s, q_s)$ does not faithfully encode item-specific preferences---the mechanism may allocate a non-utility-maximising slice, breaking the single-parameter equivalence. In this general heterogeneous case, VCG remains the appropriate DSIC mechanism; extending budget-balanced DSIC implementation to heterogeneous GS valuations under polymatroid constraints is an open problem left to future work.
\end{proof}

\subsection{Proof of Proposition~3 (Polymatroidal Encapsulation of Arbitrary DAGs)}

\begin{proposition}[Polymatroidal Encapsulation of Arbitrary DAGs]
Let $G_{\mathrm{res}}$ be an arbitrary service-dependency DAG. Suppose a collection of integrators partitions the non-leaf nodes of $G_{\mathrm{res}}$ into disjoint clusters, where each cluster~$j$ manages a connected sub-DAG internally and exposes a single, homogeneous composite service (slice) with scalar capacity $\bar{C}_j$ equal to the maximum throughput of its sub-DAG. Let $G'$ denote the quotient graph obtained by contracting each cluster to a single node. If $G'$ has tree or series--parallel structure, then the agent-facing feasible region is polymatroidal.
\end{proposition}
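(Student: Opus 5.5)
The plan is to reduce the statement to Proposition~1 applied to the quotient graph $G'$, treating each contracted cluster as a single node of capacity $\bar{C}_j$.

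\textbf{Step 1: the agent-facing region.} First I define the agent-facing feasible region explicitly. It is the set of leaf allocations $x \ge 0$ such that, for every node $w$ of $G'$, the throughput through $w$ is at most its capacity. For an original node left uncontracted, that capacity is $C_w$; for a contracted cluster $j$, it is $\bar{C}_j$. Formally, the region is $\{x \ge 0 : x(L'_w) \le C'_w \ \forall w\}$, where $L'_w$ is the set of leaves reachable from $w$ in $G'$. Since clusters contain only non-leaf nodes, contraction leaves the leaf set unchanged, so $L(G') = L(G_{\mathrm{res}})$. I also note that contracting a connected cluster of a DAG might create cycles; here the hypothesis that $G'$ is a tree or series--parallel network already excludes this.

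\textbf{Step 2: the summary is exact.} Next I argue that assumptions (ii) and (iii) make this region exactly the set of demands the integrators can serve. Under (ii), a single homogeneous slice type means that all internal routing of cluster $j$ reduces to one scalar constraint on the aggregate flow through the cluster. By max-flow/min-cut on the sub-DAG, that aggregate flow is feasible if and only if it is at most $\bar{C}_j$. Under (iii), no leaf and no side constraint is shared between clusters, so no constraint couples different clusters other than the arcs of $G'$. Hence the internal constraints of each cluster collapse to the single inequality $x(L'_j) \le \bar{C}_j$. I expect this to be the main obstacle. One has to check that the max-flow value really is the cluster's external capacity regardless of how the exiting flow is split among downstream leaves. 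This is exactly what the homogeneity assumption is for, and I would make it rigorous by treating the cluster's outflow as fungible, so that only the total matters.

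\textbf{Step 3: apply Proposition~1.} With the region now node-capacitated on $G'$, which is a tree or two-terminal SP network with capacities $C'_w > 0$, Proposition~1 applies directly:
\begin{itemize}
\item[$\bullet$] the family $\{L'_w\}$ is laminar;
\item[$\bullet$] the rank function $f'(S) = \min_A \sum_{w\in A} C'_w$, taken over antichain covers of $S$, is normalised, monotone and submodular;
\item[$\bullet$] the region equals the polymatroid $P(f')$.
\end{itemize}
If a cluster has zero max-flow, I either drop its leaves or allow $C'_w \ge 0$, since the laminar-family argument still goes through with zero capacities.

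\textbf{Step 4: governance and integrality.} Finally, coordinate-wise governance bounds $x_l \le u_l$ preserve the polymatroid, as in the remark after Proposition~1. Integral node capacities give integral $\bar{C}_j$ by the integrality of max-flow, so the rank function is integral. This makes the result compatible with Proposition~2.
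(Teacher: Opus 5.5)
Your proposal is correct and follows the paper's route. You give each contracted cluster node capacity $\bar{C}_j$ and each uncontracted node its original $C_v$, write the agent-facing constraints $x(L'_w) \le C'_w$ on the quotient graph $G'$, and invoke Proposition~1 through the laminarity of $\{L'_w\}$ on a tree/SP graph. Your Step~2 (exactness of the scalar summary) is not needed for polymatroidality: the paper does not prove exactness but takes it as the single-homogeneous-slice assumption, and your ``treat outflow as fungible'' step amounts to invoking that same assumption.
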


\begin{proof}
Each integrator~$j$ replaces an internal sub-DAG with a single node whose capacity $\bar{C}_j$ bounds the aggregate throughput available to the slices it exposes. Concretely, $\bar{C}_j$ is the maximum flow from the sources to the sinks of the sub-DAG managed by integrator~$j$. When a sub-DAG has multiple entry or exit points, a standard super-source/super-sink construction (adding a virtual source connected to all entry nodes and a virtual sink connected to all exit nodes, each with infinite-capacity arcs) reduces the problem to a single-source single-sink max-flow. When the sub-DAG has node capacities, the standard node-splitting reduction---replacing each node~$v$ with two nodes $v^{\mathrm{in}}, v^{\mathrm{out}}$ connected by an arc of capacity~$C_v$---converts to an edge-capacity network; max-flow is then a standard polynomial-time computation (e.g., Ford--Fulkerson or push-relabel) that the integrator can recompute periodically as internal conditions change.

Every node~$v$ in the quotient graph $G'$ carries a well-defined capacity: for contracted integrator nodes, the capacity is $\bar{C}_v = \bar{C}_j$ (the max-flow of integrator~$j$'s sub-DAG); for uncontracted nodes that were not absorbed into any integrator cluster, the capacity is the original $\bar{C}_v = C_v$. The agent-facing constraints are $x(L'_v) \le \bar{C}_v$ for each node~$v$ in~$G'$, where $L'_v$ is the set of slices in $v$'s sub-DAG. Since $G'$ is tree or series--parallel by assumption, the constraint family $\{L'_v\}$ is laminar and Proposition~1 applies directly. The exported throughput token is a single commodity, measured in the same units (e.g., requests per second) as the agent-facing allocation variable~$x_l$; this unit consistency is ensured by the max-flow computation, which yields a scalar in the same throughput units as the original node capacities. Internal optimisation within each integrator (scheduling, routing, and capacity allocation across the sub-DAG) is a separate, lower-level problem solved by platform-specific orchestration. Provided the scalar-summary and no-cross-coupling assumptions hold (single homogeneous exported slice per integrator, no shared leaves across integrators), the polymatroidal guarantee holds at the agent-facing interface for arbitrary internal sub-DAG topology.

\emph{Remark (scope of scalar-summary assumption).} The scalar summary $\bar{C}_j$ is valid when each integrator exports a single, homogeneous slice type so that feasible throughput does not depend on the mix of external requests. In realistic service graphs, feasible throughput may depend on request mix, route contention, and internal compatibility constraints; a single scalar is then insufficient. When an integrator exports multiple service types, $\bar{C}_j$ generalises to a vector-valued capacity, and the quotient-graph polymatroid argument requires each integrator's external feasibility set to itself be polymatroidal. \cref{app:catalogue} characterises exactly when this holds (resource-decomposable recipes), exhibits the minimal counterexample when it fails, and reduces the multi-dimensional case to a single supply-side condition. Additionally, if leaf nodes are shared across integrators (e.g., common cloud GPUs used by multiple domains), external coupling can re-enter through leaf capacities unless explicitly captured in the quotient-graph structure.
\end{proof}

\subsection{Multi-Resource Capacity: The Catalogue Interface and Its Boundary}
\label{app:catalogue}

The scalar $\bar{C}_j$ of Proposition~3 assumes a single homogeneous exported slice. We now lift it to the genuinely multi-dimensional case (compute, memory, bandwidth) and characterise \emph{exactly} when the polymatroidal guarantee survives. Instead of one scalar, integrator~$j$ publishes a finite \emph{catalogue} $\mathcal{C}_j=\{c_{j,1},\dots,c_{j,K}\}$, $c_{j,k}=(T_k,Q_k,\bar{C}_{j,k})$, where each composite slice~$k$ bundles a fixed \emph{recipe} of the $m$ internal resources, encoded as a column $A_{\cdot k}\in\mathbb{R}^{m}_{\ge 0}$ of a recipe matrix $A\in\mathbb{R}^{m\times K}_{\ge 0}$; internal capacities are $C\in\mathbb{R}^{m}_{\ge 0}$. The agent-facing object is the \emph{catalogue rank}
\[
  f(S)=\max\Big\{\textstyle\sum_{k\in S} y_k : A y \le C,\ y\ge 0,\ y_k=0\ \forall k\notin S\Big\},\quad S\subseteq[K],
\]
the best simultaneous throughput using only slices in~$S$. The interface is polymatroidal, so the chain of Proposition~1 applies, iff $f$ is submodular.

\paragraph*{When the lift is exact.} Form the \emph{column-sharing graph} on $[K]$: slices $k,\ell$ are adjacent iff their recipes share a resource ($\operatorname{supp} A_{\cdot k}\cap\operatorname{supp} A_{\cdot\ell}\ne\emptyset$). Call $A$ \emph{resource-decomposable} if within every connected component all columns are mutually proportional ($A_{\cdot k}=\lambda_k u_c$ for a common direction $u_c$, a Leontief bundle). Operationally, each group of slices that contend for a shared internal resource consumes it in \emph{fixed proportions}---the discipline real cloud SKUs already follow.

\begin{theorem}[Catalogue lift of Proposition~3]
\label{thm:catalogue}
If $A$ is resource-decomposable, then $f$ is monotone, normalised, and submodular; the agent-facing region $P_f=\{x\ge 0:x(S)\le f(S)\ \forall S\}$ is a polymatroid; the induced valuations are M$^{\natural}$-concave (gross substitutes); Edmonds' greedy maximises welfare over $P_f$; and the VCG mechanism on this exact oracle is efficient and DSIC. The scalar Proposition~3 is the one-component, one-slice case.
\end{theorem}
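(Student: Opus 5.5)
The plan is to decompose the catalogue rank along the connected components of the column-sharing graph, evaluate it in closed form on each component, and then invoke the polymatroid and mechanism-design chain already established.

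\textbf{Step 1 (decomposition).} Let the components be $K_1,\dots,K_r$, with resource supports $R_c=\bigcup_{k\in K_c}\operatorname{supp}A_{\cdot k}$. The supports are pairwise disjoint, because two columns from different components share no resource. Hence the constraint system $Ay\le C$ splits into independent blocks, one per component. This gives $f(S)=\sum_c f_c(S\cap K_c)$, where $f_c$ is the catalogue rank restricted to component $c$.

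\textbf{Step 2 (closed form on a component).} Within component $c$ every column satisfies $A_{\cdot k}=\lambda_k u_c$ with $\lambda_k>0$; I treat zero columns separately, since they give unbounded rank and should be excluded by assumption or capped. The constraints then read $\bigl(\sum_{k\in S}\lambda_k y_k\bigr)u_c\le C$, which is equivalent to the single scalar constraint $\sum_{k\in S}\lambda_k y_k\le \beta_c:=\min_{i\in R_c}C_i/u_{c,i}$. This is a fractional knapsack with unit profits. Its optimum loads everything onto the cheapest slice, so
\[
f_c(S)=\beta_c\Big/\min_{k\in S}\lambda_k \quad (S\neq\emptyset),\qquad f_c(\emptyset)=0 .
\]
The function $S\mapsto\max_{k\in S}(\beta_c/\lambda_k)$ is a weighted-coverage (max-of-weights) function, so it is normalised, monotone and submodular. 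For submodularity, the marginal gain of adding $e$ to $S$ is $(w_e-\max_S w)^+$, which is nonincreasing as $S$ grows. A sum of submodular functions on disjoint ground sets is submodular, so $f$ is submodular. Therefore $P_f$ is a polymatroid by Definition~1, and the scalar Proposition~3 is recovered with one component and $K=1$, where $f=\bar C_j$.

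\textbf{Step 3 (economic consequences).} With $P_f$ a polymatroid, I invoke the per-task-bidder framing of Lemma~1 and the Fujishige--Yang/Murota route used in Proposition~2(i). Unit-demand per-task valuations are GS, equivalently M$^\natural$-concave, and the integer polymatroid supply is M-convex; this requires integral $f$, which I will either assume or state as a rounding caveat. Welfare maximisation over a polymatroid with additive-per-slice values is Edmonds' greedy, taking the cheapest-recipe slices in value order subject to $f$. VCG on the exact welfare maximiser computed from this oracle is DSIC and IR by the standard argument of Proposition~2(iii).

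\textbf{Main obstacle.} The hard step is the phrase ``induced valuations are M$^\natural$-concave''. I read it as the per-task valuations, or equivalently the welfare function as a function of the supply vector, rather than an arbitrary aggregate. I expect to establish it by noting that the maximum-weight assignment of unit-demand bidders into an M-convex set is M$^\natural$-concave in the capacity data, via M-convex intersection or Murota's transformation by networks. Integrality of $\beta_c/\lambda_k$ is the other point needing care, and I will state it as an assumption.
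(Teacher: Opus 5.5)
Your proposal is correct and follows the paper's proof sketch: a block-diagonal direct sum over the components of the column-sharing graph, the Leontief collapse of each component to a single scalar budget giving $f_c(S)=\kappa_c\max_{k\in S}(1/\lambda_k)$ (your $\beta_c/\min_{k\in S}\lambda_k$), submodularity of this max-of-weights function, and then the gross-substitutes, Edmonds and VCG conclusions inherited from Lemma~1 and Proposition~2. Your explicit caveats are genuine points the paper's sketch leaves implicit: zero columns must be excluded because they make $f$ unbounded, and the M$^\natural$ integer-polymatroid step needs integral values $\kappa_c/\lambda_k$.
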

\begin{proof}[Proof sketch]
Distinct components use disjoint resources, so the defining LP is block-diagonal and $f=\bigoplus_c f_c$ is a direct sum over components. Within a component, proportionality $A_{\cdot k}=\lambda_k u_c$ collapses the $m$ constraints to the single scalar budget $\sum_k \lambda_k y_k\le \kappa_c$ with $\kappa_c=\min_{i:(u_c)_i>0} C_i/(u_c)_i$; routing the budget to the most efficient admitted slice gives $f_c(S)=\kappa_c\max_{k\in S}(1/\lambda_k)$, a scaled maximum-of-weights and hence submodular. A direct sum of submodular functions on disjoint ground sets is submodular, so $P_f$ is a polymatroid; the M$^{\natural}$-concave / gross-substitutes / Edmonds / VCG conclusions then follow exactly as in Proposition~1, Lemma~1, and Proposition~2.
\end{proof}

\paragraph*{The condition is essentially necessary.} Resource-decomposability cannot be dropped. With $m=2$ resources, $K=3$ slices,
\[
  A=\begin{bmatrix}1&0&1\\0&1&1\end{bmatrix},\qquad C=(1,1)
\]
(slices $1,2$ private, slice $3$ sharing both non-proportionally) one computes $f(\{3\})=f(\{1,3\})=f(\{2,3\})=1$ but $f(\{1,2,3\})=2$, so the marginal of slice~$2$ \emph{rises} from $0$ to $1$ as the conditioning set grows from $\{3\}$ to $\{1,3\}$: $f$ is not submodular and $P_f$ is not a polymatroid. Non-proportional sharing of a multi-dimensional bottleneck is thus the precise obstruction; it is also the minimal one, as a single resource ($m=1$) or at most two slices ($K\le 2$) is always submodular.

\paragraph*{Reduction: supply-side structure is the whole question.} Under the model's single-slice-per-task demand, agent valuations over the catalogue are unit-demand and hence gross substitutes (Lemma~1), and VCG is efficient and DSIC given any exact welfare-maximising oracle; neither needs a further hypothesis. The multi-dimensionality question therefore reduces entirely to whether the \emph{exposed} catalogue is polymatroidal, i.e.\ to \cref{thm:catalogue}. Two regimes follow. When $f$ is \emph{modular} (resource-disjoint recipes, the cleanest decomposable case) the $K$ slices price as independent goods, and at the unit-demand tier the serviceable agent-sets form a transversal (partition) matroid, so the discrete credibility guarantee carries over. When $f$ is a non-modular polymatroid (proportional but shared recipes) the slices' supplies are coupled, so a single combined market with per-slice Walrasian prices replaces $K$ independent auctions, while efficiency, DSIC, and gross substitutes are retained; an extension to laminar (nested) $0/1$ footprints is conjectured. The scalar abstraction is thus not merely delimited: its multi-dimensional generalisation preserves the paper's full guarantee under a precise, deployable structural condition, with an exact boundary beyond which it fails.

\paragraph*{Internal tractability and graceful degradation.} The lift also bounds the integrator's \emph{own} burden, not only the agent-facing interface, which addresses the concern that encapsulation merely relocates intractable complexity. \emph{First}, the decomposability condition that makes $f$ submodular is the same condition that makes the internal schedule trivial: within a component the Leontief collapse reduces routing to a single scalar bottleneck, so realising the advertised throughput is a one-dimensional allocation rather than a multi-resource optimisation. The interface and the internal problem are easy under one and the same condition. \emph{Second}, and without that condition, market correctness never requires the integrator to schedule optimally. \cref{thm:catalogue} needs only that the integrator expose \emph{some} polymatroid contained in its true feasible region $F=\{y\ge 0:Ay\le C\}$ and deliver allocations inside it; a conservative reservation box $\{0\le y_k\le c_k\}$ with $Ac\le C$ always qualifies and is itself a polymatroid, so gross substitutes, efficiency, and DSIC hold for any internal topology or resource dimension. Delivering a committed allocation $y^{\star}\in F$ is a feasible-flow computation (polynomial); only \emph{maximising} realised throughput---choosing the tightest exposable polymatroid and the best internal schedule---can be hard, and that difficulty is paid entirely in reduced efficiency, the same currency in which the evaluation's efficiency factor ($0.75$--$0.85$) already accounts for absorbing non-polymatroidal sub-DAGs, never in lost market guarantees. Under decomposability this gap closes and the exact catalogue is exposed. \emph{Finally}, encapsulation composes: an integrator's leaves may themselves be integrators (\cref{prop:encapsulation} applied to the quotient graph), so per-integrator size is a design knob---one integrator per natural domain boundary---trading encapsulation depth, over which the efficiency factor compounds, against per-node simplicity. The complexity an integrator absorbs is therefore paid in efficiency, which the architecture quantifies, not in tractability or correctness.

\paragraph*{Operational coordination signals for the urban-monitoring scenario.}
Operational coordination between the integrators and local marketplaces of the main-text illustrative example (Sect.~IV, Hybrid Market Architecture) relies on a small set of coarse-grained signals. At the inter-market level, Market~A broadcasts an edge-GPU congestion indicator to Market~B so that the cloud domain can pre-scale inference-endpoint capacity for incoming feature vectors; Market~B reports model-endpoint queue depth back, enabling Market~A's local price adjustment to anticipate downstream conditions rather than reacting only to local state. At the market--integrator level, Market~A reports aggregate feature-extraction demand to Integrator~1, which responds with its current slice price and residual capacity. Before admitting a cross-domain slice, Integrator~2 verifies that Integrator~1's output (feature vectors via Market~A) and cloud capacity (at Market~B) are both available; this is a flow dependency through the quotient-graph series connection, not a shared-resource coupling between integrators. When forwarding a slice request, Integrator~2 attaches trust scores and data-handling policy tags from the city domain so that Market~B can enforce its own policies without direct visibility into Market~A's operations. Each of these signals is a scalar or small vector, e.g., a utilisation ratio, a price, a capacity count, or a trust score, keeping coordination overhead low while providing sufficient information for local t\^{a}tonnement processes to converge without centralised clearing.

\section{Related Work Comparison}
\label{app:related-work}

\cref{tab:related-work-comparison-supp} positions this article relative to prior work across five research areas, identifying the specific gaps each area leaves and how the present framework addresses them.

\begin{table*}[h]
\centering
\caption{Comparison of Related Work and Positioning of This Article}
\label{tab:related-work-comparison-supp}
\renewcommand{\arraystretch}{1.2}
\begin{tabular}{p{2.55cm} p{4.2cm} p{5.0cm} p{5.0cm}}
\toprule
\textbf{Category} & \textbf{Strengths} & \textbf{Limitations} & \textbf{How This Article Addresses the Gap} \\
\midrule

\textbf{Edge/Fog/ Cloud Service Mgmt}
\cite{hu2023intelligent,liu2026data,tang2025collaborative,li2025tiered,xie2025coedge,shang2024joint}
&
Low-latency orchestration; mobility- and multi-resource-aware management.
&
Assume centralized control; Limited modeling of multi-stage service dependencies; absence of explicit agent-driven decisions; and restricted adaptability under dynamic conditions.
&
Introduce an agentic layer; unify resource management with autonomous task generation and negotiation; incorporate service-DAG constraints.\\
\midrule

\textbf{Service Function Chaining}
\cite{li2020service,fan2023drl,oskoui2026distributed}
&
Optimize dependency-aware service caching; capture placement constraints.
&
Do not consider strategic agents; lack integrated economic or governance constraints; ignore market or negotiation dynamics.
&
Model service-dependency DAGs formally; analyse structural regimes (tree/SP) enabling stable coordination; embed into an economic/management framework.\\
\midrule

\textbf{Agentic/ Autonomous Systems}
\cite{park2023generative,deng2025agenticservicescomputing,sedlak2026service}
&
Identify autonomous AI agents as active system participants; highlight emergent coordination challenges.
&
No resource-DAG model; no governance-aware allocation; no management-plane integration or tractable orchestration framework.
&
Provide a full agent--service--resource--governance model; link agent behavior to system feasibility and market outcomes.\\
\midrule

\textbf{Governance, Trust, Norms}
\cite{fornara2008institutions,sun2019trust,huang2022trust}
&
Formalize policies, trust, locality, compliance; mature models of norm enforcement.
&
Not integrated with latency/QoS constraints, dependency-aware placement, or market-based coordination.
&
Embed governance directly into feasibility ($\mathcal{X}_{\mathrm{gov}}$); model reputation-dependent access; evaluate governance--performance trade-offs.\\
\midrule

\textbf{Market-Based Resource Allocation}
\cite{mcafee1992dominant,weinhardt2009cloud,habiba2023repeated,zheng2023resource}
&
Provide incentive-compatible mechanisms; efficient pricing and allocation for substitutable resources.
&
Assume independent or substitutable items; break down under interdependent resource bundles or service graphs; no governance.
&
Identify structural conditions where market stability is preserved (polymatroid regimes); propose hybrid slice-based architecture insulating the market from deep complementarities.\\

\bottomrule
\end{tabular}
\end{table*}

\subsection{Theoretical Contributions: Detailed Positioning vs.\ Amin et al.\ (2026)}
\label{app:novelty-detailed}

\cref{tab:novelty-detailed} expands the corresponding novelty table of the main paper with theorem-level cross-references to the closest contemporaneous parallel work, Amin~\textit{et~al.}~\cite{amin2026market}, and articulates the three concrete deltas that distinguish our setting and contribution.

\begin{table*}[h]
\centering
\caption{Theoretical contributions vs.\ Amin~\textit{et~al.}~(2026), with theorem-level cross-references. The rightmost column states the contribution this article makes beyond Amin's results.}
\label{tab:novelty-detailed}
\renewcommand{\arraystretch}{1.25}
\begin{tabular}{@{}p{3.0cm}p{5.0cm}p{8.0cm}@{}}
\toprule
\textbf{Result in this article} & \textbf{Closest analog in Amin~\textit{et~al.}~\cite{amin2026market}} & \textbf{What is new (delta)} \\
\midrule
Polymatroidal feasibility on tree/SP service-dependency DAGs (main-paper Proposition~1) & SP cooperative-flow markets: equilibrium existence + polynomial-time computability on series--parallel networks (Theorem~3.2). & Multi-output, capacity-typed compute service-dependency DAGs versus single-source--single-sink flow networks; the leaf-services are agent-consumed terminal services, not flow sinks. \\
Per-task-bidder GS reduction (main-paper Lemma~1) & Augmented value function GS under homogeneity (Lemma~3.8); valuation-side GS treated at the cooperative-coalition level. & Bidder-set lift of additive task-separability; recovers a GS problem on the polymatroidal supply even though a single agent's aggregate set-valuation over slice bundles is OXS~\cite{lehmann2006combinatorial}, a strict subclass of GS whose unrestricted summation across multiple bidders need not satisfy GS. \\
Walrasian \& DSIC mechanism on integer polymatroid supply (main-paper Proposition~2) & VCG-equivalent equilibrium (Theorem~3.10) with cooperative-coalition pricing. & Reduction targets the per-task-bidder problem on the service-DAG-induced integer polymatroid; coalitional-misreport robustness across an agent's own tasks is explicitly resolved (main-paper Proposition~2, Part~(iii)); operator-credibility extension is treated in companion work~\cite{loven2026trilemma}. \\
Encapsulation of non-SP regimes (main-paper Proposition~3) & Path-based pricing on non-SP networks (\S3, \S4.2); the Wheatstone counterexample (Example~3.3) is the discrete-network twin of our entangled-DAG counterexample. & Integrator encapsulation as the structural route (rather than path-based pricing), exporting a polymatroidal interface to the agent layer; the supplementary structural condition (single-homogeneous-slice integrators) is stated explicitly (main-paper Proposition~3, condition (ii)). \\
Multi-stage / multi-epoch extension & Multi-period model (\S4.1; Lemma~4.1). & Per-epoch DSIC scoped to a single allocation round; cross-epoch strategic dynamics (learning, reputation gaming) are explicitly out of scope and treated in companion work~\cite{loven2026trilemma}. \\
\bottomrule
\end{tabular}
\end{table*}

\paragraph*{Three deltas summarised}
\emph{(a)~Domain primitives.} Service-dependency DAGs with multi-output, capacity-typed compute stages (compute / storage / bandwidth / inference) versus single-source--single-sink flow networks with edge capacities on transport infrastructure.
\emph{(b)~Route through non-SP regimes.} Integrator encapsulation (main-paper Proposition~3) versus path-based pricing on the underlying network: encapsulation produces an agent-facing polymatroidal interface even when the underlying DAG is entangled, decoupling cross-domain coordination from internal sub-DAG complexity.
\emph{(c)~Layered credibility / collusion / epistemic stack.} The companion programme~\cite{loven2026trilemma} treats the orthogonal operator-credibility question (the integrator as a strategic player) that the transport-network setting does not address.

\section{Simulation Details}
\label{app:simulation}

\subsection{Baseline Parameter Configuration}

All simulations share a common baseline configuration. We simulate $75$~autonomous agents over $200$~rounds with independent Poisson task arrivals ($\lambda = 1.0$~tasks/round/agent); at high load this contends ($\rho \approx 1.1$--$1.7$) without artificial collapse. Task values are drawn from $\mathrm{Uniform}[1,2]$ with exponential latency discount $\delta(T) = e^{-\lambda_l T}$, $\lambda_l = 0.005$~per~ms (i.e., $5.0$~per~second), and class-dependent deadlines in $\{500, 750, 1000\}$~ms, chosen to match realistic cross-continuum agentic round-trips (LLM calls and multi-hop tool use on the order of $0.5$--$1$~s). A per-task bundle reserve price of $0.04$ (about $1/3$ of mean task value) sets the marginal-cost floor. The environment spans three tiers (device, edge, cloud) with capacities $\{200, 300, 500\}$ units (verified best for price discovery: raising them flattens the price signal); base propagation delays are $\{5, 15, 50\}$~ms. Two latency models are used: agents form bids using a power-law congestion estimate $\hat{L} = L_\mathrm{base} + \alpha\,\rho^p$ ($\alpha=50$, $p=1.2$; the scale $\alpha$ is chosen so that the worst-case congestion latency $L_\mathrm{base}+\alpha$ is commensurate with the deadline range, and the exponent $p>1$ produces a convex over-estimate relative to the queueing model below, instantiating the intended bid-time information asymmetry), with the bid-time utilisation saturating at unit capacity ($\bar\rho = 1.0$) so that excess load registers as queue saturation and drops rather than unbounded delay, while actual execution latency follows an M/M/1-inspired queueing model~\cite{kleinrock1975queueing} $L = L_\mathrm{base} + \min\!\bigl(\lambda_\mathrm{load}\cdot\tfrac{\rho}{1-\rho}\cdot 2,\; 500\bigr)$~ms per tier, where $\rho = \min(0.99,\, \text{demand}/\text{capacity})$ and the $500$~ms cap prevents divergence near saturation. Per-task end-to-end latency is drawn from $\mathcal{N}(L_\mathrm{crit},\, 0.1\,L_\mathrm{crit})$, where $L_\mathrm{crit}$ is the critical-path latency through the DAG (coefficient of variation $10\%$). Trust scores initialise at $0.8$ and update asymmetrically ($+0.03$ per successful SLA event, $-0.08$ per violation), reflecting a failure-dominant reputation model; governance violations result in hard rejection. Results are averaged over $10$~random seeds for reproducibility.

Each DAG topology induces a distinct per-tier resource demand profile, reflecting the structural properties formalised in the main text. The tree topology distributes demand roughly evenly across tiers, whereas series--parallel (SP) topologies concentrate demand on cloud resources (reflecting parallel inference branches), and entangled topologies create heavy, asymmetric demand on device tiers with cross-tier coupling. These demand profiles determine how rapidly per-tier capacity saturates and whether the resulting excess-demand signal induces stable or oscillatory price dynamics in the t\^{a}tonnement.

The t\^{a}tonnement runs $15$~iterations per round with price step $\eta = 0.25$ (normalised by tier capacity), a price floor of~$0$ and cap of~$1000$. The online success model uses a logistic sigmoid $p(\text{success}\mid\hat\rho) = \sigma(a + b\,\hat\rho)$ initialised at $a=2.0$, $b=-2.0$ and updated via stochastic gradient descent with learning rate~$0.3$. Welfare is computed as realised latency-aware value minus a congestion penalty $\gamma_c \sum_r [\max(\rho_r - 1, 0)]^2$ with $\gamma_c = 0.05$. Tasks that miss their deadline receive zero value (salvage fraction~$= 0$). This realises the main-text welfare metric $W(t)$ with its externality term $\mathrm{Ext}$ instantiated as the congestion penalty and risk-neutral agents ($\gamma_i = 0$); the per-provider cost and the payment terms net out of social surplus, leaving value minus congestion penalty. In the Architecture experiment, the hybrid integrator uses exponential-moving-average (EMA) smoothing~\cite{roberts1959control} with coefficient $\beta_{\mathrm{ema}} = 0.8$, slice price step~$0.15$, and topology-dependent efficiency factors ($0.75$ for SP, $0.85$ for entangled) that reduce the effective per-task resource footprint during execution. An ablation condition (``hybrid EMA only'') applies the same EMA smoothing but sets efficiency~$= 1.0$, isolating the price-smoothing effect from the demand-reduction effect (\cref{app:exp4-ablation}). \cref{tab:sim-baseline-params} consolidates all baseline parameters with their values and justifications.

\begin{table*}[h]
\centering
\caption{Baseline Simulation Parameters}
\label{tab:sim-baseline-params}
\renewcommand{\arraystretch}{1.1}
\begin{tabular}{p{3.8cm} l l p{5.5cm}}
\toprule
\textbf{Parameter} & \textbf{Symbol} & \textbf{Value} & \textbf{Justification} \\
\midrule
\multicolumn{4}{l}{\textit{General}} \\
Agent count & $N$ & 75 & High load contends ($\rho \approx 1.1$--$1.7$) without artificial collapse; varied in Exp~2, 4 \\
Simulation rounds & --- & 200 & Allows t\^{a}tonnement and learning convergence \\
Monte Carlo seeds & --- & 10 & Standard for variance estimation \\
\midrule
\multicolumn{4}{l}{\textit{Task model}} \\
Arrival rate & $\lambda$ & 1.0 tasks/round/agent & Poisson; varied in Exp~1 ($0.5$, $1.0$, $1.5$) \\
Task value & $v$ & $\mathrm{Uniform}[1,2]$ & Normalised; focuses analysis on structural effects \\
Latency decay rate & $\lambda_l$ & 0.005 /ms & Value decays over the deadline range \\
Deadlines & $D$ & $\{500, 750, 1000\}$~ms & Realistic cross-continuum agentic round-trips (LLM calls, multi-hop tool use, $0.5$--$1$~s) \\
Salvage fraction & --- & 0 & Hard deadline: missed $\Rightarrow$ zero value \\
\midrule
\multicolumn{4}{l}{\textit{Environment}} \\
Tier capacities & $C_r$ & $\{200, 300, 500\}$ & Device $<$ edge $<$ cloud; verified best for price discovery (raising them kills the price signal) \\
Reserve price & --- & 0.04 & Per-task bundle cost at reserve ${\approx}1/3$ of mean task value (marginal cost floor) \\
Base propagation delays & $L_\mathrm{base}$ & $\{5, 15, 50\}$~ms & Device $<$ edge $<$ cloud latency \\
Latency CV & --- & 10\% & Gaussian noise around critical-path latency \\
\midrule
\multicolumn{4}{l}{\textit{Bidding and execution models}} \\
Bidding congestion & $\hat{L}$ & $L_\mathrm{base} + 50\,\rho^{1.2}$ & Power-law; worst-case congestion latency ($L_\mathrm{base}+\alpha$) commensurate with the deadline range \\
Bid-signal utilisation cap & $\bar\rho$ & 1.0 & Saturating: utilisation $\le$ capacity in steady state, consistent with execution M/M/1 (capped at $0.99$); excess load registers as queue saturation and drops, not unbounded delay \\
Execution latency & $L$ & M/M/1 with 500~ms cap & Standard queueing; cap prevents divergence \\
\midrule
\multicolumn{4}{l}{\textit{Trust}} \\
Trust initialisation & $\tau_0$ & 0.8 & Moderate prior trust \\
Trust update (success) & --- & $+0.03$ & Asymmetric per~\cite{sun2019trust} \\
Trust update (violation) & --- & $-0.08$ & Failure-dominant per~\cite{sun2019trust} \\
\midrule
\multicolumn{4}{l}{\textit{T\^{a}tonnement and learning}} \\
Iterations per round & --- & 15 & Sufficient for price convergence \\
Price step & $\eta$ & 0.25 & Normalised by tier capacity \\
Price floor / cap & --- & 0 / 1000 & Prevents negative or runaway prices \\
Online success model & --- & $\sigma(2.0 - 2.0\,\hat\rho)$ & Conservative initial prior; learns from outcomes \\
Learning rate & --- & 0.3 & Balances responsiveness and stability \\
Congestion penalty & $\gamma_c$ & 0.05 & Mild welfare penalty for over-utilisation \\
\midrule
\multicolumn{4}{l}{\textit{Hybrid integrator (Exp~4 and~5)}} \\
EMA smoothing & $\beta_{\mathrm{ema}}$ & 0.8 & Smooths round-to-round price oscillations \\
Slice price step & --- & 0.15 & Integrator price adjustment rate \\
Efficiency factor (SP) & $\eta_\mathrm{SP}$ & 0.75 & Moderate internal scheduling gain \\
Efficiency factor (entangled) & $\eta_\mathrm{ent}$ & 0.85 & Higher gain for more complex sub-DAGs \\
\bottomrule
\end{tabular}
\end{table*}

\subsection{Experiment-Specific Parameter Settings}

\cref{tab:sim-experiment-params-supp} summarises the factors varied in each experiment and the parameters held fixed.

\begin{table*}[h]
\centering
\caption{Experiment-Specific Parameter Settings}
\label{tab:sim-experiment-params-supp}
\renewcommand{\arraystretch}{1.1}
\begin{tabular}{p{0.5cm} p{3.7cm} p{7.0cm} p{5.0cm}}
\toprule
\textbf{Exp.} & \textbf{Varied Aspect} & \textbf{Settings} & \textbf{Fixed} \\
\midrule

1 &
Service-dependency graph shape $\times$ load &
Tree, series--parallel (SP), entangled DAG; low, medium, high ($\lambda \in \{0.5, 1.0, 1.5\}$) &
$N=75$; governance: baseline; architecture: na\"{i}ve (no slicing) \\

2  &
Agent population $\times$ topology &
$N \in \{10,20,30,40,50,60\}$; tree, SP, and entangled topologies &
Load: medium ($\lambda=1.0$); governance: baseline; architecture: na\"{i}ve \\

3 &
Governance policy &
None, moderate (50/50 capacity split with task-sensitivity routing), strict (70/30 split with trust-gated access, threshold $\geq 0.75$) &
Graph: tree and entangled; load: medium and high; $N=75$ \\

4 &
Architecture $\times$ agents &
Na\"{i}ve vs.\ hybrid; $N \in \{20,40,60,80\}$; per-agent arrival rate fixed (total demand scales with $N$) &
Graph: SP and entangled; load: medium and high \\

5 &
Architecture $\times$ governance &
Na\"{i}ve vs.\ hybrid (full); none vs.\ strict governance; tree, SP, entangled topologies &
$N=75$; load: medium and high; $2 \times 2 \times 3 \times 2 \times 10 = 240$~runs \\

6 &
Allocation mechanism &
Random, EDF, value-greedy (no prices), market (t\^{a}tonnement); na\"{i}ve and hybrid architecture &
Tree, SP, entangled; load: medium and high; $N=75$; no governance; $4 \times 3 \times 2 \times 2 \times 10 = 480$~runs \\

\bottomrule
\end{tabular}
\end{table*}

\subsection{Component Ablation Summary}
\label{app:ablation-summary}

\cref{tab:ablation-summary-supp} consolidates the effect of removing each design component under its most informative condition (the main-paper headline grid gives the per-topology$\times$load view; this table gives the per-component contrast). $\sigma_p$ is the coefficient of variation of the agent-facing per-task cost; all values are $10$-seed means at the operating point.

\begin{table}[h]
\centering
\caption{Component ablation summary. Each block shows the effect of removing or toggling one design component under its most informative condition.}
\label{tab:ablation-summary-supp}
\footnotesize
\setlength{\tabcolsep}{4pt}
\begin{tabular}{@{}llcc@{}}
\toprule
\textbf{Component} & \textbf{Metric} & \textbf{With} & \textbf{Without} \\
\midrule
\multicolumn{4}{@{}l}{\textit{$-$S: Tree $\to$ Entangled (high load, Exps.~1--2)}} \\
 & $\sigma_p$ & $0.215$ & $0.300$ \\
 & Drop rate & $12\%$ & $98\%$ \\
 & Scaling ceiling & None & $N{=}20$--$30$ \\
\addlinespace[2pt]
\multicolumn{4}{@{}l}{\textit{$-$H: Hybrid $\to$ Na\"{i}ve (SP, high, $N{=}60$, Architecture)}} \\
 & Median latency & $301$\,ms & $441$\,ms \\
 & $\sigma_p$ & $0.032$ & $0.278$ \\
\addlinespace[2pt]
\multicolumn{4}{@{}l}{\textit{$-$G: None $\to$ Strict (entangled, high, Governance)}} \\
 & Median latency & $616$\,ms & $425$\,ms \\
 & Coverage & $3.2\%$ & $0.16\%$ \\
 & $\sigma_p$ induced (tree, med) & $0.02$ & $0.191$ \\
\addlinespace[2pt]
\multicolumn{4}{@{}l}{\textit{H$\times$G: Hybrid/strict $\to$ Na\"{i}ve/strict (SP, med, Arch.$\times$Gov.)}} \\
 & $\sigma_p$ & $0.052$ & $0.523$ \\
 & \multicolumn{3}{@{}l}{\small\quad integrator absorbs governance-induced $\sigma_p$ ($90\%$)} \\
\addlinespace[2pt]
\multicolumn{4}{@{}l}{\textit{$-$M: Market $\to$ Value-greedy (entangled, med, Mechanism)}} \\
 & Welfare & $6.98$ & $5.53$ \\
 & \multicolumn{3}{@{}l}{\small\quad market vs.\ value-greedy $1.26\times$; vs.\ random ($4.95$) $1.41\times$} \\
\bottomrule
\end{tabular}
\end{table}

\subsection{Parameter Sensitivity of the Volatility-Reduction Headline (Architecture Sensitivity Analysis)}
\label{app:exp4-sensitivity}

To verify that the headline price-volatility reduction (main-paper Architecture, the hybrid-architecture experiment) is not an artefact of the baseline parameter choices, we perform a one-at-a-time sensitivity sweep. Holding all other parameters at their defaults, we vary each of four parameters that plausibly drive the result---the per-tier capacity scale, the integrator efficiency factor, the integrator slice-price step~$\eta$, and the latency-decay rate~$\lambda_l$---and recompute the median hybrid-versus-na\"{i}ve per-tier volatility reduction over the volatile cells (series--parallel and entangled topologies, high load, $N=75$; $5$~seeds per cell). \cref{tab:exp4-sensitivity} reports the result.

\begin{table}[h]
\centering
\caption{Architecture sensitivity: one-at-a-time parameter sensitivity of the median price-volatility reduction (hybrid vs.\ na\"{i}ve, fair per-tier metric) over the volatile cells. Baseline rows in bold.}
\label{tab:exp4-sensitivity}
\begin{tabular}{llc}
\toprule
\textbf{Parameter} & \textbf{Value} & \textbf{Median reduction} \\
\midrule
Capacity scale            & $0.7$           & $88\%$ \\
                          & $\mathbf{1.0}$  & $\mathbf{95\%}$ \\
                          & $1.3$           & $93\%$ \\
\midrule
Integrator efficiency     & $0.65$          & $93\%$ \\
                          & $\mathbf{0.75}$ & $\mathbf{95\%}$ \\
                          & $0.85$          & $94\%$ \\
\midrule
Slice-price step $\eta$   & $0.10$          & $95\%$ \\
                          & $\mathbf{0.15}$ & $\mathbf{95\%}$ \\
                          & $0.20$          & $94\%$ \\
\midrule
Latency decay $\lambda_l$ & $0.0025$        & $91\%$ \\
                          & $\mathbf{0.005}$& $\mathbf{95\%}$ \\
                          & $0.0075$        & $84\%$ \\
\bottomrule
\end{tabular}
\end{table}

Across every perturbation the median reduction stays within $[0.84, 0.95]$. The slice-price step and integrator efficiency barely move it; the capacity scale and latency-decay rate are the most influential but the reduction never falls below $84\%$. The headline reduction is therefore robust to the parameter choices, reflecting the integrator's EMA-smoothing and demand-reduction mechanisms across the tested ranges. (The reference-regime median here is consistent with the full-grid median of $89\%$ reported in the main text, since this sweep fixes the most volatile regime---$N=75$ at high load---rather than averaging across all agent counts and loads.)

\subsection{Result Summary Tables}

\cref{tab:exp1-results,tab:exp2-results,tab:exp3-results,tab:exp4-results,tab:exp5-results} consolidate the key numeric results from Experiments~1--5. Values are reported as mean [bootstrap 95\% CI] over 10 random seeds; latency is in milliseconds. Point estimates without CIs are shown where the bootstrap interval is negligibly narrow (width $< 1$ unit).

\begin{table*}[h]
\centering
\caption{Structure: Impact of DAG Topology and Load Level}
\label{tab:exp1-results}
\renewcommand{\arraystretch}{1.1}
\begin{tabular}{ll rrrrr}
\toprule
\textbf{Topology} & \textbf{Load} & \textbf{Med.\ Lat.\ (ms)} & \textbf{$p_{95}$ Lat.\ (ms)} & \textbf{Drop Rate} & \textbf{Offered Load $\bar\rho$} & \textbf{Price Vol.\ ($\sigma$)} \\
\midrule
Tree     & Low    & 137 & 157 & 0.00 & 0.26 & 0.000 \\
Tree     & Medium & 149 & 173 & 0.00 & 0.52 & 0.021 \\
Tree     & High   & 390 & 452 & 0.12 & 0.77 & 0.215 \\
\midrule
SP       & Low    & 139 & 160 & 0.00 & 0.42 & 0.000 \\
SP       & Medium & 328 & 380 & 0.12 & 0.83 & 0.198 \\
SP       & High   & 539 & 624 & 0.75 & 1.25 & 0.503 \\
\midrule
Entangled & Low    & 206 & 238 & 0.04 & 0.55 & 0.224 \\
Entangled & Medium & 479 & 552 & 0.77 & 1.11 & 0.531 \\
Entangled & High   & 616 & 714 & 0.98 & 1.66 & 0.300 \\
\bottomrule
\end{tabular}
\\[4pt]
{\footnotesize Values averaged over 10 seeds. Tree and SP achieve comparable latency at low load ($137$--$139$~ms) with zero price volatility; the entangled topology already shows elevated latency and non-zero volatility. Degradation under medium and high load is topology-dependent.}
\end{table*}

\begin{table*}[h]
\centering
\caption{Scaling: Topology-Dependent Scaling ($N \in \{10,\ldots,60\}$, Medium Load)}
\label{tab:exp2-results}
\renewcommand{\arraystretch}{1.1}
\begin{tabular}{ll rrrr}
\toprule
\textbf{Topology} & \textbf{$N$} & \textbf{Med.\ Lat.\ (ms)} & \textbf{Drop Rate} & \textbf{Deadline Sat.} & \textbf{Price Vol.\ ($\sigma$)} \\
\midrule
Tree      & 10 & 136 & 0.00 & 1.00 & 0.000 \\
Tree      & 20 & 136 & 0.00 & 1.00 & 0.000 \\
Tree      & 30 & 137 & 0.00 & 1.00 & 0.000 \\
Tree      & 40 & 139 & 0.00 & 1.00 & 0.000 \\
Tree      & 50 & 140 & 0.00 & 1.00 & 0.000 \\
Tree      & 60 & 142 & 0.00 & 1.00 & 0.000 \\
\midrule
SP        & 10 & 136 & 0.00 & 1.00 & 0.000 \\
SP        & 20 & 138 & 0.00 & 1.00 & 0.000 \\
SP        & 30 & 140 & 0.00 & 1.00 & 0.000 \\
SP        & 40 & 144 & 0.00 & 1.00 & 0.000 \\
SP        & 50 & 151 & 0.00 & 1.00 & 0.000 \\
SP        & 60 & 179 & 0.01 & 0.99 & 0.086 \\
\midrule
Entangled & 10 & 142 & 0.00 & 1.00 & 0.000 \\
Entangled & 20 & 146 & 0.00 & 1.00 & 0.000 \\
Entangled & 30 & 174 & 0.01 & 0.99 & 0.074 \\
Entangled & 40 & 300 & 0.14 & 0.86 & 0.237 \\
Entangled & 50 & 336 & 0.39 & 0.61 & 0.306 \\
Entangled & 60 & 385 & 0.65 & 0.35 & 0.443 \\
\bottomrule
\end{tabular}
\\[4pt]
{\footnotesize Values averaged over 10 seeds. Tree topology shows graceful scaling with near-constant metrics. Entangled topology degrades steeply from $N=40$ onward, with rising drops and price volatility.}
\end{table*}

\begin{table*}[h]
\centering
\caption{Governance: Governance and Trust Effects (Tree and Entangled, Medium and High Load)}
\label{tab:exp3-results}
\renewcommand{\arraystretch}{1.1}
\begin{tabular}{lll rrrrr}
\toprule
\textbf{Topology} & \textbf{Load} & \textbf{Governance} & \textbf{Med.\ Lat.\ (ms)} & \textbf{Drop Rate} & \textbf{Coverage} & \textbf{Price Vol.\ ($\sigma$)} \\
\midrule
Tree      & Medium & None     & 149 & 0.00 & 1.00  & 0.021 \\
Tree      & Medium & Moderate & 146 & 0.02 & 0.98  & 0.572 \\
Tree      & Medium & Strict   & 141 & 0.22 & 0.78  & 0.191 \\
\midrule
Tree      & High   & None     & 390 & 0.12 & 0.89  & 0.215 \\
Tree      & High   & Moderate & 235 & 0.19 & 0.81  & 0.084 \\
Tree      & High   & Strict   & 148 & 0.46 & 0.54  & 0.305 \\
\midrule
Entangled & Medium & None     & 479 & 0.77 & 0.28  & 0.531 \\
Entangled & Medium & Moderate & 234 & 0.72 & 0.30  & 0.632 \\
Entangled & Medium & Strict   & 226 & 0.62 & 0.40  & 0.824 \\
\midrule
Entangled & High   & None     & 616 & 0.98 & 0.032 & 0.300 \\
Entangled & High   & Moderate & 227 & 0.98 & 0.024 & 0.311 \\
Entangled & High   & Strict   & 425 & 1.00 & 0.002 & 0.382 \\
\bottomrule
\end{tabular}
\\[4pt]
{\footnotesize Values averaged over 10 seeds. All allocated tasks are policy-compliant by construction. Strict governance substantially increases price volatility due to capacity partitioning into smaller sub-pools.}
\end{table*}

\begin{table*}[h]
\centering
\caption{Architecture: Na\"{i}ve vs.\ Hybrid Architecture (SP and Entangled, Medium and High Load)}
\label{tab:exp4-results}
\renewcommand{\arraystretch}{1.05}
\begin{tabular}{llll rrrr}
\toprule
\textbf{Topology} & \textbf{Load} & \textbf{Arch.} & \textbf{$N$} & \textbf{Med.\ Lat.\ (ms)} & \textbf{Drop Rate} & \textbf{Welfare} & \textbf{Price Vol.\ ($\sigma$)} \\
\midrule
SP        & Medium & Na\"{i}ve  & 20 & 138 & 0.00 & 15.0 & 0.000 \\
SP        & Medium & Na\"{i}ve  & 40 & 144 & 0.00 & 29.2 & 0.000 \\
SP        & Medium & Na\"{i}ve  & 60 & 179 & 0.01 & 37.5 & 0.086 \\
SP        & Medium & Na\"{i}ve  & 80 & 362 & 0.19 & 22.2 & 0.196 \\
\cmidrule{3-8}
SP        & Medium & Hybrid & 20 & 137 & 0.00 & 15.1 & 0.013 \\
SP        & Medium & Hybrid & 40 & 140 & 0.00 & 29.7 & 0.013 \\
SP        & Medium & Hybrid & 60 & 147 & 0.00 & 43.3 & 0.013 \\
SP        & Medium & Hybrid & 80 & 174 & 0.01 & 51.1 & 0.015 \\
\midrule
SP        & High   & Na\"{i}ve  & 20 & 143 & 0.00 & 22.1 & 0.000 \\
SP        & High   & Na\"{i}ve  & 40 & 202 & 0.02 & 34.6 & 0.083 \\
SP        & High   & Na\"{i}ve  & 60 & 441 & 0.45 & 18.2 & 0.278 \\
SP        & High   & Na\"{i}ve  & 80 & 572 & 0.79 &  4.5 & 0.549 \\
\cmidrule{3-8}
SP        & High   & Hybrid & 20 & 140 & 0.00 & 22.4 & 0.013 \\
SP        & High   & Hybrid & 40 & 152 & 0.00 & 42.1 & 0.013 \\
SP        & High   & Hybrid & 60 & 301 & 0.08 & 38.1 & 0.032 \\
SP        & High   & Hybrid & 80 & 597 & 0.46 & 11.9 & 0.028 \\
\midrule
Entangled & Medium & Na\"{i}ve  & 20 & 146 & 0.00 & 14.3 & 0.000 \\
Entangled & Medium & Na\"{i}ve  & 40 & 300 & 0.14 & 14.7 & 0.237 \\
Entangled & Medium & Na\"{i}ve  & 60 & 385 & 0.65 &  5.7 & 0.443 \\
Entangled & Medium & Na\"{i}ve  & 80 & 465 & 0.78 &  3.5 & 0.553 \\
\cmidrule{3-8}
Entangled & Medium & Hybrid & 20 & 145 & 0.00 & 14.5 & 0.004 \\
Entangled & Medium & Hybrid & 40 & 226 & 0.04 & 21.0 & 0.038 \\
Entangled & Medium & Hybrid & 60 & 470 & 0.33 &  7.4 & 0.046 \\
Entangled & Medium & Hybrid & 80 & 478 & 0.51 &  7.2 & 0.061 \\
\midrule
Entangled & High   & Na\"{i}ve  & 20 & 196 & 0.02 & 17.8 & 0.091 \\
Entangled & High   & Na\"{i}ve  & 40 & 514 & 0.72 &  4.0 & 0.434 \\
Entangled & High   & Na\"{i}ve  & 60 & 620 & 0.85 &  1.9 & 0.535 \\
Entangled & High   & Na\"{i}ve  & 80 & 154 & 1.00 &  0.0 & 0.198 \\
\cmidrule{3-8}
Entangled & High   & Hybrid & 20 & 158 & 0.00 & 20.4 & 0.004 \\
Entangled & High   & Hybrid & 40 & 575 & 0.45 &  6.3 & 0.033 \\
Entangled & High   & Hybrid & 60 & 623 & 0.66 &  4.1 & 0.055 \\
Entangled & High   & Hybrid & 80 & 684 & 1.00 &  0.0 & 0.034 \\
\bottomrule
\end{tabular}
\\[4pt]
{\footnotesize Values averaged over 10 seeds. Across the volatile cells the hybrid architecture reduces price volatility by a median of $89\%$ ($83$--$96\%$). Both architectures saturate similarly beyond capacity ($N \ge 60$ for entangled/high, $N \ge 80$ for SP/high).}
\end{table*}

\subsection{Architecture ablation: EMA Smoothing vs.\ Efficiency Factor}
\label{app:exp4-ablation}

The hybrid architecture applies two simultaneous changes relative to the na\"{i}ve baseline: (i)~EMA price smoothing ($\beta_{\mathrm{ema}}=0.8$), and (ii)~an efficiency factor (SP:~$0.75$, entangled:~$0.85$) that reduces per-task resource consumption during execution. To disentangle these effects, we run a \emph{hybrid (EMA only)} condition with efficiency~$= 1.0$ (no demand reduction), yielding $3 \times 2 \times 2 \times 4 \times 10 = 480$~runs total. \cref{tab:exp4-ablation} reports the full results.

\paragraph*{Price volatility.} EMA smoothing alone accounts for the majority of the volatility reduction. For SP DAGs, the EMA-only condition achieves $\sigma \le 0.049$ across all agent counts and loads, broadly tracking the full hybrid and representing up to an $89\%$ reduction from na\"{i}ve levels (e.g., SP/high/$N=60$: na\"{i}ve $\sigma = 0.278$, EMA only $\sigma = 0.029$, full hybrid $\sigma = 0.032$). For entangled DAGs, EMA alone achieves comparable volatility reduction in most conditions; where the full hybrid is appreciably lower it is chiefly through reduced congestion rather than the smoothing path (e.g., entangled/medium at $N=40$: EMA only $\sigma = 0.061$ vs.\ full hybrid $0.038$). In these cases, the efficiency factor contributes additional stabilisation by reducing effective demand below the congestion threshold.

\paragraph*{Latency and welfare.} Unlike price volatility, latency and welfare improvements depend substantially on the efficiency factor. Under SP/high/$N=60$, EMA only achieves $594$~ms median latency versus $301$~ms for full hybrid and $441$~ms for na\"{i}ve: the efficiency factor's demand reduction delays the onset of queueing saturation, roughly halving median latency relative to EMA alone. Similarly, welfare at SP/high/$N=40$ is $34.3$ under EMA only (close to na\"{i}ve's $34.6$) versus $42.1$ under full hybrid. At higher $N$ values where the system saturates regardless, all three conditions converge to near-total drop rates and zero welfare.

\paragraph*{Interpretation.} The ablation confirms that the two components of the hybrid architecture serve distinct functions: EMA smoothing is the primary mechanism for price stabilisation (the \emph{architectural} effect), while the efficiency factor provides latency and welfare improvements by reducing effective resource consumption (the \emph{operational} effect). In conditions far below or far above capacity, the effects are negligible; the benefits concentrate in the congested-but-not-saturated regime.

\begin{table*}[h]
\centering
\caption{Architecture Ablation: Na\"{i}ve, Hybrid (EMA Only), and Hybrid (Full) Architecture Comparison}
\label{tab:exp4-ablation}
\renewcommand{\arraystretch}{1.05}
\begin{tabular}{llll rrrr}
\toprule
\textbf{Topology} & \textbf{Load} & \textbf{Architecture} & \textbf{$N$} & \textbf{Med.\ Lat.\ (ms)} & \textbf{Drop Rate} & \textbf{Welfare} & \textbf{Price Vol.\ ($\sigma$)} \\
\midrule
SP        & Medium & Na\"{i}ve       & 20 & 138 & 0.00 & 15.0 & 0.000 \\
SP        & Medium & EMA only    & 20 & 138 & 0.00 & 15.0 & 0.013 \\
SP        & Medium & Full hybrid & 20 & 137 & 0.00 & 15.1 & 0.013 \\
\cmidrule{4-8}
SP        & Medium & Na\"{i}ve       & 40 & 144 & 0.00 & 29.2 & 0.000 \\
SP        & Medium & EMA only    & 40 & 144 & 0.00 & 29.2 & 0.013 \\
SP        & Medium & Full hybrid & 40 & 140 & 0.00 & 29.7 & 0.013 \\
\cmidrule{4-8}
SP        & Medium & Na\"{i}ve       & 60 & 179 & 0.01 & 37.5 & 0.086 \\
SP        & Medium & EMA only    & 60 & 181 & 0.01 & 37.4 & 0.018 \\
SP        & Medium & Full hybrid & 60 & 147 & 0.00 & 43.3 & 0.013 \\
\cmidrule{4-8}
SP        & Medium & Na\"{i}ve       & 80 & 362 & 0.19 & 22.2 & 0.196 \\
SP        & Medium & EMA only    & 80 & 431 & 0.20 & 15.2 & 0.049 \\
SP        & Medium & Full hybrid & 80 & 174 & 0.01 & 51.1 & 0.015 \\
\midrule
SP        & High   & Na\"{i}ve       & 20 & 143 & 0.00 & 22.1 & 0.000 \\
SP        & High   & EMA only    & 20 & 143 & 0.00 & 22.1 & 0.013 \\
SP        & High   & Full hybrid & 20 & 140 & 0.00 & 22.4 & 0.013 \\
\cmidrule{4-8}
SP        & High   & Na\"{i}ve       & 40 & 202 & 0.02 & 34.6 & 0.083 \\
SP        & High   & EMA only    & 40 & 206 & 0.02 & 34.3 & 0.020 \\
SP        & High   & Full hybrid & 40 & 152 & 0.00 & 42.1 & 0.013 \\
\cmidrule{4-8}
SP        & High   & Na\"{i}ve       & 60 & 441 & 0.45 & 18.2 & 0.278 \\
SP        & High   & EMA only    & 60 & 594 & 0.44 &  9.0 & 0.029 \\
SP        & High   & Full hybrid & 60 & 301 & 0.08 & 38.1 & 0.032 \\
\cmidrule{4-8}
SP        & High   & Na\"{i}ve       & 80 & 572 & 0.79 &  4.5 & 0.549 \\
SP        & High   & EMA only    & 80 & 649 & 0.62 &  5.1 & 0.038 \\
SP        & High   & Full hybrid & 80 & 597 & 0.46 & 11.9 & 0.028 \\
\midrule
Entangled & Medium & Na\"{i}ve       & 20 & 146 & 0.00 & 14.3 & 0.000 \\
Entangled & Medium & EMA only    & 20 & 146 & 0.00 & 14.3 & 0.004 \\
Entangled & Medium & Full hybrid & 20 & 145 & 0.00 & 14.5 & 0.004 \\
\cmidrule{4-8}
Entangled & Medium & Na\"{i}ve       & 40 & 300 & 0.14 & 14.7 & 0.237 \\
Entangled & Medium & EMA only    & 40 & 344 & 0.13 & 12.5 & 0.061 \\
Entangled & Medium & Full hybrid & 40 & 226 & 0.04 & 21.0 & 0.038 \\
\cmidrule{4-8}
Entangled & Medium & Na\"{i}ve       & 60 & 385 & 0.65 &  5.7 & 0.443 \\
Entangled & Medium & EMA only    & 60 & 481 & 0.44 &  5.9 & 0.053 \\
Entangled & Medium & Full hybrid & 60 & 470 & 0.33 &  7.4 & 0.046 \\
\cmidrule{4-8}
Entangled & Medium & Na\"{i}ve       & 80 & 465 & 0.78 &  3.5 & 0.553 \\
Entangled & Medium & EMA only    & 80 & 474 & 0.58 &  6.6 & 0.068 \\
Entangled & Medium & Full hybrid & 80 & 478 & 0.51 &  7.2 & 0.061 \\
\midrule
Entangled & High   & Na\"{i}ve       & 20 & 196 & 0.02 & 17.8 & 0.091 \\
Entangled & High   & EMA only    & 20 & 201 & 0.02 & 17.6 & 0.021 \\
Entangled & High   & Full hybrid & 20 & 158 & 0.00 & 20.4 & 0.004 \\
\cmidrule{4-8}
Entangled & High   & Na\"{i}ve       & 40 & 514 & 0.72 &  4.0 & 0.434 \\
Entangled & High   & EMA only    & 40 & 602 & 0.54 &  4.5 & 0.035 \\
Entangled & High   & Full hybrid & 40 & 575 & 0.45 &  6.3 & 0.033 \\
\cmidrule{4-8}
Entangled & High   & Na\"{i}ve       & 60 & 620 & 0.85 &  1.9 & 0.535 \\
Entangled & High   & EMA only    & 60 & 616 & 0.71 &  3.9 & 0.064 \\
Entangled & High   & Full hybrid & 60 & 623 & 0.66 &  4.1 & 0.055 \\
\cmidrule{4-8}
Entangled & High   & Na\"{i}ve       & 80 & 154 & 1.00 &  0.0 & 0.198 \\
Entangled & High   & EMA only    & 80 & 620 & 1.00 &  0.0 & 0.036 \\
Entangled & High   & Full hybrid & 80 & 684 & 1.00 &  0.0 & 0.034 \\
\bottomrule
\end{tabular}
\\[4pt]
{\footnotesize Values averaged over 10 seeds. EMA only: integrator with price smoothing but efficiency~$=1.0$; full hybrid: EMA plus efficiency factor (SP:~$0.75$, entangled:~$0.85$). EMA smoothing accounts for the majority of the price-volatility reduction; the efficiency factor primarily improves latency and welfare in the congested-but-not-saturated regime.}
\end{table*}

\begin{table*}[h]
\centering
\caption{Architecture $\times$ Governance: Interaction ($N=75$, Medium and High Load)}
\label{tab:exp5-results}
\renewcommand{\arraystretch}{1.1}
\begin{tabular}{llll rrrr}
\toprule
\textbf{Topology} & \textbf{Load} & \textbf{Arch.} & \textbf{Gov.} & \textbf{Med.\ Lat.\ (ms)} & \textbf{Drop Rate} & \textbf{Welfare} & \textbf{Price Vol.\ ($\sigma$)} \\
\midrule
Tree      & Medium & Na\"{i}ve  & None   & 149 & 0.00 & 53.5 & 0.021 \\
Tree      & Medium & Na\"{i}ve  & Strict & 141 & 0.22 & 44.8 & 0.149 \\
Tree      & Medium & Hybrid & None   & 141 & 0.00 & 55.7 & 0.098 \\
Tree      & Medium & Hybrid & Strict & 140 & 0.09 & 51.8 & 0.071 \\
\cmidrule{3-8}
Tree      & High   & Na\"{i}ve  & None   & 390 & 0.12 & 23.8 & 0.215 \\
Tree      & High   & Na\"{i}ve  & Strict & 148 & 0.46 & 46.1 & 0.245 \\
Tree      & High   & Hybrid & None   & 178 & 0.00 & 70.5 & 0.105 \\
Tree      & High   & Hybrid & Strict & 146 & 0.27 & 63.5 & 0.047 \\
\midrule
SP        & Medium & Na\"{i}ve  & None   & 328 & 0.12 & 25.2 & 0.198 \\
SP        & Medium & Na\"{i}ve  & Strict & 145 & 0.45 & 30.6 & 0.523 \\
SP        & Medium & Hybrid & None   & 160 & 0.00 & 50.8 & 0.013 \\
SP        & Medium & Hybrid & Strict & 145 & 0.21 & 44.9 & 0.052 \\
\cmidrule{3-8}
SP        & High   & Na\"{i}ve  & None   & 539 & 0.75 &  6.4 & 0.503 \\
SP        & High   & Na\"{i}ve  & Strict & 172 & 0.82 & 13.5 & 0.474 \\
SP        & High   & Hybrid & None   & 573 & 0.40 & 14.0 & 0.027 \\
SP        & High   & Hybrid & Strict & 160 & 0.82 & 15.0 & 0.105 \\
\midrule
Entangled & Medium & Na\"{i}ve  & None   & 479 & 0.77 &  3.1 & 0.531 \\
Entangled & Medium & Na\"{i}ve  & Strict & 226 & 0.62 & 15.9 & 0.648 \\
Entangled & Medium & Hybrid & None   & 480 & 0.48 &  7.0 & 0.055 \\
Entangled & Medium & Hybrid & Strict & 183 & 0.46 & 26.2 & 0.087 \\
\cmidrule{3-8}
Entangled & High   & Na\"{i}ve  & None   & 616 & 0.98 &  0.4 & 0.300 \\
Entangled & High   & Na\"{i}ve  & Strict & 425 & 1.00 &  0.1 & 0.206 \\
Entangled & High   & Hybrid & None   & 658 & 1.00 &  0.0 & 0.033 \\
Entangled & High   & Hybrid & Strict & 340 & 1.00 &  0.1 & 0.029 \\
\bottomrule
\end{tabular}
\\[4pt]
{\footnotesize Values averaged over 10 seeds. Hybrid architecture reduces governance-induced price volatility by 78--90\% across structured topologies (e.g., SP/medium: na\"{i}ve/strict $\sigma=0.52$ vs.\ hybrid/strict $\sigma=0.05$).}
\end{table*}

\section{Detailed Experiment Results}
\label{app:detailed-results}

This section presents figures and extended discussion for each experiment. The main paper reports condensed ablation results; here we provide the per-experiment visualisations and condition-by-condition analysis.

\subsection{Structure}
\label{app:exp1-detailed}

\begin{figure}[!t]
    \centering
    \includegraphics[width=1\linewidth]{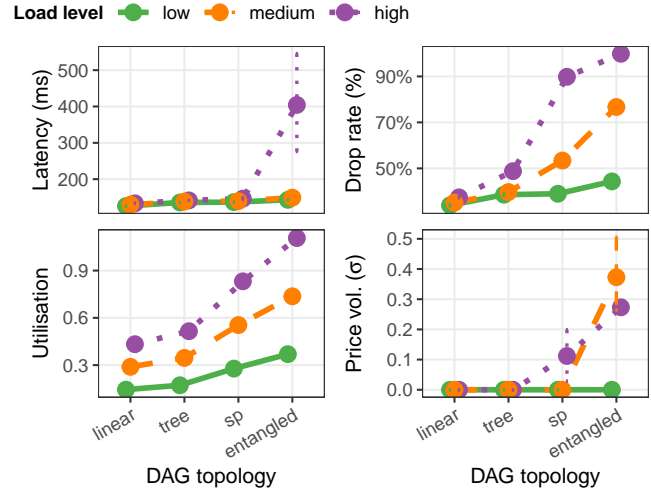}
    \caption{Structure: Effect of DAG topology and load on median latency (top-left), drop rate (top-right), utilisation (bottom-left), and price volatility (bottom-right). The polymatroidal tree topology maintains stable prices; entangled DAGs degrade sharply under load.}
    \label{fig:exp1-topology-supp}
\end{figure}

\cref{fig:exp1-topology-supp} summarises the results across four panels. Price volatility, measured as the coefficient of variation (CV $=$ sd/mean) of the agent-facing per-task cost across rounds, is the key metric linking the simulation to the theoretical prediction of Proposition~1: topologies with polymatroidal feasibility regions should exhibit convergent, stable prices.

At low load ($\lambda=0.5$), the tree and SP topologies achieve comparable median latencies ($137$--$139$~ms), near-zero drop rates, and zero price volatility, while the entangled topology already shows elevated latency ($206$~ms) and non-zero volatility ($\sigma = 0.224$). Under high load ($\lambda=1.5$), the tree topology rises to a median latency of $390$~ms with a moderate drop rate ($12\%$) and a price volatility of $\sigma = 0.215$.

In contrast, the entangled DAG under high load reaches a median latency of $616$~ms ($p_{95} = 714$~ms), a drop rate of $98\%$, and a price volatility of $\sigma = 0.300$. The SP topology exhibits intermediate behaviour: median latency of $539$~ms at high load with a drop rate of $75\%$ and price volatility of $\sigma = 0.503$. Under medium load, the entangled topology already shows substantial price volatility ($\sigma = 0.531$), with SP at $\sigma = 0.198$ and tree at $\sigma = 0.021$.

The utilisation panel reveals the structural mechanism: entangled DAGs push effective demand to $1.66\times$ capacity at high load, creating persistent excess demand that drives price oscillation; SP reaches $1.25\times$; tree stays nearer the critical threshold at $0.77\times$. The SP topology is fully polymatroidal, yet exhibits non-zero volatility at high load because its demand profile pushes per-tier utilisation toward saturation, where the t\^{a}tonnement fails to converge despite equilibrium existence. The ordering by drop rate and saturation (tree $<$ SP $<$ entangled) reflects both structural complexity and demand-profile effects, motivating the encapsulation approach; note that at the most saturated cells (entangled/high) the measured volatility falls back as near-total drops collapse the price oscillation.

\subsection{Governance}
\label{app:exp3-detailed}

Three governance regimes are compared: no governance (all tasks share a single resource pool); moderate governance (capacity split $50$/$50$ between compliant and general pools); and strict governance ($70$/$30$ split with trust gate $\ge 0.75$).

\begin{figure}[!t]
    \centering
    \includegraphics[width=1\linewidth]{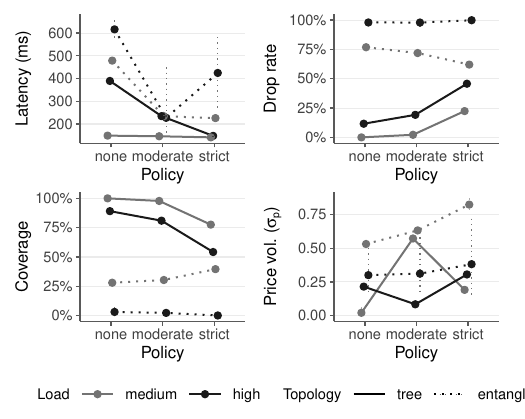}
    \caption{Governance: Effect of governance policy (none/moderate/strict) on performance for tree and entangled DAGs under medium and high load. Panels: median latency, drop rate, service coverage, price volatility.}
    \label{fig:exp3-governance-supp}
\end{figure}

\cref{fig:exp3-governance-supp} reports median latency, drop rate, service coverage, and price volatility. By construction, all allocated tasks are policy-compliant; the meaningful trade-off appears in service coverage. For the tree topology under high load, service coverage falls from $89\%$ (no governance) to $54\%$ (strict), while median latency decreases from $390$~ms to $148$~ms as strict governance sheds congestion-prone load. For the entangled topology under high load, service coverage falls from $3.2\%$ to $0.16\%$, and median latency decreases from $616$~ms to $425$~ms (a $31\%$ reduction), as governance excludes the most congestion-prone allocations.

The price-volatility panel reveals a secondary effect: strict governance substantially increases price volatility across all conditions, because capacity partitioning creates smaller sub-pools that saturate more easily. This is most pronounced for tree topologies, which exhibit zero volatility without governance but significant volatility under strict partitioning. This does not contradict polymatroidal preservation: governance-induced capacity partitioning pushes per-pool utilisation toward saturation, where the t\^{a}tonnement fails to converge despite equilibrium existence. The observation demonstrates that governance can introduce \emph{operational} market instability even in structurally well-behaved topologies.

\paragraph*{Governance design-space considerations.} The capacity-partitioning model tested above captures the structural effect of policy constraints on feasibility but represents only one point in a broader design space. In practice, governance may involve \emph{data-locality restrictions} that constrain which tiers are admissible for specific task classes (e.g., medical data confined to jurisdiction-compliant nodes), \emph{cross-domain policy negotiation} where integrators must reconcile conflicting jurisdictional rules before admitting cross-domain slices, and \emph{dynamic trust evolution} where reputation scores respond to adversarial behaviour or changing compliance requirements. The simulation results establish that even a relatively simple governance model produces quantifiable efficiency--compliance trade-offs that depend jointly on topology and load; richer models would likely amplify these interactions. This complements findings in trustworthy distributed systems~\cite{sun2019trust,huang2022trust} by providing a resource- and dependency-aware perspective.

\subsection{Architecture}
\label{app:exp4-detailed}

Three strategies are compared: \emph{na\"{i}ve} (per-tier pricing), \emph{hybrid (EMA only)} (smoothed slice price, efficiency~$= 1.0$), and \emph{hybrid (full)} (EMA plus efficiency factor: SP~$0.75$, entangled~$0.85$).

\begin{figure}[!t]
    \centering
    \includegraphics[width=1\linewidth]{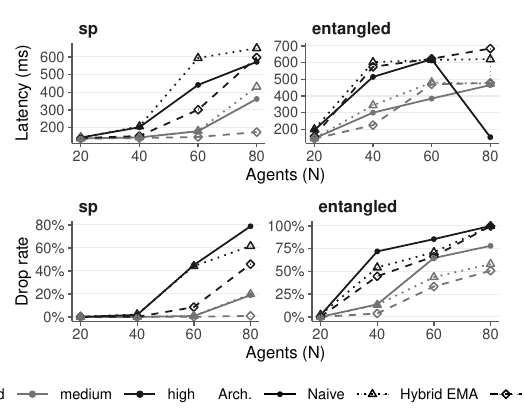}
    \caption{Architecture (operational): Na\"{i}ve, hybrid (EMA only), and hybrid (full) architecture across agent counts, DAG types, and loads. Panels: latency, drop rate.}
    \label{fig:exp4-operational-supp}
\end{figure}

\begin{figure}[!t]
    \centering
    \includegraphics[width=1\linewidth]{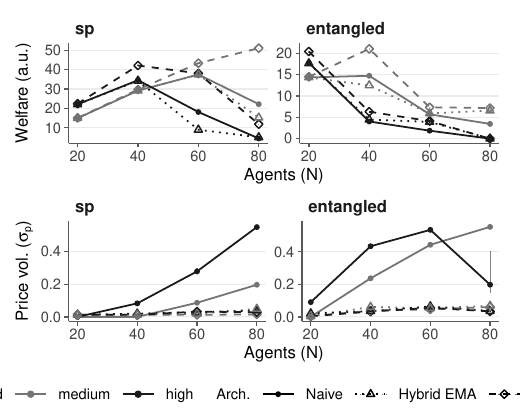}
    \caption{Architecture (economic): Welfare and price volatility under the same conditions as \cref{fig:exp4-operational-supp}.}
    \label{fig:exp4-economic-supp}
\end{figure}

\cref{fig:exp4-operational-supp,fig:exp4-economic-supp} present results in four panels. The price-volatility panel directly tests the encapsulation prediction. Under na\"{i}ve allocation, price volatility grows with $N$ and load ($\sigma = 0.55$ for SP at $N=80$/high, $\sigma = 0.55$ for entangled at $N=80$/medium). Under hybrid allocation, volatility is reduced to $\sigma_p \le {\approx}\,0.06$ in nearly all conditions, a median reduction of $89\%$ ($83$--$96\%$) across the volatile cells.

For latency, hybrid provides substantial improvements in the congested regime (SP/high at $N=60$: $301$~ms vs.\ $441$~ms for na\"{i}ve). Both architectures saturate at similar thresholds ($N \approx 80$ for entangled/high, $N \approx 80$ for SP/high), reaching median latencies of $570$--$684$~ms with near-total drop rates, confirming that the hybrid architecture's primary contribution is price stability alongside delayed saturation rather than unbounded latency reduction.

An EMA-only ablation confirms that EMA smoothing accounts for the majority of volatility reduction (SP/high/$N=60$: $\sigma = 0.029$ vs.\ na\"{i}ve $0.278$), while latency and welfare improvements depend on the efficiency factor ($594$~ms EMA-only vs.\ $301$~ms full hybrid). The two serve distinct roles: EMA stabilises prices (architectural), while the efficiency factor reduces congestion (operational).

\subsection{Architecture $\times$ Governance}
\label{app:exp5-detailed}

This experiment crosses architecture (na\"{i}ve / hybrid) $\times$ governance (none / strict) $\times$ topology (tree / SP / entangled) $\times$ load (medium / high), yielding $240$~runs.

\begin{figure}[!t]
    \centering
    \includegraphics[width=1\linewidth]{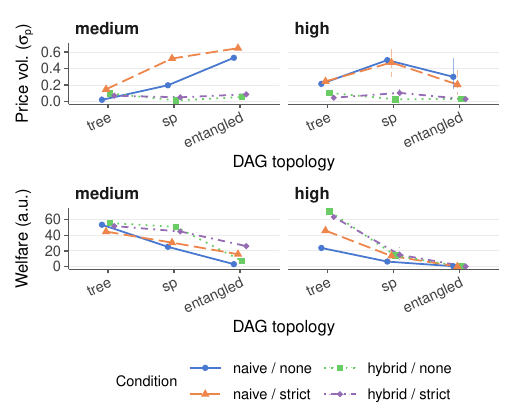}
    \caption{Architecture $\times$ Governance: interaction effects. Top row: price volatility by topology and condition; bottom row: welfare. The hybrid architecture mitigates the volatility penalty introduced by governance.}
    \label{fig:exp5-interaction-supp}
\end{figure}

\cref{fig:exp5-interaction-supp} presents price volatility and welfare across the four condition combinations. When both the hybrid architecture and strict governance are active, the hybrid architecture substantially mitigates the governance-induced volatility penalty: for SP under medium load, $\sigma$ drops from $0.52$ (na\"{i}ve/strict) to $0.05$ (hybrid/strict), a $90\%$ reduction. For entangled/medium, hybrid/strict reduces volatility from $\sigma = 0.65$ to $\sigma = 0.09$ ($87\%$). For tree/high, $\sigma = 0.25$ to $\sigma = 0.05$ ($81\%$).

Welfare results reveal the interaction's nature. For tree/high, the interaction is strongly sub-additive ($\Delta_{\mathrm{synergy}} = -29.3$ [$-29.9$, $-28.6$]): governance reduces welfare more sharply when hybrid is active because the efficiency factor leaves less slack. For SP topologies the combination is also sub-additive (SP/medium: $-11.3$ [$-11.9$, $-10.7$]; SP/high: $-6.1$ [$-17.3$, $-0.65$]). For entangled topologies the interaction is super-additive (entangled/medium: $6.4$ [$6.0$, $6.9$]; entangled/high: $0.42$ [$0.12$, $0.90$]).

\subsection{Mechanism}
\label{app:exp6-detailed}

Four allocation rules are compared: random feasible packing, earliest-deadline-first (EDF), value-greedy (packing by $\mathbb{E}[V]$ without prices), and the full t\^{a}tonnement market mechanism. Each level adds one component (scheduling awareness, value model, price signals).

\begin{figure}[!t]
    \centering
    \includegraphics[width=1\linewidth]{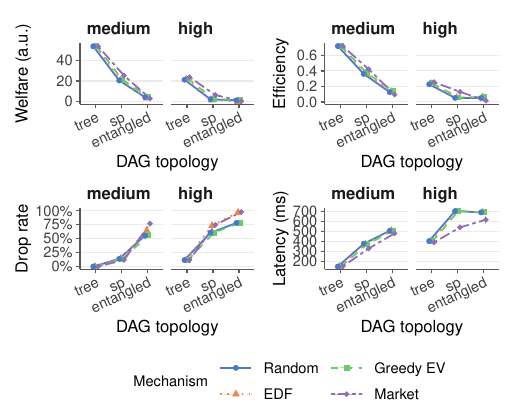}
    \caption{Mechanism: ablation comparing random, EDF, value-greedy, and market allocation across topologies and load levels (na\"{i}ve architecture). Panels: welfare (top-left), efficiency ratio (top-right), drop rate (bottom-left), median latency (bottom-right).}
    \label{fig:exp6-mechanism-supp}
\end{figure}

\cref{fig:exp6-mechanism-supp} presents results for the na\"{i}ve architecture. The market discovers positive prices; in slack cells it $\approx$ties value-greedy (prices near the reserve floor, surplus ordering close to value ordering), while under contention it adds modest welfare over the mechanism-free baselines.

The mechanism ordering is topology-dependent. For tree/high load, value-greedy/market achieve welfare of $22.4$--$23.8$~a.u.\ vs.\ $21.2$--$21.7$ for random/EDF (a modest $\sim$10\% improvement). For SP/high, the market yields $6.4$~a.u.\ vs.\ $2.2$ for random and $2.5$ for value-greedy, as it identifies tasks with sufficient deadline slack to survive high-congestion paths. For SP/medium, all mechanisms perform comparably ($20.5$--$25.2$~a.u.), with the market modestly ahead.

Under hybrid architecture the market advantage strengthens under contention (SP/high: market $14.0$~a.u.\ vs.\ random $5.8$, value-greedy $6.5$, a $2.43\times$ gain over random), because the efficiency factor creates headroom in which truthful price-based selection concentrates surviving capacity on high-value tasks.

\section{Round-2 Experiment Details}
\label{app:round2}

The four experiments added in this revision---incentive compatibility under strategic agents, the real agentic workload, capacity-faithfulness, and encapsulation overhead---are detailed here; headline figures appear in Section~V (incentive compatibility, agentic workload) and Section~VI (overhead budget) of the main text.

\subsection{Incentives: strategic misreporting}
\label{app:exp7-dsic}
We run the VCG mechanism with Clarke-pivot payments on the polymatroidal (tree, SP) topologies and on the measured agentic DAG (Agentic Workload), and let one agent at a time misreport its valuations by a shading factor~$\alpha$ (others truthful). Per-agent \emph{regret}---the utility forgone relative to truthful reporting, evaluated at the agent's true valuation net of payment---is averaged over agents and rounds ($10$~seeds, $30$~rounds). \cref{tab:exp7-regret} reports the full curve.

\begin{table}[t]
\centering
\caption{Incentives: mean per-agent regret under valuation shading by factor~$\alpha$. Regret is zero at $\alpha=1$ (truthful) by construction and strictly positive for every misreport, the empirical content of the DSIC guarantee (Proposition~2); standard errors are below $0.007$ across the grid (below $0.004$ for the SP and agentic cells quoted in the main text).}
\label{tab:exp7-regret}
\footnotesize
\begin{tabular}{@{}lcccccc@{}}
\toprule
Topology & $\alpha{=}0.5$ & $0.7$ & $0.9$ & $1.0$ & $1.1$ & $1.3$ \\
\midrule
Tree     & 0.048 & 0.017 & 0.001 & 0 & 0.001 & 0.003 \\
SP       & 0.106 & 0.070 & 0.008 & 0 & 0.006 & 0.040 \\
Agentic  & 0.080 & 0.059 & 0.009 & 0 & 0.007 & 0.044 \\
\bottomrule
\end{tabular}
\end{table}

Regret grows with the magnitude of misreport on every topology; it is smallest on the tree (most deadline slack, least binding) and largest on SP and the agentic DAG at the most binding load. No misreport---under- or over-stating value---improves an agent's outcome, confirming the DSIC property operationally.

\subsection{Agentic Workload}
\label{app:exp8-agentic}
A multi-step, tool-using LLM agent (plan, two parallel tool calls, then aggregate) is instrumented on a local model; measured token counts give device/edge/cloud demand weights of $1.11/1.0/2.25$ (the aggregation stage is heaviest) over a series--parallel call graph. The allocation is then driven by this measured profile ($N=200$); the demands are real, the market simulated. \cref{tab:exp8-agentic} reports hybrid versus na\"{i}ve allocation.

\begin{table}[t]
\centering
\caption{Agentic Workload: hybrid vs.\ na\"{i}ve allocation on the measured agentic workload. At high load the integrator cuts price volatility $\sigma_p$ from $0.100$ to $0.077$ ($23\%$) and roughly halves the drop rate; the light workload contends only weakly at medium load.}
\label{tab:exp8-agentic}
\footnotesize
\begin{tabular}{@{}llccccc@{}}
\toprule
Arch. & Load & Lat.\ (ms) & Drop & Welfare & Eff. & $\sigma_p$ \\
\midrule
Hybrid & medium & 89  & 0.000 & 192.1 & 1.16 & 0.175 \\
Na\"{i}ve  & medium & 253 & 0.099 & 79.7  & 0.48 & 0.133 \\
Hybrid & high   & 357 & 0.197 & 65.3  & 0.37 & 0.077 \\
Na\"{i}ve  & high   & 349 & 0.404 & 53.9  & 0.31 & 0.100 \\
\bottomrule
\end{tabular}
\end{table}

Both framework predictions hold on the real call structure: the integrator reduces price volatility under contention (the $\sigma_p$ column), and VCG is strategy-proof on the agentic DAG (the ``Agentic'' row of \cref{tab:exp7-regret}). The structural and incentive results are therefore not artefacts of synthetic topologies.

\subsection{Faithfulness under Capacity Over-Advertisement}
\label{app:exp10-faithfulness}
This experiment probes encapsulation condition~(E2) directly: an integrator advertises a slice capacity inflated by a factor $\kappa \in \{1.0, 1.5, 2.0\}$ beyond its true max-flow, and we measure whether the over-statement degrades the market. \cref{tab:exp10-faithfulness} reports the outcome.

\begin{table}[t]
\centering
\caption{Capacity-faithfulness probe: market outcomes when the integrator over-advertises its slice capacity by factor~$\kappa$. On the structured (SP) topology over-advertisement up to $2\times$ does not degrade the market; on the already-saturated entangled topology a $2\times$ over-statement tips an already-collapsing regime into total collapse.}
\label{tab:exp10-faithfulness}
\footnotesize
\begin{tabular}{@{}llcccc@{}}
\toprule
Topology & $\kappa$ & Drop & Welfare & $\sigma_p$ & Lat.\ (ms) \\
\midrule
SP        & 1.0 & 0.397 & 14.0  & 0.027 & 573 \\
SP        & 1.5 & 0.390 & 16.5  & 0.013 & 577 \\
SP        & 2.0 & 0.390 & 16.5  & 0.013 & 577 \\
Entangled & 1.0 & 0.999 & 0.014 & 0.033 & 658 \\
Entangled & 1.5 & 0.998 & 0.015 & 0.025 & 676 \\
Entangled & 2.0 & 1.000 & 0.000 & 0.018 & 1224 \\
\bottomrule
\end{tabular}
\end{table}

The result is an honest null in the regime where the theory applies: on SP the polymatroidal interface tolerates over-advertisement (welfare is stable to mildly higher and drop/volatility flat), so a conservative-to-moderate misstatement of $\bar{C}_j$ does not mislead the market into harmful over-commitment. Degradation appears only on the entangled topology, which is already non-polymatroidal and saturated, where a $2\times$ over-statement collapses welfare and doubles latency. Faithfulness therefore matters precisely where the structural conditions are already violated; encapsulation does not introduce a new fragility in the structured regime.

\subsection{Encapsulation-Overhead Sweep}
\label{app:exp12-overhead}
The integrator's cross-layer abstraction adds an additive per-slice protocol-translation latency $\delta_{\mathrm{enc}}$ on the hybrid path only (the na\"{i}ve baseline has no integrator and incurs none). We sweep $\delta_{\mathrm{enc}} \in \{0, 25, 50\}$~ms; \cref{tab:exp12-overhead} reports the effect.

\begin{table}[t]
\centering
\caption{Encapsulation-overhead sweep. On SP, $\delta_{\mathrm{enc}}$ up to $50$~ms leaves served latency and welfare essentially unchanged and raises the drop rate by under three points; on the saturated entangled topology it adds latency to an already-collapsed regime.}
\label{tab:exp12-overhead}
\footnotesize
\begin{tabular}{@{}llcccc@{}}
\toprule
Topology & $\delta_{\mathrm{enc}}$ (ms) & Lat.\ (ms) & p95 (ms) & Drop & Welfare \\
\midrule
SP        & 0  & 573 & 664 & 0.397 & 14.0 \\
SP        & 25 & 575 & 667 & 0.413 & 14.1 \\
SP        & 50 & 577 & 668 & 0.424 & 14.0 \\
Entangled & 0  & 658 & 755 & 0.999 & 0.014 \\
Entangled & 25 & 683 & 784 & 0.999 & 0.012 \\
Entangled & 50 & 708 & 813 & 0.999 & 0.010 \\
\bottomrule
\end{tabular}
\end{table}

On the series--parallel topology a realistic protocol-translation overhead ($\delta_{\mathrm{enc}} \le 50$~ms) is well within budget: served latency rises by under $5$~ms and welfare is flat, the cost appearing only as a ${\approx}\,2.7$-point increase in drop rate ($39.7\% \to 42.4\%$). Combined with the ${\approx}\,140$~ms median-latency headroom the integrator holds over na\"{i}ve allocation on SP under load (Section~VI of the main text), the cross-layer abstraction's advantage is robust to realistic overheads; the break-even is set by that latency headroom rather than by the modest coverage cost at $\le 50$~ms.

\section{Statistical Analysis}
\label{app:statistics}

This section describes the statistical methodology used throughout the evaluation and presents detailed test results for each experiment. All analyses use non-parametric methods appropriate for small-sample simulation studies ($n=10$~seeds per condition).

\subsection{Methodology}

\paragraph*{Confidence intervals.}
All reported metrics use bias-corrected and accelerated (BCa) bootstrap 95\% confidence intervals with $B=2{,}000$ resamples over the 10 seeds per condition~\cite{efron1994introduction}. For conditions where the BCa method fails to converge (e.g., zero-variance metrics), we fall back to the percentile method.

\paragraph*{Hypothesis tests.}
For multi-group comparisons (e.g., topology with 4 levels), we use the Kruskal--Wallis $H$-test~\cite{kruskal1952wallis}, a non-parametric analogue of one-way ANOVA. Post-hoc pairwise comparisons use two-sided Wilcoxon rank-sum (Mann--Whitney $U$) tests with Holm correction for multiple comparisons~\cite{holm1979simple}. The choice of non-parametric tests is motivated by the small sample size ($n=10$), which precludes reliable normality assessment, and by the nature of simulation metrics (latency, drop rate, price volatility), which are typically non-normal with skewed or bounded distributions.

\paragraph*{Effect sizes.}
Pairwise effect sizes are reported as Cliff's delta ($\delta$), a non-parametric measure appropriate for ordinal and non-normal data~\cite{cliff1993dominance}. Magnitude thresholds follow Romano et al.~\cite{romano2006appropriate}: $|\delta| < 0.147$ (negligible), $< 0.33$ (small), $< 0.474$ (medium), $\geq 0.474$ (large).

\paragraph*{Interaction analysis.}
Factorial interactions (e.g., topology $\times$ load in Exp~1, architecture $\times$ governance in Exp~5) are tested using Aligned Rank Transform (ART) ANOVA~\cite{wobbrock2011aligned}, a non-parametric method that supports multi-factor interaction analysis without normality assumptions. We report $F$-statistics and $p$-values for main effects and two-way interactions.

\paragraph*{Synergy analysis.}
For the Architecture$\times$Governance experiment, we assess whether the combination of hybrid architecture and governance yields super-additive (synergistic) or sub-additive (redundant) effects. Let $W_{ij}$ denote the mean welfare under architecture $i \in \{\text{na\"{i}ve}, \text{hybrid}\}$ and governance $j \in \{\text{none}, \text{strict}\}$. The synergy measure is:
\begin{multline}
\Delta_{\text{synergy}} = \underbrace{(W_{\text{hybrid,strict}} - W_{\text{na\"{i}ve,none}})}_{\text{joint gain}} \\
  - \underbrace{(W_{\text{hybrid,none}} - W_{\text{na\"{i}ve,none}})}_{\text{marginal arch.}}
  - \underbrace{(W_{\text{na\"{i}ve,strict}} - W_{\text{na\"{i}ve,none}})}_{\text{marginal gov.}}
\end{multline}
Bootstrap 95\% CIs are computed for $\Delta_{\text{synergy}}$; if the interval includes zero, the composition is additive.

\subsection{Structure: Statistical Summary}
\label{app:stat-exp1}

Kruskal--Wallis tests confirm that topology has a significant main effect on the key metrics at every load level. At high load, the effect of topology on median latency ($H(2)=20.8$, $p < 10^{-4}$), drop rate ($H(2)=25.9$, $p < 10^{-5}$), welfare ($H(2)=25.9$, $p < 10^{-5}$), and price volatility ($H(2)=9.49$, $p = 0.009$) is significant. At medium load, all four metrics remain significant ($H(2)=25.8$, $p < 10^{-5}$ for latency and welfare; $H(2)=26.3$, $p < 10^{-5}$ for drop rate and volatility). At low load, topology still significantly affects latency ($H(2)=25.8$, $p < 10^{-5}$), drop rate ($H(2)=27.5$, $p < 10^{-5}$), welfare ($H(2)=21.7$, $p < 10^{-4}$), and volatility ($H(2)=27.5$, $p < 10^{-5}$).

Pairwise Wilcoxon tests (Holm-corrected) show that the entangled topology differs significantly from all other topologies on latency, drop rate, and welfare at medium and high load ($p_{\mathrm{adj}} < 0.006$). On price volatility at high load the entangled-vs-other contrasts are not significant after correction ($p_{\mathrm{adj}} \ge 0.28$), reflecting the collapse of oscillation under near-total saturation, whereas the SP-vs-tree contrast is significant. At medium load all pairwise volatility contrasts are significant ($p_{\mathrm{adj}} < 0.001$).

\subsection{Scaling: Statistical Summary}
\label{app:stat-exp2}

Spearman rank correlations between agent count ($N$) and each metric confirm topology-dependent scaling. For the tree topology, median latency, utilisation, and welfare show strong monotonic relationships with $N$ ($\rho = 0.99$, $p < 10^{-46}$), while drop rate and price volatility remain identically zero (correlation not applicable); the absolute magnitudes of change are small (e.g., median latency increases by only 6~ms across $N=10$--$60$). For SP, latency, utilisation, and welfare are strongly correlated ($\rho = 0.99$, $p < 10^{-46}$), with drop rate and price volatility weaker but still significant ($\rho = 0.65$--$0.67$, $p < 10^{-7}$) as degradation onsets near $N=60$. For the entangled topology, latency, utilisation, drop rate, and price volatility are strongly correlated with $N$ ($\rho = 0.98$, $p < 10^{-39}$), confirming the steep monotonic degradation visible in the figures; welfare correlates negatively and weakly ($\rho = -0.28$, $p = 0.03$) as rising drops erode realised value at high $N$.

\subsection{Governance: Statistical Summary}
\label{app:stat-exp3}

Kruskal--Wallis tests show that governance policy has a significant effect on all metrics for the tree topology under high load: median latency ($H(2)=25.8$, $p < 10^{-5}$), drop rate ($H(2)=25.8$, $p < 10^{-5}$), welfare ($H(2)=25.1$, $p < 10^{-5}$), coverage ($H(2)=25.8$, $p < 10^{-5}$), and price volatility ($H(2)=25.8$, $p < 10^{-5}$). Pairwise Wilcoxon tests (Holm-corrected) confirm that the none-vs-strict contrast is significant for all metrics ($p_{\mathrm{adj}} < 0.001$).

For the entangled topology under high load, governance effects are far weaker because the system is already saturated regardless of policy. Median latency shows a marginally significant effect ($H(2)=7.76$, $p=0.021$), but drop rate ($H(2)=1.34$, $p=0.51$), welfare ($H(2)=2.35$, $p=0.31$), coverage ($H(2)=0.66$, $p=0.72$), and price volatility ($H(2)=0.46$, $p=0.80$) show no significant governance effect. At medium load, by contrast, governance significantly affects every metric for the entangled topology ($H(2)\ge 19.9$, $p < 10^{-4}$).

\subsection{Architecture: Statistical Summary}
\label{app:stat-exp4}

Kruskal--Wallis tests for the effect of architecture (na\"{i}ve, EMA-only, full hybrid) show that the strongest and most consistent effect is on price volatility. For entangled/medium load, architecture significantly affects price volatility ($H(2)=23.3$, $p < 10^{-5}$) and welfare ($H(2)=13.2$, $p=0.001$) but not latency ($H(2)=4.63$, $p=0.099$). For SP/high load, architecture significantly affects welfare ($H(2)=23.0$, $p < 10^{-4}$) and price volatility ($H(2)=21.2$, $p < 10^{-4}$), with a marginal effect on latency ($H(2)=6.71$, $p=0.035$) and none on drop rate ($H(2)=5.07$, $p=0.079$). This confirms that the hybrid architecture's primary contribution is price stabilisation, with welfare gains concentrated in the congested-but-not-saturated regime, consistent with the main text's interpretation.

\subsection{Architecture $\times$ Governance: Statistical Summary}
\label{app:stat-exp5}

\paragraph*{Main effects.} Kruskal--Wallis tests across all 240~runs show that governance policy has a highly significant main effect on latency ($H(1)=57.0$, $p < 10^{-13}$), drop rate ($H(1)=13.1$, $p < 10^{-3}$), and coverage ($H(1)=27.7$, $p < 10^{-6}$); its effect on welfare is not significant ($H(1)=1.31$, $p=0.25$) and on price volatility only marginally so ($H(1)=3.96$, $p=0.047$). Architecture has a highly significant main effect on price volatility ($H(1)=93.6$, $p < 10^{-21}$), welfare ($H(1)=13.1$, $p < 10^{-3}$), drop rate ($H(1)=10.6$, $p=0.001$), and efficiency ($H(1)=23.3$, $p < 10^{-5}$), but not on latency ($H(1)=2.02$, $p=0.15$), confirming that the hybrid architecture acts primarily through price stabilisation while governance acts primarily through latency and coverage.

\paragraph*{Synergy analysis.} The synergy measure $\Delta_{\text{synergy}}$ (see~\cref{app:statistics} methodology) reveals topology-dependent interaction patterns. For the tree topology under high load, $\Delta_{\text{synergy}} = -29.3$ [95\% CI: $-29.9$, $-28.6$], indicating strong sub-additivity: the combination achieves substantially less than the sum of individual gains. For tree/medium, $\Delta_{\text{synergy}} = 4.87$ [$4.65$, $4.98$], super-additive. For SP topologies, the synergy term is sub-additive with CIs excluding zero in both load conditions (SP/medium: $-11.3$ [$-11.9$, $-10.7$]; SP/high: $-6.10$ [$-17.3$, $-0.65$]). For entangled topologies, the synergy term is positive with CIs excluding zero (entangled/medium: $6.40$ [$5.98$, $6.86$]; entangled/high: $0.42$ [$0.12$, $0.90$]), indicating super-additive effects where the combination of price smoothing and governance partitioning provides more benefit than predicted by individual contributions, though the entangled/high magnitude is small given that condition is near total saturation.

\subsection{Mechanism: results}
\label{app:exp6}

The Mechanism experiment compares four allocation mechanisms (random, EDF, value-greedy, market) across three topologies, two load levels, and two architectures ($4 \times 3 \times 2 \times 2 \times 10 = 480$~runs). The main text presents na\"{i}ve architecture results; here we report the full result table and statistical summary including hybrid architecture conditions.

\begin{table}[ht]
\centering
\caption{Mechanism: Ablation Summary (Na\"{i}ve Architecture, Averaged over 10 Seeds)}
\label{tab:exp6-naive}
\renewcommand{\arraystretch}{1.15}
\begin{tabular}{ll rrrr}
\toprule
\textbf{Topology} & \textbf{Load} & \textbf{Random} & \textbf{EDF} & \textbf{Greedy EV} & \textbf{Market} \\
\midrule
\multicolumn{6}{l}{\emph{Welfare (a.u.)}} \\
Tree      & Medium & 53.6 & 53.5 & 53.5 & 53.5 \\
Tree      & High   & 21.2 & 21.7 & 22.4 & 23.8 \\
SP        & Medium & 20.5 & 20.9 & 21.1 & 25.2 \\
SP        & High   &  2.2 &  1.6 &  2.5 &  6.4 \\
Entangled & Medium &  4.2 &  3.6 &  4.8 &  3.1 \\
Entangled & High   &  1.3 &  0.2 &  1.6 &  0.4 \\
\midrule
\multicolumn{6}{l}{\emph{Drop rate (\%)}} \\
Tree      & Medium &  0.0 &  0.0 &  0.0 &  0.0 \\
Tree      & High   & 11.8 & 11.6 & 11.5 & 11.6 \\
SP        & Medium & 13.8 & 14.4 & 13.7 & 11.9 \\
SP        & High   & 60.5 & 73.3 & 60.5 & 74.8 \\
Entangled & Medium & 55.7 & 64.2 & 55.8 & 76.9 \\
Entangled & High   & 78.1 & 96.9 & 78.3 & 98.0 \\
\bottomrule
\end{tabular}
\end{table}

\paragraph*{Key finding: market $\approx$ value-greedy in slack, adds welfare under contention.} The market discovers positive prices. In slack cells, where prices settle near the reserve floor, the market and value-greedy allocation track each other closely (surplus ordering close to value ordering). Under contention, the market adds modest but consistent welfare over the mechanism-free baselines (entangled/medium under hybrid: market $6.98$ vs.\ random $4.95$, $1.41\times$, vs.\ value-greedy $5.53$, $1.26\times$; SP/high under hybrid reaches $2.43\times$ over random). The market's decisive contribution under truthful bidding is therefore incentive-theoretic rather than informational, as the strategic-bidding test confirms.

\paragraph*{Hybrid architecture.} The market--value-greedy near-equivalence in slack cells holds under hybrid architecture, while the market's advantage under contention is amplified by the efficiency factor. For SP/high load under hybrid, the market achieves welfare of $14.0$~a.u.\ versus $6.5$ for value-greedy and $5.8$ for random (a $2.43\times$ gain over random); the efficiency factor reduces effective demand, leaving surviving capacity that truthful price-based selection concentrates on high-value tasks. The magnitude of the market's contention advantage therefore depends jointly on topology, load, and architectural context.

\subsection{Mechanism: Statistical Summary}
\label{app:stat-exp6}

\paragraph*{Main effects (na\"{i}ve architecture).} Kruskal--Wallis tests for the mechanism factor on welfare are significant in the congested, value-model-decisive conditions (entangled/high $H(3) = 64.3$, $p < 10^{-13}$; SP/high $H(3) = 31.6$, $p < 10^{-6}$; entangled/medium $H(3) = 16.9$, $p < 10^{-3}$). On the unsaturated tree/medium, tree/high, and SP/medium cells the mechanisms achieve near-identical welfare ($H(3) \le 3.3$, $p \ge 0.35$), because capacity is not the binding constraint there and task selection barely affects the served set.

\paragraph*{Pairwise comparisons (Wilcoxon, Holm-corrected).} The greedy\_ev vs.\ market contrast is non-significant in the slack and lightly loaded cells ($p_{\mathrm{adj}} = 1$; negligible Cliff's $\delta$), but under heavy contention the market is significantly higher (entangled/high $p_{\mathrm{adj}} < 10^{-6}$, $\delta = 0.95$; SP/high $p_{\mathrm{adj}} < 10^{-3}$, $\delta = -0.74$), showing that price coordination adds welfare beyond the value model where capacity is the binding constraint. The greedy\_ev vs.\ random contrast is significant and large where the value model is decisive: entangled/high ($p_{\mathrm{adj}} < 10^{-6}$, $\delta = 1.0$), entangled/medium ($p_{\mathrm{adj}} = 0.03$, $\delta = 0.52$); it is negligible on the unsaturated tree and SP/medium cells. EDF and random are not distinguishable in any condition ($p_{\mathrm{adj}} = 1$).

\paragraph*{Theory-to-experiment mapping.}
\cref{tab:theory-experiment} links each formal result from the main paper to the experiment that tests its predictions and the key observables used for evaluation.

\begin{table}[ht]
\centering
\caption{Theory-to-experiment mapping. Each formal result is paired with its empirical test and key observables.}\label{tab:theory-experiment}
\footnotesize
\renewcommand{\arraystretch}{1.15}
\setlength{\tabcolsep}{3pt}
\begin{tabular}{@{}lll@{}}
\toprule
\textbf{Result} & \textbf{Experiment} & \textbf{Observables} \\
\midrule
Prop.~1 (Polymatroid) & Exps.~1--2 ($-$S) & $\sigma_p$, drop rate, scaling \\
Lemma~1 (GS valuations) & Mechanism ($-$M) & Welfare equivalence \\
Prop.~2 (DSIC mechanism) & Mechanism ($-$M) & Market $\approx$ value-greedy \\
Prop.~3 (Encapsulation) & Architecture ($-$H) & $\sigma_p$ reduction, latency \\
\bottomrule
\end{tabular}
\end{table}

\paragraph*{Ablation table (complete).}
The following table summarises the full ablation matrix across all six ablation experiments. Each row indicates which components are ablated and the corresponding experiment.

\begin{table*}[ht]
\centering
\caption{Complete Ablation Matrix}\label{tab:ablation-full}
\renewcommand{\arraystretch}{1.15}
\begin{tabular}{lll}
\toprule
\textbf{Ablation} & \textbf{Source} & \textbf{Isolates} \\
\midrule
Full system (S+H+G+M) & Exp~5 (hybrid, strict) & Everything on \\
$-$M (remove market)   & Exp~6 (greedy\_ev vs.\ market) & Price coordination \\
$-$G (remove governance) & Exp~5 (no gov conditions) & Trust-gated capacity \\
$-$H (remove hybrid) & Exp~4 (na\"{i}ve conditions) & Integrator encapsulation \\
$-$H$_{\mathrm{eff}}$ (remove eff.\ factor) & Exp~4 (EMA only) & Efficiency sub-component \\
$-$S (break structure) & Exp~1/4 (entangled) & Structural discipline \\
$-$M$-$H (heuristic + na\"{i}ve) & Exp~6 (EDF, na\"{i}ve) & Scheduling-only baseline \\
$-$S$-$H$-$M (worst case) & Exp~6 (random, entangled) & Absolute floor \\
\bottomrule
\end{tabular}
\end{table*}

\section{Additional Material}
\label{app:additional}

\subsection{Model Overview and Assumptions}
\label{app:model-overview}

\begin{figure}[!t]
    \centering
    \includegraphics[width=0.7\linewidth]{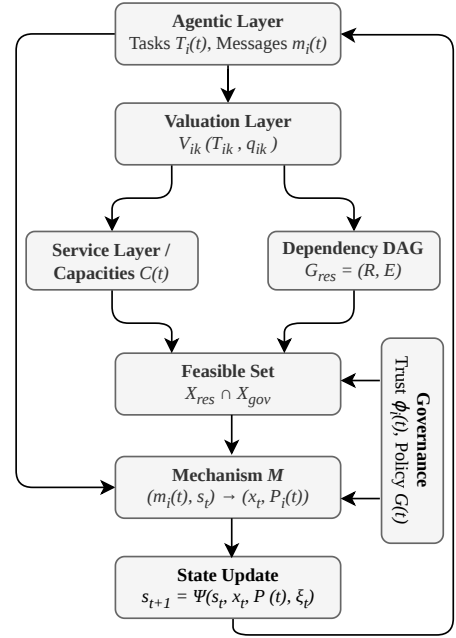}
    \caption{Model overview: agentic layer, latency-aware valuations, resource dependencies, governance constraints, mechanism, and state evolution.}
\label{fig:model-overview-supp}
\end{figure}

The framework in the main paper rests on assumptions standard in network/service management and mechanism design. In particular, we assume that: (i)~autonomous agents drive service demand and resource negotiation~\cite{li2024agentic,park2023generative}; (ii)~tasks are latency- and QoS-sensitive~\cite{meuser2024edgeai}; (iii)~the system spans heterogeneous device--edge--cloud resources~\cite{bonomi2012fog,meuser2024edgeai}; (iv)~services depend on multi-layer resource compositions~\cite{meuser2024edgeai,li2020service}; (v)~resource availability and network conditions vary over time~\cite{meuser2024edgeai,yang2022edge}; (vi)~governance and policies constrain resource usage~\cite{fornara2008institutions}; (vii)~agents may withhold or misreport private information~\cite{mcafee1992dominant}; (viii)~reputation reflects provider reliability~\cite{sun2019trust,huang2022trust}; (ix)~architectural encapsulation (slicing, virtualisation) is possible~\cite{foukas2017network,li2020service}; and (x)~at each decision epoch, agents evaluate allocations with respect to a sufficiently aligned representation of the system state~\cite{loven2025llmcontinuum,loven2023semantic}. That is, while private information and strategic incentives remain, agents condition their valuations on a commonly accepted state description~$s_t$ capturing resource capacities, network conditions, governance constraints, and reputational signals. The paper therefore focuses on normative allocation and mechanism design given a shared operational state, and does not model heterogeneous beliefs or epistemic disagreement about the underlying environment. The system state evolves in a Markovian manner driven by allocations, network conditions, and stochastic events.

\subsection{Scaling}
\label{app:exp2}

The second experiment tests whether the rate of performance degradation as the agent population increases depends on DAG topology, reinforcing that structure is a first-order determinant of scalability. We compare three topologies (tree, SP, entangled) across $N \in \{10, 20, 30, 40, 50, 60\}$ agents under medium load ($\lambda = 1.0$), with na\"{i}ve architecture and no governance. This yields $3 \times 6 \times 10 = 180$~runs.

\begin{figure}[!t]
    \centering
    \includegraphics[width=1\linewidth]{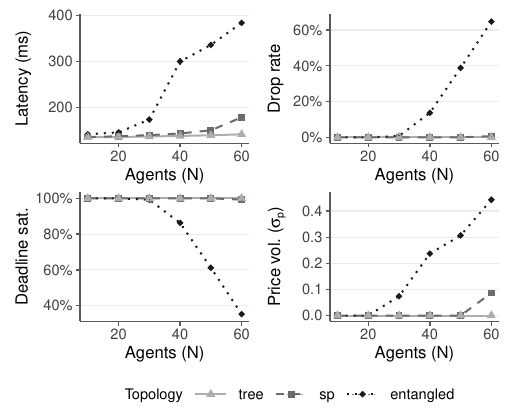}
    \caption{Scaling: Topology-dependent scaling as agent count increases ($N = 10$--$60$). Panels: median latency, drop rate, deadline satisfaction, price volatility.}
    \label{fig:exp2-scaling-supp}
\end{figure}

\cref{fig:exp2-scaling-supp} reveals three qualitatively different scaling regimes. The tree topology exhibits graceful degradation: median latency increases by only $6$~ms across the full range ($136$~ms at $N=10$ to $142$~ms at $N=60$), drop rates stay near zero, deadline satisfaction remains at $100\%$ throughout, and price volatility stays at zero. The balanced per-tier demand of tree-structured tasks distributes load evenly across tiers, preventing any single tier from saturating.

The SP topology shows moderate degradation. Latency and drop rates remain stable up to approximately $N=40$ (median latency $144$~ms, near-zero drop rate), then increase as cloud-tier demand exceeds capacity: by $N=60$, the median latency reaches $179$~ms with a drop rate of $0.7\%$ and deadline satisfaction of $99\%$. Prices remain nearly stable ($\sigma = 0.086$ at $N=60$), indicating that the SP topology's fully polymatroidal structure (Proposition~1) supports convergent t\^{a}tonnement while utilisation remains below the saturation threshold.

The entangled topology degrades earliest and most sharply. Drop rates rise steeply from $0\%$ at $N=10$ to $65\%$ at $N=60$, and deadline satisfaction falls from $100\%$ to just $35\%$. Critically, price volatility emerges at $N=30$ ($\sigma = 0.074$) and rises to $\sigma = 0.443$ at $N=60$, confirming that the onset of price instability coincides with the point at which per-tier demand exceeds the capacity threshold. Median latency for the entangled topology increases from $142$~ms to $385$~ms across the range, with the steepest jump between $N=30$ and $N=40$.

These results show that the ``scaling ceiling'', the agent count at which the system transitions from stable to degraded operation, depends critically on topology. For the tree topology, no such ceiling is visible within the tested range. For SP, degradation begins around $N=50$--$60$. For entangled, degradation begins by $N=30$--$40$, with near-total drops and persistent price oscillation by $N=60$. This confirms that topology is a first-order determinant of scalability, not merely a secondary factor behind load.

\subsection{Extended Mechanism-Design Discussion}
\label{app:mechanism-details}

This section provides additional detail on three aspects of the mechanism-design results that are condensed in the main paper for space.

\paragraph*{Governance preservation of polymatroidal structure.}
Governance constraints $\mathcal{X}_{\mathrm{gov}}$ impose coordinate-wise upper bounds $x_l \le u_l$ on each leaf service, encoding trust-based admission, jurisdictional filters, and access control. The intersection of a polymatroid $P(f)$ with coordinate truncations $\{x : x_l \le u_l\}$ is again a polymatroid, with rank function $f'(S) = \min_{T \subseteq S}\bigl\{f(T) + \sum_{l \in S \setminus T} u_l\bigr\}$~\cite{fujishige2005submodular}. Hence $\mathcal{X}_t = \mathcal{X}_{\mathrm{res}} \cap \mathcal{X}_{\mathrm{gov}}$ inherits polymatroidal structure. This preservation result relies on governance constraints being expressible as independent per-leaf upper bounds; more complex governance couplings (e.g., mutual exclusivity across leaves, cross-jurisdictional budget quotas, or path-specific exclusions) would generally not preserve polymatroidal structure and require separate treatment.

\paragraph*{Conditions for gross-substitutes valuations.}
Condition~(c) of Lemma~1 ensures that the items over which GS is defined have stable characteristics during each allocation round; between epochs, slice attributes may change as congestion and routing conditions evolve, and GS is re-established under the updated parameters. Note that slice capacities (the number of available units of each slice type) are determined by the polymatroid $\mathcal{X}_{\mathrm{res}}$, not by the valuations; this is a feasibility condition relevant to Proposition~2 rather than a requirement for GS itself. Additive separability (condition~(b)) is critical: shared agent-side budgets, cross-task latency coupling, or shared token caps would break it and can destroy the GS property. Without encapsulation, agents would bid on resource bundles along DAG paths, introducing complementarities that violate GS. Encapsulation eliminates these by absorbing them into the integrator.

\paragraph*{Roles of polymatroidal structure and inter-epoch limitations.}
Together, Propositions~1--2 and Lemma~1 establish that, \emph{within each decision epoch} (during which slice attributes are exogenous and fixed), real-time AI service economies with tree or series--parallel dependency structures admit mechanisms that are efficient, DSIC, and individually rational. The polymatroidal structure serves three distinct roles in this chain: characterising the feasibility geometry of the supply region (Proposition~1), enabling Walrasian equilibrium existence under GS valuations via discrete convex analysis (Proposition~2(i)), and supporting the clinching-auction implementation in the single-parameter case (Proposition~2(iii)). When agents' demands decompose into independent per-task single-parameter demands, the ascending clinching auction additionally guarantees weak budget balance, with polynomial-time complexity that is practical when the number of slice types and the valuation range are moderate. Across epochs, load-induced changes to slice latencies and capacities may alter the items being traded; the mechanism is re-invoked at each epoch under updated parameters, and the GS/equilibrium guarantees apply epoch-by-epoch rather than as a dynamic-equilibrium result. In particular, the repeated interaction across epochs can induce strategic dynamics---such as learning, reputation gaming, or inter-epoch demand manipulation---that are outside the scope of the present per-epoch theorems.

\paragraph*{Theoretical positioning.}
The formal results in this paper apply classical polymatroid theory, the gross-substitutes characterisation, and VCG mechanism design to a novel application domain: real-time AI service economies across the computing continuum. The main contribution is problem formulation, not new mechanism-design theory: identifying that AI service-dependency graphs with tree or series--parallel structure satisfy the structural conditions (polymatroidal feasibility, gross substitutes) under which efficient, incentive-compatible mechanisms exist and are polynomial-time computable. Individual results (Propositions~1--2 and Lemma~1) instantiate known theorems in the service-dependency setting; the encapsulation result (Proposition~3) provides the paper's main conceptual contribution by reducing cross-domain allocation to matroid feasibility, bridging classical single-domain mechanism design and the multi-tier architecture that production deployments require. The theoretical value lies in the structural identification (that the problem has the right shape for these classical results) and in the architectural consequence (that encapsulation makes the classical results operationally deployable), rather than in the individual mechanism-design results themselves.


\bibliographystyle{IEEEtran}
\bibliography{refs}

\vskip -2\baselineskip plus -1fil
\begin{IEEEbiographynophoto}{Lauri Lov\'{e}n}
is an Assistant Professor (tenure track) and head of the Future Computing Group at the University of Oulu, Finland. 
\end{IEEEbiographynophoto}
\vskip -3.0\baselineskip plus -1fil
\begin{IEEEbiographynophoto}{Alaa Saleh}is a doctoral candidate at the Future Computing Group, in the Center for Applied Computing, University of Oulu, Finland. 
\end{IEEEbiographynophoto}
\vskip -3.0\baselineskip plus -1fil
\begin{IEEEbiographynophoto}{Reza Farahani} is a Postdoctoral Researcher at the Distributed Systems Group, TU Wien, Vienna, Austria.
\end{IEEEbiographynophoto}
\vskip -3.0\baselineskip plus -1fil
\begin{IEEEbiographynophoto}{Ilir Murturi} is an Assistant Professor at the University of Prishtina and an external Senior Researcher at the Distributed Systems Group, TU Wien. 
\end{IEEEbiographynophoto}
\vskip -3.0\baselineskip plus -1fil
\begin{IEEEbiographynophoto}{Sujit Gujar} is an Associate Professor and the CA Technologies Faculty Chair at the International Institute of Information Technology, Hyderabad (IIITH), India.
\end{IEEEbiographynophoto}
\vskip -3.0\baselineskip plus -1fil
\begin{IEEEbiographynophoto}{Miguel Bordallo López} 
is an Associate Professor with the University of Oulu, where he leads distributed intelligence research at the 6G Flagship
program. 
\end{IEEEbiographynophoto}
\vskip -3.0\baselineskip plus -1fil
\begin{IEEEbiographynophoto}{Praveen Kumar Donta} is associate professor (docent) at the Department of Computer and Systems Sciences, Stockholm University, Sweden. 
\end{IEEEbiographynophoto}
\vskip -3.0\baselineskip plus -1fil
\begin{IEEEbiographynophoto}{Schahram Dustdar} is Full Professor of Computer Science heading the Distributed Systems research division at TU Wien, Austria, and an ICREA Research Professor in Barcelona, Spain. 
\end{IEEEbiographynophoto}

\end{document}